%% file: neurips_2026.tex
\documentclass{article}

\usepackage[preprint]{neurips_2026}

\usepackage[utf8]{inputenc} 
\usepackage[T1]{fontenc}    
\usepackage{hyperref}       
\usepackage{url}            
\usepackage{booktabs}       
\usepackage{amsfonts}       
\usepackage{nicefrac}       
\usepackage{microtype}      
\usepackage[dvipsnames]{xcolor}         

\usepackage{kotex}
\usepackage{algorithm}
\usepackage{algpseudocode}
\usepackage{amsmath,amssymb,amsfonts}
\usepackage{hyperref}

\usepackage{graphicx}
\usepackage{makecell}

\usepackage{titlesec} 

\usepackage{amsthm}
\usepackage{multirow}
\usepackage{caption}

\usepackage{subcaption}

\usepackage[most]{tcolorbox}
\tcbuselibrary{skins, breakable}
\usepackage{xcolor}

\definecolor{TranslationRed}{HTML}{C0504D}
\definecolor{PromptBlue}{HTML}{4BACC6}

\newtcolorbox[
  auto counter,
  list inside=examplelist
]{bluebox}[2][]{
  enhanced,
  breakable,
  title={Text~\thetcbcounter. \textbf{#1}},
  colback=gray!2!white,
  colframe=black!35,
  colbacktitle=black!70,
  coltitle=white,
  fonttitle=\bfseries,
  boxrule=0.6pt,
  arc=1.5pt,
  left=6pt,
  right=6pt,
  top=6pt,
  bottom=6pt,
  #2
}

\newtcolorbox{translatedcontextbox}[1][]{
  enhanced,
  breakable,
  title={\textbf{Context -- Translated Cache Region}},
  colback=TranslationRed!5!white,
  colframe=TranslationRed,
  colbacktitle=TranslationRed,
  coltitle=white,
  fonttitle=\bfseries,
  boxrule=0.8pt,
  arc=1.5pt,
  left=6pt,
  right=6pt,
  top=6pt,
  bottom=6pt,
  #1
}

\newtcolorbox{promptbox}[1][]{
  enhanced,
  breakable,
  title={\textbf{Prompt -- Target-Side Query Region}},
  colback=PromptBlue!5!white,
  colframe=PromptBlue,
  colbacktitle=PromptBlue,
  coltitle=white,
  fonttitle=\bfseries,
  boxrule=0.8pt,
  arc=1.5pt,
  left=6pt,
  right=6pt,
  top=6pt,
  bottom=6pt,
  #1
}

\newtcolorbox{completionbox}[1][]{
  enhanced,
  breakable,
  title={\textbf{Completion -- Target Output Region}},
  colback=PromptBlue!5!white,
  colframe=PromptBlue,
  colbacktitle=PromptBlue,
  coltitle=white,
  fonttitle=\bfseries,
  boxrule=0.8pt,
  arc=1.5pt,
  left=6pt,
  right=6pt,
  top=6pt,
  bottom=6pt,
  #1
}

\newtheorem{theorem}{Theorem}[section]

\newtheorem{proposition}[theorem]{Proposition}

\newtheorem{assumption}[theorem]{Assumption}
\newtheorem{definition}[theorem]{Definition}

\usepackage{mathtools}
\DeclarePairedDelimiter\p{\lparen}{\rparen}
\DeclarePairedDelimiter\ang{\langle}{\rangle} 
\DeclarePairedDelimiter\abs{\lvert}{\rvert}   
\DeclarePairedDelimiter\norm{\lVert}{\rVert}  
\DeclarePairedDelimiter\bkt{[}{]}             
\DeclarePairedDelimiter\set{\{}{\}}           
\DeclarePairedDelimiter\ceil{\lceil}{\rceil}
\DeclarePairedDelimiter\floor{\lfloor}{\rfloor}

\makeatletter
\let\oldp\p \def\p{\@ifstar{\oldp}{\oldp*}}
\let\oldang\ang \def\ang{\@ifstar{\oldang}{\oldang*}}
\let\oldabs\abs \def\abs{\@ifstar{\oldabs}{\oldabs*}}
\let\oldnorm\norm \def\norm{\@ifstar{\oldnorm}{\oldnorm*}}
\let\oldbkt\bkt \def\bkt{\@ifstar{\oldbkt}{\oldbkt*}}
\let\oldset\set \def\set{\@ifstar{\oldset}{\oldset*}}
\let\oldceil\ceil \def\ceil{\@ifstar{\oldceil}{\oldceil*}}
\let\oldfloor\floor \def\floor{\@ifstar{\oldfloor}{\oldfloor*}}
\makeatother

\usepackage{bm}

\newcommand{\ba}{\mathbf{a}}
\newcommand{\bh}{\mathbf{h}}
\newcommand{\bo}{\mathbf{o}}
\newcommand{\bs}{\normalfont{\textbf{s}}}
\newcommand{\bu}{\mathbf{u}}

\newcommand{\bx}{\normalfont{\textbf{x}}}
\newcommand{\bz}{\mathbf{z}}

\newcommand{\bA}{\mathbf{A}}

\newcommand{\bF}{\mathbf{F}}

\newcommand{\bK}{\mathbf{K}}
\newcommand{\bQ}{\mathbf{Q}}
\newcommand{\bV}{\mathbf{V}}
\newcommand{\bW}{\mathbf{W}}
\newcommand{\bX}{\mathbf{X}}

\usepackage{xspace}

\newcommand{\name}[1]{\textsf{#1}\xspace}

\newcommand{\native}{\name{Native}}
\newcommand{\ctoc}{\name{C2C-Project}}
\newcommand{\interlat}{\name{Interlat}}
\newcommand{\kvcomm}{\name{KVComm}}
\newcommand{\lsc}{\name{LSC}}
\newcommand{\mot}{\name{MoT}}
\newcommand{\single}{\name{Single Backbone}}
\newcommand{\moth}{\name{MoT-h}}

\newcommand{\gpt}{\name{GPT-2}}
\newcommand{\opt}{\name{OPT}}
\newcommand{\qwen}{\name{Qwen2.5}}

\title{Mixture-of-Translators: Translating KV Caches Across Heterogeneous Large Language Models}

\author{%
  \textbf{Jin-woo Lee$^{1}$ \quad
  Minkyung Song$^{1}$\thanks{Equal contribution.} \quad
  Junghyun Oh$^{1}$\footnotemark[1] \quad
  Seunghoon Han$^{1}$} \\
  \textbf{Soyoung Park$^{1}$ \quad
  Gwangseon Jang$^{2}$ \quad
  Sungsu Lim$^{1}$\thanks{Corresponding author: \texttt{sungsu@cnu.ac.kr}}} \\
  $^{1}$Chungnam National University \qquad
  $^{2}$KISTI
}

\begin{document}

\maketitle

\begin{abstract}
Heterogeneous Large Language Model (LLM) systems increasingly rely on shared
contexts, retrieved evidence, and multi-agent dialogue histories, yet their
internal key-value (KV) caches remain model-specific and cannot be reused across
architectures. Consequently, each model must repeatedly prefill or store caches
for the same context, limiting the scalability of multi-model reasoning and
long-context generation. We propose \textsf{Mixture-of-Translators}
(\textsf{MoT}), a cache translation framework that maps context KV caches from a
source LLM into the cache space of a target LLM. Unlike prior approaches that
depend on a single projection path or global shared latent space, MoT uses
multiple translator modules to capture diverse source--target mappings. To
further reduce residual translation error, we introduce a \textsf{Context
Correction Loss} that aligns the replayed target trajectory with the native
target trajectory. We reveal two competing failure modes in cache translation:
propagated translation shift from early injection and last-state shift from late
injection. MoT addresses them through translator mixtures and target-side
correction. Across homogeneous and heterogeneous translations among
\texttt{Qwen2.5}, \texttt{GPT-2}, and \texttt{OPT} models, MoT preserves
downstream QA performance, including \texttt{Qwen2.5-7B}-scale translation with
\(51.0\%\) average closed-set QA accuracy and \(0.43\) average extractive QA F1.
In practical case studies, MoT enables quality-preserving memory reuse for
multi-agent reasoning and retains \(96.3\%\) of direct-context quality in
long-context cache-augmented generation, demonstrating scalable KV cache reuse
across heterogeneous LLMs.
\end{abstract}

\input{main}

\bibliographystyle{unsrt}
\end{document}

%% file: main.tex
\section{Introduction}

\begin{figure*}[t!]
    \centering

    \begin{minipage}{0.48\textwidth}
        \centering
        \includegraphics[width=\textwidth,height=4.5cm]{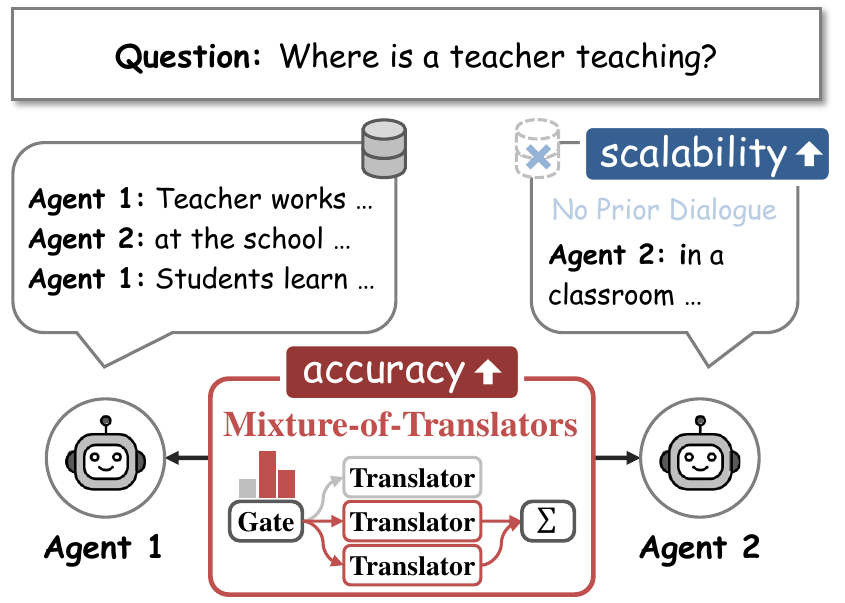}
        \captionof{figure}{Mixture-of-translators for multi-agent reasoning.}
        \label{fig:multi_agents_intro}
    \end{minipage}
    \hfill
    \begin{minipage}{0.48\textwidth}
        \centering
        \includegraphics[width=\textwidth,height=4.5cm]{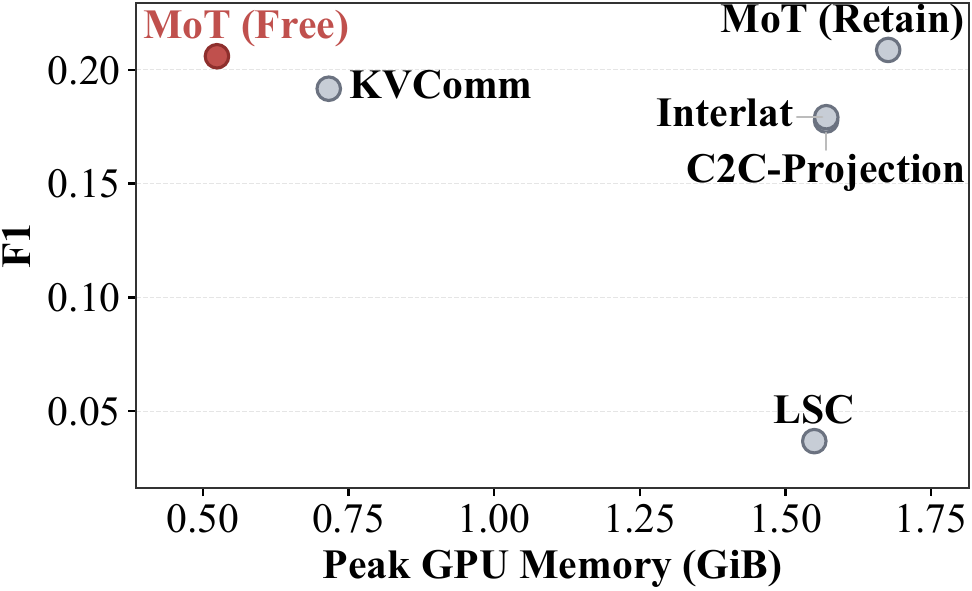}
        \captionof{figure}{10-agent performance landscape (only \mot exhibits scale-invariant memory).}
        \label{fig:performance_landscape}
    \end{minipage}

\end{figure*}

A heterogeneous Large Language Model (LLM) system faces a fundamental
incompatibility: the context state computed by one model is not directly
readable by another. Without translation, each model must reconstruct or store
its own version of the same long context, retrieved evidence, dialogue history,
or agent-generated intermediate state. Thus, the key-value (KV) cache becomes
more than an inference artifact; it is a model-specific memory whose reuse
depends on whether internal states can be transferred across cache spaces.
Prompt caching and cache-augmented generation demonstrate the value of reusing
repeated context~\cite{openai_prompt_caching,anthropic_prompt_caching,chan2024dontdorag,lu2025turborag},
but existing mechanisms remain largely tied to a single model or cache format.

This limitation becomes critical in emerging LLM workflows. In multi-agent
reasoning, agents repeatedly process the same passage, question, previous
responses, and dialogue history, causing memory to grow with the number of
agents~\cite{du2024multiagentdebate,li2023camel,wu2024autogen}. In long-context
cache-augmented generation, precomputed document caches avoid repeated
prefilling, but each model still requires its own model-dependent cache
representation~\cite{chan2024dontdorag,lu2025turborag,hcache}. Scalable context
reuse therefore requires translating internal KV states across heterogeneous
models while preserving generation quality.

Recent work explores language-free communication through internal states,
including cache-to-cache projection, selective KV sharing, hidden-state
communication, and shared latent cache alignment~\cite{c2c,kvcomm,interlat,lsc}.
However, these methods often rely on a single projection path, a fixed
compatibility assumption, or a global shared space. Such designs can be
insufficient when source and target models differ in depth, hidden dimension,
attention layout, or KV-head structure, where heterogeneous cache translation may
require multiple translation modes rather than one universal mapping.

We propose \textsf{MoT (Mixture-of-Translators)}, a cache translation
architecture that maps source-model context KV caches into a target-model cache
space using multiple translator modules. Instead of forcing all cache states
through a single translator, MoT learns token-level translator combinations for
heterogeneous source--target mappings. We further introduce a
\textsf{Context Correction Loss} that aligns the replayed target trajectory with
the native target trajectory, reducing residual error after translation.

Our analysis identifies two competing errors in cache translation. Early
translation suffers from propagated translation shifts, while late translation
leaves insufficient upper layers for correction. MoT addresses the former by
increasing translation capacity through multiple translators, and context
correction addresses the latter by directly supervising target-side replay.
Together, they provide a practical mechanism for robust cache reuse across
heterogeneous LLMs.

Fig.~\ref{fig:multi_agents_intro} illustrates MoT in a multi-agent reasoning
case study, where agents communicate by translating cache states and keep the
long context and dialogue history in a shared cache space. Fig.~\ref{fig:performance_landscape}
summarizes the 10-agent results across F1 and GPU peak memory: MoT preserves F1
while reducing peak memory by more than 1GiB, showing scale-invariant memory
behavior without sacrificing translation quality.

We evaluate MoT on closed-set QA, extractive QA, multi-agent reasoning, and
long-context cache-augmented generation across homogeneous and heterogeneous
translations among \texttt{Qwen2.5}, \texttt{GPT-2}, and \texttt{OPT}.
At the \texttt{Qwen2.5-7B} scale, MoT achieves \(51.0\%\) average accuracy on
closed-set QA and \(0.43\) average F1 on extractive QA. In GPT-family
experiments, it reaches \(42.0\%\) average accuracy and \(0.22\) average F1. In long-context cache-augmented generation, MoT preserves \(96.3\%\) of
direct-context quality, showing that translated caches retain information
needed for downstream generation.

\paragraph{Key Contributions.}
\begin{enumerate}
    \item \textit{Competing Translation Errors.}
    We identify propagation error and correction-deficit error as two competing
    causes of cache-translation failure.

    \item \textit{Mixture-of-Translators Architecture.}
    We introduce MoT, a multi-translator architecture that captures diverse
    source--target KV-cache mappings beyond a single universal translator.

    \item \textit{Context Correction Loss.}
    We propose a replay-trajectory alignment objective that reduces residual
    target-side error after cache translation.

    \item \textit{Empirical Validation.}
    We show that MoT preserves cache similarity and downstream QA performance
    across homogeneous and heterogeneous model pairs.

    \item \textit{Practical Case Studies.}
    We demonstrate MoT in multi-agent reasoning and long-context CAG, showing
    quality-preserving memory and storage reuse for scalable LLM workflows.
\end{enumerate}

\section{Preliminaries}

\subsection{Problem Setup}

\paragraph{Translation.}
We consider context-conditioned generation, where a long context
\(\bx^{\mathrm{Ctx}}\) and a task prompt \(\bx^{\mathrm{Prompt}}\) are used to generate a completion \(y\).
Let \(\mathcal M^{\mathrm{Src}}\) and \(\mathcal M^{\mathrm{Tgt}}\) denote the source and target models. Since their cache spaces generally differ, the source context cache
\(\bK\bV^{\mathrm{Src},\mathrm{Ctx}}\) cannot be directly reused by the target model. We therefore define cache translation as learning
\[
\bK\bV^{\mathrm{Src},\mathrm{Ctx}}
\longmapsto
\widehat{\bK\bV}^{\mathrm{Tgt},\mathrm{Ctx}},
\]
which maps the source context cache into the target cache space. \textbf{Native} denotes the reference setting where the target model directly prefills \(\bx^{\mathrm{Ctx}}\) without translation.

\paragraph{Communication.}
Cache translation can be viewed as directional communication from source layers to target layers. A connected source--target layer pair \(c=(i,j)\) is called a \textbf{channel}, and the channel set used for translation is
\[
\mathcal C
=
\{(i_r,j_r)\}_{r=1}^{|\mathcal C|},
\]
where \(|\mathcal C|\) is the number of channels. Since translation quality also depends on where the cache is injected, channel location motivates the following analysis of two competing error dynamics.

\subsection{Competing Dynamics of Two Translation Errors}
\begin{figure*}[t!]
    \centering

    \begin{minipage}{0.45\textwidth}
        \centering
        \includegraphics[width=\textwidth]{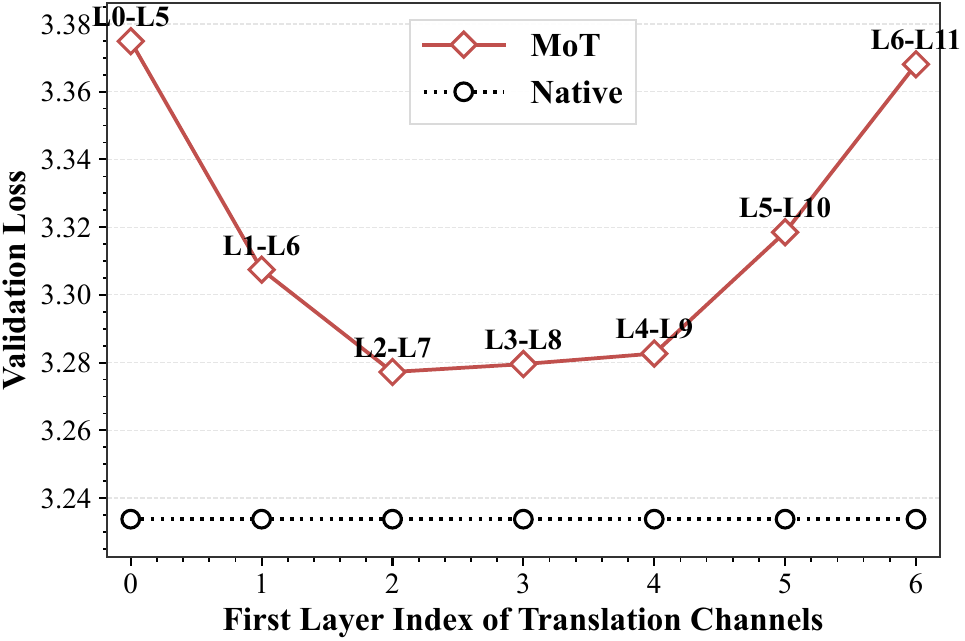}
        \captionof{figure}{U-shape loss caused by two competing errors.}
        \label{fig:val_loss}
    \end{minipage}
    \qquad
    \begin{minipage}{0.45\textwidth}
        \centering
        \includegraphics[width=\textwidth,height=3.7cm]{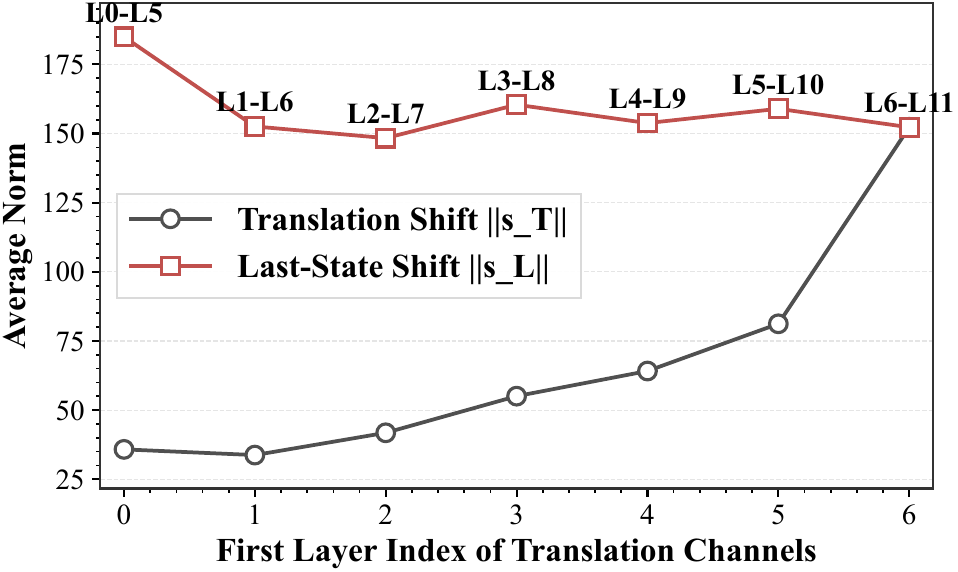}
        \captionof{figure}{Exponential decay of propagated last-state shift.}
        \label{fig:last_state_shift}
    \end{minipage}
\end{figure*}

As shown in Fig.~\ref{fig:val_loss}, in the homogeneous \texttt{gpt2}$\to$\texttt{gpt2} setting, we varied the translation layer position by sliding a channel window of size 6 while increasing the start layer from 0 to 6. Across these positions, we consistently observed a U-shaped loss curve. This pattern suggests that different types of translation error dominate at the two extremes of the layer range: early windows suffer more from propagated translation shifts, whereas later windows leave fewer upper layers available for correction. In this section, we theoretically analyze the competing dynamics of these two errors.

We consider the effect of cache translation through the residual recurrence of a
Transformer decoder. Motivated by prior work that models perturbation-induced
feature shifts across layer positions~\cite{zheng2025spurious}, and by analyses
that use residual connections as a useful lens for modeling signal and
representation propagation in Transformer-like architectures
~\cite{he2023deep,qin2025convergence,deb2025information}, we analyze how a
translation-induced hidden-state shift behaves as a function of the highest
target translation layer.

\begin{definition}
\label{def:residual}
(Residual Transformer Recurrence) \\
For each target layer \(\ell\), let the layer recurrence be written as
\begin{equation}
\bh_\ell
=
\bh_{\ell-1}+F_\ell(\bh_{\ell-1}),
\qquad
1\le \ell \le L,
\end{equation}
where \(\bh_\ell\in\mathbb{R}^{d}\) denotes the hidden states after the \(\ell\)-th
layer, and \(F_\ell\) denotes the residual branch. The residual branch
\(F_\ell\) absorbs the attention, MLP, normalization, and their sequential composition.
\end{definition}

\begin{assumption}
\label{assump:residual_lipschitz}
(Local Lipschitz Residual Branch) \\
There exists a constant \(\delta\ge 0\) such that, for every layer \(\ell\), the
residual branch \(F_\ell\) is locally Lipschitz on the hidden-state region. That is, for any two hidden states \(\bx\) and \(\bx'\) in this
region,
\begin{equation}
\|F_\ell(\bx)-F_\ell(\bx')\|
\le
\delta\|\bx-\bx'\|.
\label{eq:residual_branch_lipschitz}
\end{equation}
We call \(\delta\) the uniform residual-branch Lipschitz constant.
\end{assumption}

\paragraph{Propagation Error.}

As shown in Fig.~\ref{fig:last_state_shift}, we empirically observe that when an error induced by translation at a lower layer propagates to the final layer, the resulting hidden-state shift decays exponentially. In this paragraph, we theoretically model this phenomenon.

\begin{proposition}[Exponential Decay of Propagated Last-State Shift]
\label{prop:source_propagation_error}
Under Assumption~\ref{assump:residual_lipschitz}, suppose that the
translation-induced shift component formed at the first target translation layer
propagates to the final hidden states through the same target residual recurrence.
Then,
\begin{equation}
\|\bs_L\|
\le
(1+\delta)^{L-T_{\mathrm{Start}}}
\|\bs_{T_{\mathrm{Start}}}\|,
\qquad
T_{\mathrm{Start}}:=\min \mathcal \{j \mid (i,j)\in\mathcal C\}.
\label{eq:combined_shift_bound}
\end{equation}
Thus, for fixed \(L\) and \(\|\bs_{T_{\mathrm{Start}}}\|\), the last-state shift bound decreases exponentially as \(T_{\mathrm{Start}}\) increases.
\end{proposition}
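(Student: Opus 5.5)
The plan is to define the shift as the difference between two trajectories that obey the same target residual recurrence of Definition~\ref{def:residual} above the injection point, and then bound its growth layer by layer. Let \(\bh_\ell\) be the native target hidden state at layer \(\ell\). Let \(\tilde\bh_\ell\) be the replayed hidden state produced when the translated cache is injected starting at layer \(T_{\mathrm{Start}}\). Set \(\bs_\ell:=\tilde\bh_\ell-\bh_\ell\) for \(\ell\ge T_{\mathrm{Start}}\). The hypothesis says the shift component formed at \(T_{\mathrm{Start}}\) propagates through the same residual recurrence. I will read this as: for \(\ell>T_{\mathrm{Start}}\), \(\tilde\bh_\ell=\tilde\bh_{\ell-1}+F_\ell(\tilde\bh_{\ell-1})\). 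Under this reading, any additional injection at later channels is either absorbed into the component being tracked or ignored, which is what the phrase ``shift component'' licenses.

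\textbf{One-step bound.} For \(\ell>T_{\mathrm{Start}}\), subtract the two recurrences:
\[
\bs_\ell=\bs_{\ell-1}+F_\ell(\tilde\bh_{\ell-1})-F_\ell(\bh_{\ell-1}).
\]
Applying the triangle inequality and Assumption~\ref{assump:residual_lipschitz} gives \(\|\bs_\ell\|\le(1+\delta)\|\bs_{\ell-1}\|\). This step requires both \(\tilde\bh_{\ell-1}\) and \(\bh_{\ell-1}\) to lie in the hidden-state region on which the local Lipschitz bound holds. I would take this as part of the assumption, namely that the region contains both the native and the replayed trajectories.

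\textbf{Induction and monotonicity.} Induct over \(\ell=T_{\mathrm{Start}}+1,\dots,L\) to obtain \(\|\bs_L\|\le(1+\delta)^{L-T_{\mathrm{Start}}}\|\bs_{T_{\mathrm{Start}}}\|\), which is \eqref{eq:combined_shift_bound}. For the final claim, fix \(L\) and \(\|\bs_{T_{\mathrm{Start}}}\|\). The bound equals \((1+\delta)^{L}(1+\delta)^{-T_{\mathrm{Start}}}\|\bs_{T_{\mathrm{Start}}}\|\). Since \(1+\delta\ge1\), this is nonincreasing in \(T_{\mathrm{Start}}\). It decreases geometrically, with ratio \((1+\delta)^{-1}\), when \(\delta>0\).

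\textbf{Main obstacle.} The algebra is routine; the difficulty is making the modeling step precise. In a real decoder, the injected cache changes the attention inputs at every channel layer and not only the state at \(T_{\mathrm{Start}}\). So ``the same residual recurrence'' must be justified by an isolation argument. The replayed branch \(F_\ell\) must be treated as the same function of the hidden state, and the Lipschitz region must contain both trajectories. I would state this explicitly as an interpretation of the hypothesis rather than attempt to prove it.
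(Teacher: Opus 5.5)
Your proposal is correct and matches the paper's proof: subtract the two residual recurrences above $T_{\mathrm{Start}}$, use the triangle inequality and the Lipschitz bound to get $\|\bs_\ell\|\le(1+\delta)\|\bs_{\ell-1}\|$, and iterate up to $L$; your sign convention for $\bs_\ell$ is flipped relative to the paper, which does not matter. Your caveats are accurate refinements that the paper leaves implicit: both trajectories must lie in the Lipschitz region, and the bound decreases strictly (geometrically) in $T_{\mathrm{Start}}$ only when $\delta>0$.
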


\vspace{-0.5cm}
\begin{proof}
The proof is deferred to Appendix~\ref{app:proof:source_propagation_error}.
\end{proof}

\paragraph{Correction Deficit Error.}

As shown in Fig.~\ref{fig:correction_decomposition}, moving the translation window upward leaves fewer upper layers available to correct the translation-induced shift. We summarize this behavior using three quantities: \(\alpha_{T:L}\) for anti-shift correction, \(\beta_{T:L}\) for orthogonal shift, and \(d_{T:L}\) for the remaining shift along the original direction.
Empirically, \(\alpha_{T:L}\) decreases as the translation layer moves upward, whereas \(\beta_{T:L}\) remains nearly unchanged. This indicates that late translation leaves fewer layers available for anti-shift correction, making correction harder. We verify this trend through the increase of the correction-deficit coefficient \(d_{T:L}\). Formal definitions are deferred to Definition~\ref{def:shift_delta_and_correction_deficit}.

\begin{proposition}[Increase of Correction-Deficit Coefficient]
\label{prop:correction_deficit_coefficient_bound}
Under Assumption~\ref{assump:residual_lipschitz}, suppose that the translation
shift at the highest target translation layer is nonzero, i.e.,
\(\bs_T\neq 0\), where \(T:=\max \{j \mid (i,j)\in\mathcal C\}\).
Then, as \(T\) approaches the final layer, the remaining correction opportunity
decreases, and the correction-deficit coefficient reaches its terminal value $d_{L:L}=1$.
\end{proposition}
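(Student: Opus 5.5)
The approach is to reconstruct the decomposition behind \(\alpha_{T:L}\), \(\beta_{T:L}\), and \(d_{T:L}\), which the formal Definition~\ref{def:shift_delta_and_correction_deficit} presumably fixes, and then read off both claims from the residual recurrence. Let \(\bs_\ell\) be the translation-induced shift at layer \(\ell\), the difference between the replayed and native hidden states. Along the layers above \(T\), the recurrence of Definition~\ref{def:residual} gives
\[
\bs_L = \bs_T + \sum_{\ell=T+1}^{L}\Delta_\ell, \qquad \Delta_\ell := F_\ell(\hat\bh_{\ell-1})-F_\ell(\bh_{\ell-1}).
\]
I will decompose the accumulated correction \(\sum_{\ell>T}\Delta_\ell\) along the unit direction \(\bu:=\bs_T/\|\bs_T\|\), which is well defined because \(\bs_T\neq 0\), and along its orthogonal complement. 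This yields
\[
\bs_L = (1-\alpha_{T:L})\,\bs_T + \beta_{T:L}\,\bz,
\]
where \(\alpha_{T:L}:=-\langle \sum_{\ell>T}\Delta_\ell,\bu\rangle/\|\bs_T\|\) is the anti-shift correction and \(\bz\perp\bu\) carries the orthogonal part. The correction-deficit coefficient is then \(d_{T:L}:=1-\alpha_{T:L}=\langle \bs_L,\bu\rangle/\|\bs_T\|\), the fraction of the original shift still present along its own direction. I will state these definitions explicitly, consistent with the later Definition, since the proposition depends on them.

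The first step bounds the correction opportunity using Assumption~\ref{assump:residual_lipschitz}. Each term satisfies \(\|\Delta_\ell\|\le\delta\|\bs_{\ell-1}\|\). Combining this with the same growth estimate as in Proposition~\ref{prop:source_propagation_error}, namely \(\|\bs_{\ell-1}\|\le(1+\delta)^{\ell-1-T}\|\bs_T\|\), I get
\[
|\alpha_{T:L}| \le \sum_{\ell=T+1}^{L}\delta(1+\delta)^{\ell-1-T} = (1+\delta)^{L-T}-1.
\]
This upper bound is monotonically decreasing in \(T\) and vanishes at \(T=L\). It makes precise the sense in which the remaining correction opportunity decreases as \(T\) approaches the final layer. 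It also gives \(d_{T:L}\ge 2-(1+\delta)^{L-T}\), a lower bound on the deficit that increases with \(T\).

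The terminal value is immediate. When \(T=L\) the sum over \(\ell\) is empty, so \(\bs_L=\bs_T\), \(\alpha_{L:L}=0\), \(\beta_{L:L}=0\), and \(d_{L:L}=\langle\bs_L,\bs_L\rangle/\|\bs_L\|^2=1\). Here the hypothesis \(\bs_T\neq0\) is needed only so that the normalization is defined.

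The main obstacle is that monotonicity of \(d_{T:L}\) itself, as opposed to monotonicity of its bound, does not follow from a Lipschitz upper bound alone. The true \(\alpha_{T:L}\) could fluctuate in sign across different \(T\). I would therefore phrase the "increase" claim as monotonicity of the certified lower bound \(2-(1+\delta)^{L-T}\) together with the exact terminal value \(1\), and argue that the deficit is squeezed toward \(1\) as \(L-T\to0\). If a genuinely monotone statement were required, I would add a mild assumption that each upper layer contributes nonnegative anti-shift correction, \(\langle\Delta_\ell,\bu\rangle\le0\). Under that assumption, removing layers can only shrink \(\alpha_{T:L}\), so \(d_{T:L}\) is nondecreasing in \(T\).
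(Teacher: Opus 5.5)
Your main argument is the paper's proof. You telescope $\bs_L-\bs_T$ and bound each increment by $\delta\|\bs_{\ell-1}\|$ using the $(1+\delta)$ growth estimate. You then sum the geometric series, pass to the anti-shift coefficient by Cauchy--Schwarz to get $d_{T:L}\ge 2-(1+\delta)^{L-T}$, and read off $d_{L:L}=1$ from the empty sum at $T=L$.

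The differences are minor. The paper's $\alpha_{T:L}=-\langle\Delta\bs_{T:L},\bu^{\parallel}\rangle$ is not divided by $\|\bs_T\|$. So in the paper $d_{T:L}=1-\alpha_{T:L}/\|\bs_T\|$, not $1-\alpha_{T:L}$, and the paper states the decay of its $\alpha$-bound through $\|\bs_{T_{\mathrm{Start}}}\|$. Your normalized $\alpha$ instead makes $(1+\delta)^{L-T}-1$ monotone in $T$ directly. Your two-sided bound on $|\alpha_{T:L}|$ also gives $d_{T:L}\le(1+\delta)^{L-T}$, a genuine squeeze toward $1$ that the paper's one-sided estimate does not state.

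Your optional closing remark on genuine monotonicity is false as stated. Both the reference direction $\bu=\bs_T/\|\bs_T\|$ and the normalization $\|\bs_T\|$ change with $T$, so removing a layer does not simply shrink a fixed sum. For example, take $\bs_T=(1,0)$, $\bs_{T+1}=(0.5,1)$, $\bs_{T+2}=(0.4,0)$ with $L=T+2$. This is realizable with linear residual branches and $\delta=1.2$. It satisfies $\langle\Delta_\ell,\bs_{T'}\rangle\le 0$ for every $T\le T'<\ell\le L$, yet $d_{T:L}=0.4>d_{T+1:L}=0.16$. Keep the ``increase'' claim at the level of the certified lower bound plus the exact terminal value, which is exactly what the paper proves.
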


\vspace{-0.5cm}
\begin{proof}
The proof is deferred to Appendix~\ref{app:proof:correction_deficit_coefficient_bound}.
\end{proof}

\paragraph{Competing-Dynamics Translation Objective.}

\begin{figure*}[t]
    \centering

    \begin{minipage}{0.325\textwidth}
        \centering
        \includegraphics[width=\textwidth,height=3.2cm]{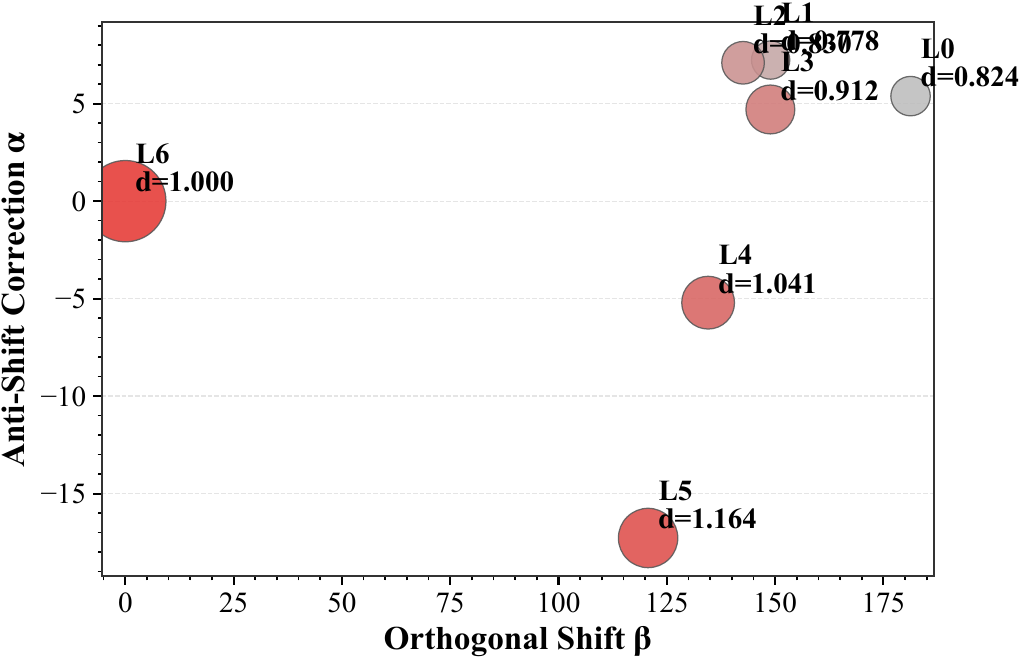}
        \captionof{figure}{Decomposition of\newline correction coefficients.}
        \label{fig:correction_decomposition}
    \end{minipage}
    \hfill
    \begin{minipage}{0.325\textwidth}
        \centering
        \includegraphics[width=\textwidth,height=3.2cm]{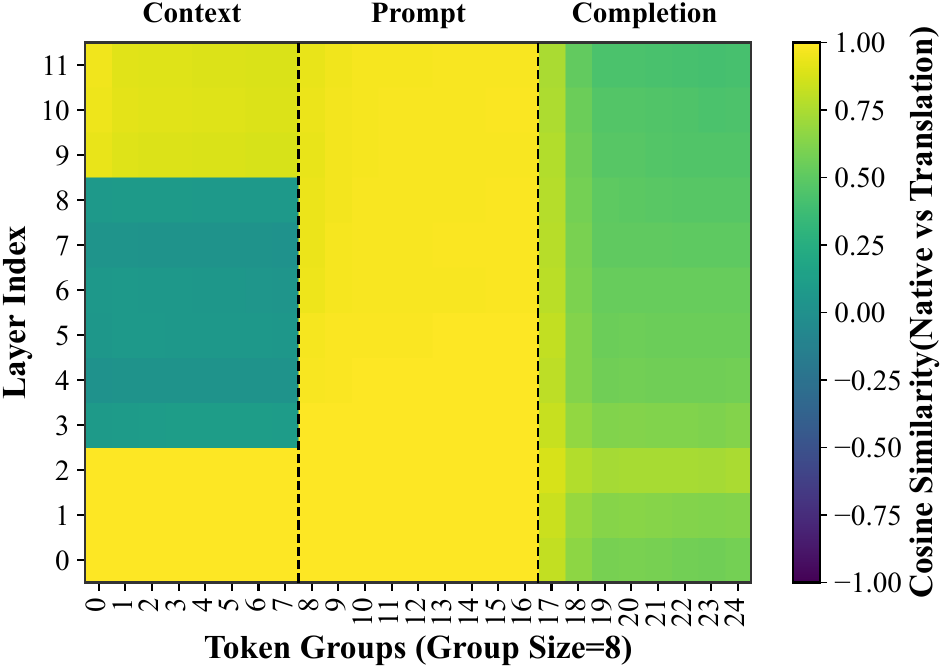}
        \captionof{figure}{Error propagation and correction.}
        \label{fig:error_prop_3}
    \end{minipage}
    \hfill
    \begin{minipage}{0.325\textwidth}
        \centering
        \includegraphics[width=\textwidth,height=3.2cm]{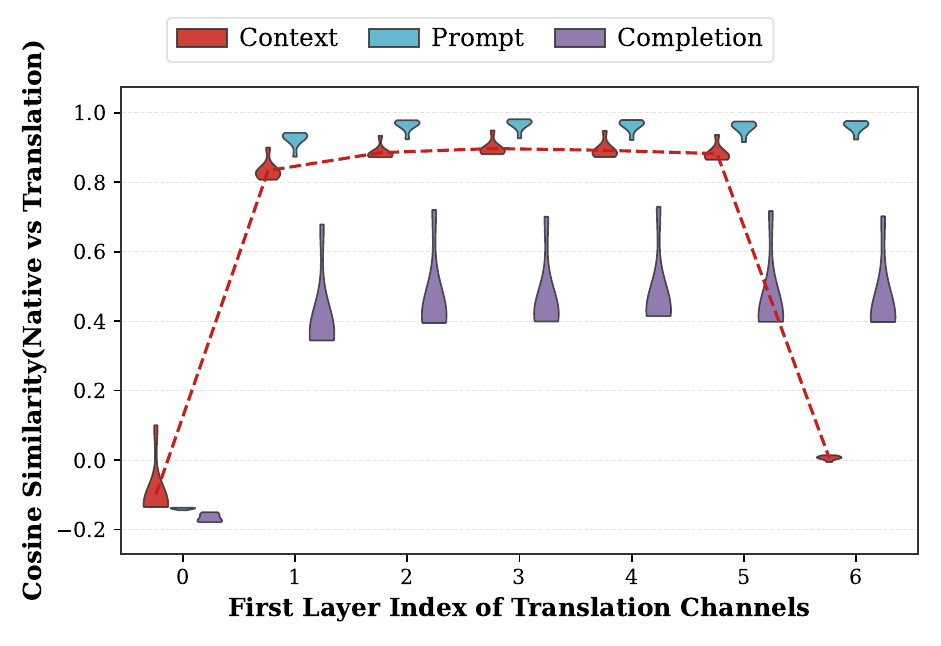}
        \captionof{figure}{Inverted U-shape\newline
        similarity curve.}
        \label{fig:kv_sim_all_translations}
    \end{minipage}

\end{figure*}

Fig.~\ref{fig:error_prop_3} compares similarity to the native KV cache when translating the context over layers $[3,8]$. Across the context, prompt, and completion token groups, we observe both correction and propagation: similarity increases toward later layers for token group 0, while error also grows in the upper-right region, indicating accumulated propagation.
Fig.~\ref{fig:kv_sim_all_translations} further shows similarity trends while sliding a 6-layer translation window from layer 0 to layer 6. Unlike Prompt and completion, the Context region exhibits an inverted U-shaped similarity curve, consistent with the U-shaped loss in Fig.~\ref{fig:val_loss}. This suggests that insufficient correction is especially pronounced for translated context caches.

Together, these observations show that translation-layer selection induces two competing errors. The first is the translation shift \(\bs_T\) accumulated inside the channel window, whose bound can increase as the highest translated layer \(T\) moves upward. The second is the final last-state shift \(\bs_L\), which remains when upper layers after translation cannot sufficiently correct the shift. Thus, translating too early can amplify propagation error, whereas translating too late can increase correction-deficit error.

We now decompose the components that constitute \(\bs_L\).

\begin{proposition}[Last-State Shift Decomposition]
\label{prop:last_state_shift_decomposition}
Assume that \(\bs_T\neq 0\). For the correction deficit coefficient \(d_{T:L}\) and the orthogonal shift coefficient \(\beta_{T:L}\) defined in Definition~\ref{def:shift_delta_and_correction_deficit},
\begin{equation}
\|\bs_L\|^2
=
d_{T:L}^2\|\bs_T\|^2+\beta_{T:L}^2
\label{eq:last_state_shift_decomposition}
\end{equation}
holds. Therefore, the last-state shift can be decomposed into the correction deficit component that remains along the translation-shift direction and the orthogonal shift component.
\end{proposition}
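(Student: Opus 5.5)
The decomposition is essentially orthogonal (Pythagorean) decomposition. The definitions are deferred to Definition~\ref{def:shift_delta_and_correction_deficit}, so we first fix how the quantities are presumably defined. Since \(\bs_T\neq 0\), the unit direction \(\bu:=\bs_T/\|\bs_T\|\) is well defined. We split the last-state shift as
\[
\bs_L=\ang{\bs_L,\bu}\,\bu+\bigl(\bs_L-\ang{\bs_L,\bu}\,\bu\bigr).
\]
The natural reading of the definition is then:
\begin{itemize}
\item \(\alpha_{T:L}\) measures the anti-shift correction along \(-\bu\) accumulated by layers \(T+1,\dots,L\) relative to \(\|\bs_T\|\).
\item \(d_{T:L}:=\ang{\bs_L,\bu}/\|\bs_T\|\), equivalently \(1-\alpha_{T:L}\), is the fraction of the original shift remaining along its direction.
\item \(\beta_{T:L}:=\|\bs_L-\ang{\bs_L,\bu}\bu\|\) is the norm of the orthogonal component.
\end{itemize}
This reading is consistent with Proposition~\ref{prop:correction_deficit_coefficient_bound}: when \(T=L\), no correcting layers remain, so \(\bs_L=\bs_T\), giving \(d_{L:L}=1\) and \(\beta_{L:L}=0\).

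The steps are as follows. First, use the recurrence of Definition~\ref{def:residual} to write \(\bs_L=\bs_T+\sum_{\ell=T+1}^{L}\bigl(F_\ell(\tilde\bh_{\ell-1})-F_\ell(\bh_{\ell-1})\bigr)\), where \(\tilde\bh\) and \(\bh\) are the translated and native trajectories. This telescoped sum is the total correction term. Second, project this expression onto \(\bu\). The parallel coefficient becomes \(\|\bs_T\|\) minus the anti-shift correction, which identifies it with \(d_{T:L}\|\bs_T\|\). The residual of the projection is exactly the orthogonal vector whose norm is \(\beta_{T:L}\). Third, the two components are orthogonal by construction, so the Pythagorean theorem gives
\[
\|\bs_L\|^2=\ang{\bs_L,\bu}^2+\beta_{T:L}^2=d_{T:L}^2\|\bs_T\|^2+\beta_{T:L}^2,
\]
which is \eqref{eq:last_state_shift_decomposition}.

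Assumption~\ref{assump:residual_lipschitz} is not needed for the identity itself. It only guarantees that the correction terms are finite and that \(\beta_{T:L}\) is bounded by \(\sum_\ell\delta\|\bs_{\ell-1}\|\). If desired, this can be mentioned as a side remark.

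The main obstacle is definitional rather than technical. If the paper defines \(\beta_{T:L}\) as a signed scalar or as a squared quantity, or defines \(d_{T:L}\) via \(\alpha_{T:L}\) normalized differently, the bookkeeping must be adjusted so that \(d_{T:L}\|\bs_T\|\) is exactly the \(\bu\)-coefficient of \(\bs_L\) and \(\beta_{T:L}^2\) is exactly the squared orthogonal norm. If \(\beta_{T:L}\) is a signed scalar, I would first check that it is the coefficient along a unit orthogonal direction \(\bu^\perp\) in the span of \(\bs_L\) and \(\bu\), and then the same Pythagorean step applies. The only further care needed is in the degenerate case where \(\bs_L\) is parallel to \(\bu\). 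There the orthogonal component vanishes, \(\beta_{T:L}=0\), and the identity holds trivially.
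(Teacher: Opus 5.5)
Your proposal is correct and follows the paper's proof. Both split \(\bs_L\) along \(\bu^{\parallel}=\bs_T/\|\bs_T\|\) and its orthogonal complement, identify the parallel coefficient with \(d_{T:L}\|\bs_T\|\), and conclude by the Pythagorean theorem.

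The differences are only bookkeeping:
\begin{itemize}
\item The paper defines \(\beta_{T:L}\) as the norm of the orthogonal part of \(\Delta\bs_{T:L}=\bs_L-\bs_T\), not of \(\bs_L\). The two agree because \(\bs_T\) lies along \(\bu^{\parallel}\) and so contributes nothing orthogonal.
\item The paper's \(\alpha_{T:L}\) is unnormalized, so \(d_{T:L}=1-\alpha_{T:L}/\|\bs_T\|\) rather than \(1-\alpha_{T:L}\). This does not affect the identity.
\item The telescoping step through the residual recurrence is unnecessary, since the identity follows directly from the definitions.
\end{itemize}
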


\vspace{-0.5cm}
\begin{proof}
The proof is deferred to Appendix~\ref{app:proof:last_state_shift_decomposition}.
\end{proof}

Proposition~\ref{prop:last_state_shift_decomposition} shows that the last-state shift is not determined solely by the translation shift. Even when the translation shift \(\|\bs_T\|\) is small, the last-state shift \(\|\bs_L\|\) can remain large due to correction deficit \(d_{T:L}\) or orthogonal shift \(\beta_{T:L}\). Conversely, reducing only the last-state shift does not necessarily prevent a large translation shift from forming.

Therefore, translation should jointly reduce the translation shift \(\|\bs_T\|\) and the last-state shift \(\|\bs_L\|\):
\[
\min\left(\|\bs_T\|,\|\bs_L\|\right).
\]
The former captures error accumulated within the selected channel set, while the latter captures residual error after upper-layer correction. Optimizing only one is insufficient; robust cache translation requires controlling both competing error dynamics.

\section{Proposed Method: \textsf{Mixture-of-Translators}}
\label{sec:proposed_method}

\begin{figure}[t!]
    \centering
    \includegraphics[width=\textwidth]{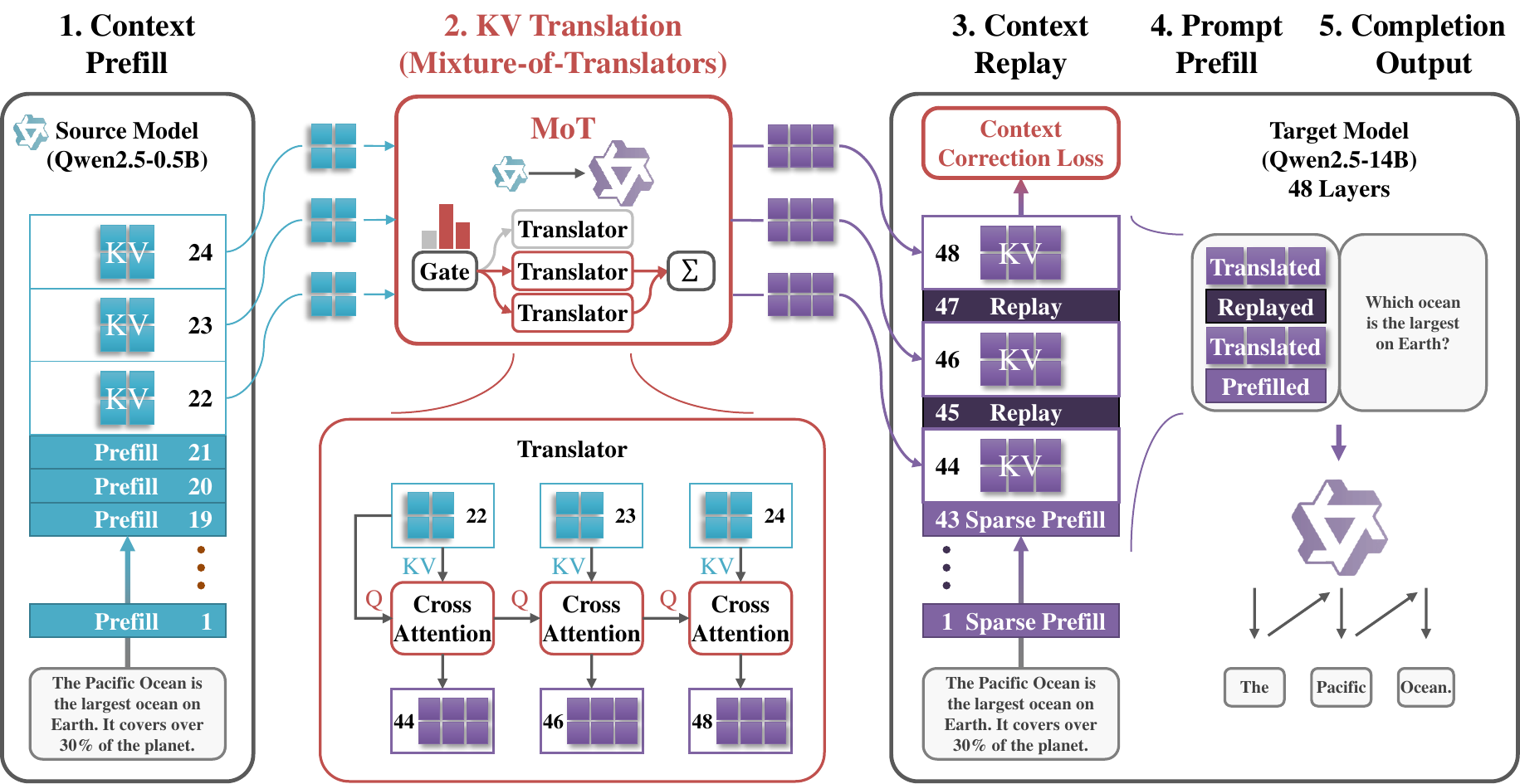}
    \caption{Overview of \textsf{Mixture-of-Translators} and \textsf{Context Correction Loss}.}
\label{fig:framework}
\end{figure}

The theoretical analysis in the preliminaries shows that cache translation must jointly control two error dynamics: the translation shift \(\|\bs_T\|\) formed inside the channel window and the last-state shift \(\|\bs_L\|\) remaining after insufficient upper-layer correction. Based on this analysis, we propose two learning components. First, Section~\ref{subsec:mixture-of-translators} introduces \textbf{\textsf{MoT (Mixture-of-Translators})}, which reduces token-level translation mismatch inside the channel window through gated translator routing. Second, Section~\ref{subsec:context-correction-loss} introduces \textbf{\textsf{Context Correction Loss}}, which aligns the replayed target trajectory with the native target trajectory after translation and directly targets the remaining last-state shift.

These two components are used within the overall cache-translation pipeline shown in Fig.~\ref{fig:framework}. The pipeline consists of five stages: \textcircled{\scriptsize 1} \textbf{Context Prefill}, which computes the source context cache and selects translation channels; \textcircled{\scriptsize 2} \textbf{KV Translation}, where \mot maps the source cache into the target cache space; \textcircled{\scriptsize 3} \textbf{Context Replay}, which reconstructs the target-side context cache with context-correction supervision; \textcircled{\scriptsize 4} \textbf{Prompt Prefill}, which conditions the target model on the reconstructed cache; and \textcircled{\scriptsize 5} \textbf{Completion Output}, which generates the final response. Channel mapping and cache replay are detailed in Appendix~\ref{app:channel-mapping} and Appendix~\ref{app:cache-replay}.

\subsection{Mixture-of-Translators}
\label{subsec:mixture-of-translators}

Each translator used in \mot is instantiated as the backbone translator described in Appendix~\ref{app:backbone-translator}. A single backbone translator uses one translation rule for the entire cache, which can limit its ability to capture diverse source--target cache correspondences. To reduce the translation shift \(\|\bs_T\|\), \mot combines multiple translators and lets the model select their mixture weights. The gating is performed at the token level, so different tokens can use different translator combinations.

\begin{definition}[\textsf{Mixture-of-Translators}]
\label{def:mot}
For a fixed channel set \(\mathcal C\), let \(\bz_u\) denote the source context-cache input at token \(u\).
We introduce \(N_{\mathrm{Tr}}\) translators $\{\operatorname{Tr}_{m}\}_{m=1}^{N_{\mathrm{Tr}}}$
and a gating function that produces nonnegative token-level scores
$g(\bz_u)=\left(g_1(\bz_u),\dots,g_{N_{\mathrm{Tr}}}(\bz_u)\right)$.
Here, Top-\(K\) denotes the number of translators used for each token. Let the selected translator indices at token \(u\) be $\mathcal T_{\mathrm{Top}\text{-}K}(u):=\operatorname{Top-K}\left(g(\bz_u)\right)$.
The selected scores are renormalized as
\[
\widetilde g_{m}(\bz_u)
:=
\frac{
g_{m}(\bz_u)
}{
\sum_{r\in \mathcal T_{\mathrm{Top}\text{-}K}(u)}
g_{r}(\bz_u)
},
\qquad
m\in \mathcal T_{\mathrm{Top}\text{-}K}(u).
\]
Then, the output of \textsf{Mixture-of-Translators} is defined as
\begin{equation}
\operatorname{Tr}_{\mathrm{MoT}}(\bz_u)
:=
\sum_{m\in\mathcal T_{\mathrm{Top}\text{-}K}(u)}
\widetilde g_{m}(\bz_u)\,
\operatorname{Tr}_{m}(\bz_u).
\label{eq:mot_definition}
\end{equation}
\end{definition}

\begin{proposition}[Translation-Shift Reduction by Translator Mixtures]
\label{prop:mot_reduces_translation_shift}
For a fixed channel set \(\mathcal C\), the optimal \textsf{Mixture-of-Translators} cannot induce a larger translation shift \(\|\bs_T\|\) than the optimal single translator. Moreover, the inequality is strict whenever a translator mixture realizes a translation not representable by any single translator and yields a smaller shift at the highest translation layer \(T\).
\end{proposition}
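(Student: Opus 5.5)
The plan is a function-class inclusion argument. Fix the channel set \(\mathcal C\) and write \(\bs_T(\mathrm{Tr})\) for the translation shift at the highest translation layer \(T\) produced by a translation rule \(\mathrm{Tr}\), measured against the native target state on the same context. Let \(\mathcal F_{\mathrm{Single}}\) be the translations realizable by one backbone translator, and \(\mathcal F_{\mathrm{MoT}}\) those realizable by Definition~\ref{def:mot} for some translators \(\{\operatorname{Tr}_m\}\) and gate \(g\). We compare
\[
S^\star_{\mathrm{Single}}:=\inf_{\mathrm{Tr}\in\mathcal F_{\mathrm{Single}}}\|\bs_T(\mathrm{Tr})\|
\quad\text{and}\quad
S^\star_{\mathrm{MoT}}:=\inf_{\mathrm{Tr}\in\mathcal F_{\mathrm{MoT}}}\|\bs_T(\mathrm{Tr})\|,
\]
interpreting ``optimal'' as attaining (or approaching) these infima.

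\textbf{Step 1 (inclusion).} We show \(\mathcal F_{\mathrm{Single}}\subseteq\mathcal F_{\mathrm{MoT}}\). Given a single translator \(\operatorname{Tr}\), set \(\operatorname{Tr}_m:=\operatorname{Tr}\) for every \(m\). For each token \(u\), the renormalized weights \(\widetilde g_m(\bz_u)\) over \(\mathcal T_{\mathrm{Top}\text{-}K}(u)\) are nonnegative and sum to one. Hence
\[
\operatorname{Tr}_{\mathrm{MoT}}(\bz_u)=\sum_{m}\widetilde g_m(\bz_u)\operatorname{Tr}(\bz_u)=\operatorname{Tr}(\bz_u)
\]
for any admissible gate, and in particular for Top-\(K\) with \(K\ge 1\). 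An alternative is a gate that puts all its mass on one expert. This construction needs the denominator \(\sum_{r}g_r\) to be positive, so we restrict to gates with positive scores, or note that a degenerate gate can be perturbed.

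\textbf{Step 2 (monotonicity).} The translated caches enter the target recurrence of Definition~\ref{def:residual} in the same way regardless of how they were produced. Therefore \(\bs_T\) depends on the translation only through its outputs on the context tokens. The inclusion from Step 1 then gives \(S^\star_{\mathrm{MoT}}\le S^\star_{\mathrm{Single}}\), since an infimum over a larger set is no larger. No Lipschitz bound is needed for this part.

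\textbf{Step 3 (strictness).} Suppose some mixture \(\operatorname{Tr}^\dagger\in\mathcal F_{\mathrm{MoT}}\setminus\mathcal F_{\mathrm{Single}}\) satisfies \(\|\bs_T(\operatorname{Tr}^\dagger)\|<\|\bs_T(\operatorname{Tr})\|\) for every \(\operatorname{Tr}\in\mathcal F_{\mathrm{Single}}\). Then \(S^\star_{\mathrm{MoT}}\le\|\bs_T(\operatorname{Tr}^\dagger)\|\). The goal is to conclude that this value lies strictly below \(S^\star_{\mathrm{Single}}\).

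This is the main obstacle. A pointwise strict inequality against every single translator does not imply a strict gap below the infimum when the infimum is not attained. I would handle it in one of two ways:
\begin{itemize}
\item assume the single-translator optimum is attained, for example by compactness of a bounded parameter set together with continuity of \(\bs_T\) in the translator parameters; continuity follows from Assumption~\ref{assump:residual_lipschitz} through the recurrence; or
\item read the hypothesis of the proposition as a comparison with the optimal single translator directly, so that \(\|\bs_T(\operatorname{Tr}^\dagger)\|<S^\star_{\mathrm{Single}}\).
\end{itemize}
Either way the strict inequality follows immediately.
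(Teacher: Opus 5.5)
Your proposal is correct and is essentially the paper's argument: the paper also embeds the single-translator class into the \textsf{MoT} class and then minimizes over the larger class. The paper does the embedding with a one-hot gate ($g_{m_0}=1$, $g_m=0$ otherwise, so the selected scores sum to $1$ and the zero-denominator concern you raise does not arise), rather than your identical-experts construction. For strictness it simply assumes a single-translator minimizer $\operatorname{Tr}^\star_{\mathrm{Single}}$ exists and compares the better mixture directly against it, which is exactly your second reading of the hypothesis.
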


\vspace{-0.5cm}
\begin{proof}
The proof is deferred to Appendix~\ref{app:proof:mot_reduces_translation_shift}.
\end{proof}

\subsection{Context Correction Loss.}
\label{subsec:context-correction-loss}

\mot reflects token-specific translation difficulty and thereby reduces the translation shift \(\|\bs_T\|\) formed inside the channel window. However, even if \(\|\bs_T\|\) becomes small, the last-state shift \(\|\bs_L\|\) may still remain when the upper layers after translation fail to provide sufficient correction. Therefore, it is necessary to directly constrain the target model after translation to follow the native hidden-state trajectory. To this end, we introduce the \textsf{Context Correction Loss}.

For a fixed channel set \(\mathcal C\), let \(\widehat{\bh}_\ell^{\mathrm{Tgt}}\) denote the hidden states obtained by replaying the target model with the translated context cache, and let \(\bh_\ell^{\mathrm{Tgt}}\) denote the hidden states obtained when the target model natively prefills the same context. The ideal objective of the \textsf{Context Correction Loss} is to directly align the hidden-state trajectory after translation with the native trajectory:
\begin{equation}
\mathcal L_{\mathrm{CC}}^{\mathrm{Src}\to\mathrm{Tgt}}(\mathcal C)
=
{\textstyle\sum_{\ell=T_{\mathrm{Start}}+1}^{L}}
\left\|
\widehat{\bh}_\ell^{\mathrm{Tgt}}
-
\bh_\ell^{\mathrm{Tgt}}
\right\|^2.
\label{eq:context_correction_hidden}
\end{equation}
Since the loss includes the final-layer term $\left\|\widehat{\bh}_L^{\mathrm{Tgt}}-\bh_L^{\mathrm{Tgt}}\right\|^2=\|\bs_L\|^2$, it directly reduces the last-state shift. Implementation details, including the KV-based heuristic, are provided in Appendix~\ref{app:context_correction_loss}.

\section{Experiments}
\label{sec:experiments}

We evaluate \mot on downstream QA benchmarks under both homogeneous and heterogeneous cache-translation settings.
Section~\ref{sec:experimental_setup} summarizes the experimental setup, and Section~\ref{sec:results} reports the main results.
Section~\ref{sec:root_cause_analysis} analyzes the effects of correction behavior, loss design, and the \mot architecture.
Additional ablations and scalability analyses are provided in Appendix~\ref{sec:ablation} and Appendix~\ref{app:scalability}.

\subsection{Experimental Setup}
\label{sec:experimental_setup}

\textbf{Benchmarks.} \quad
We evaluate two QA benchmark groups: \textbf{closed-set QA}, including BoolQ, PubMedQA, and MMLU-Redux, reported by average accuracy; and \textbf{extractive QA}, including SQuAD-v1.1 and NewsQA, reported by average F1. For case studies, we additionally use Doc2Dial for multi-agent reasoning and HotpotQA-E for long-context CAG in Section~\ref{sec:case_studies}.

\textbf{Methods.} \quad
We compare eight methods. \textbf{\native} directly runs the target model without translation; \textbf{\ctoc}(Cache-to-Cache Projection)\cite{c2c} uses projection-only cache mapping from C2C (implemented by us); \textbf{\interlat}\cite{interlat} communicates last-hidden-state latent messages (open source\footnote{https://github.com/XiaoDu-flying/Interlat}); \textbf{\kvcomm}\cite{kvcomm} selectively shares attention-ranked KV layers (open source\footnote{https://github.com/Zephyroam/KVComm}); \textbf{\lsc}(Latent Space Communication)\cite{lsc} aligns KV caches through a shared latent space (implemented by us); \textbf{\single} is the single-backbone translator used for \mot; \textbf{\moth} is an HCache-inspired hidden-state translation variant for memory-efficient cache restoration~\cite{hcache}, see Appendix~\ref{app:mot-h} for detailed description; and \textbf{\mot} is our gated mixture of cache translators with context-correction training.

\textbf{Models.} \quad
We evaluate all methods on homogeneous and heterogeneous decoder-only model pools spanning \texttt{Qwen2.5}, \texttt{GPT-2}, and \texttt{OPT}, from hundreds of millions of parameters to 7B. These models vary in attention layout, depth, hidden size, and head dimension, covering both compatible and mismatched cache spaces. See Appendix~\ref{app:translator-training-configuration} for the shared translator training configuration.

\subsection{Results}
\label{sec:results}

\begin{table}[t]
\centering
\resizebox{\textwidth}{!}{%
\begin{tabular}{c|c|c|c c c c|c c c|c}
\toprule
\multirow{2}{*}{\textbf{Setting}} &
\multirow{2}{*}{\textbf{Method}} &
\textbf{Cosine} &
\multirow{2}{*}{\textbf{BoolQ(\%)}} &
\textbf{PubMed} &
\textbf{MMLU} &
\multirow{2}{*}{\textbf{Acc Avg(\%)}} &
\multirow{2}{*}{\textbf{SQuAD}} &
\multirow{2}{*}{\textbf{NewsQA}} &
\multirow{2}{*}{\textbf{F1 Avg}} &
\textbf{GPU Peak} \\
&
&
\textbf{Sim} &
&
\textbf{QA(\%)} &
\textbf{Redux(\%)} &
&
&
&
&
\textbf{Memory (GiB)} \\
\midrule
\multirow{6}{*}{\shortstack{\textbf{Homogeneous}\\\texttt{0.5B} $\to$ \texttt{0.5B}}}
& \native & N/A & 66.0 & 56.0 & 35.0 & 52.0 & 0.56 & 0.34 & 0.45 & 1.92 \\
& \ctoc & 0.02 & 39.0 & 22.0 & 23.0 & 28.0 & 0.04 & 0.01 & 0.03 & 1.92 \\
& \interlat & 0.98 & 43.0 & 42.0 & 19.0 & 35.0 & 0.29 & 0.25 & 0.27 & 1.95 \\
& \kvcomm & 0.98 & \textbf{62.0} & 46.0 & 27.0 & 45.0 & 0.05 & 0.04 & 0.04 & \textbf{1.91} \\
& \lsc & 0.01 & 6.0 & 20.0 & 23.0 & 16.0 & 0.06 & 0.01 & 0.04 & 4.09 \\
\cmidrule{2-11}
& \textbf{\mot} & \textbf{1.00} & 60.0 & \textbf{52.0} & \textbf{35.0} & \textbf{49.0} & \textbf{0.54} & \textbf{0.30} & \textbf{0.42} & 3.29 \\
\midrule
\multirow{5}{*}{\shortstack{\textbf{Heterogeneous}\\\texttt{7B} $\to$ \texttt{0.5B}}}
& \native & N/A & 66.0 & 56.0 & 35.0 & 52.0 & 0.56 & 0.34 & 0.45 & 15.25 \\
& \ctoc & 0.02 & 47.0 & 18.0 & 24.0 & 30.0 & 0.05 & 0.01 & 0.03 & \textbf{15.25} \\
& \interlat & 0.98 & 41.0 & 44.0 & 20.0 & 35.0 & 0.29 & 0.25 & 0.27 & 15.27 \\
& \lsc & 0.01 & 22.0 & 9.0 & 14.0 & 15.0 & 0.05 & 0.01 & 0.03 & 18.21 \\
\cmidrule{2-11}
& \textbf{\mot} & \textbf{1.00} & \textbf{65.0} & \textbf{53.0} & \textbf{35.0} & \textbf{51.0} & \textbf{0.53} & \textbf{0.32} & \textbf{0.43} & 16.62 \\
\bottomrule
\end{tabular}%
}
\caption{Homogeneous and heterogeneous \texttt{Qwen2.5} translation results.}
\label{tab:qwen25_combined}
\end{table}

\begin{figure*}[t!]
    \centering

    \begin{minipage}[t]{0.3\textwidth}
        \centering
        \vspace{0pt}
        \includegraphics[width=\textwidth,height=3.5cm]{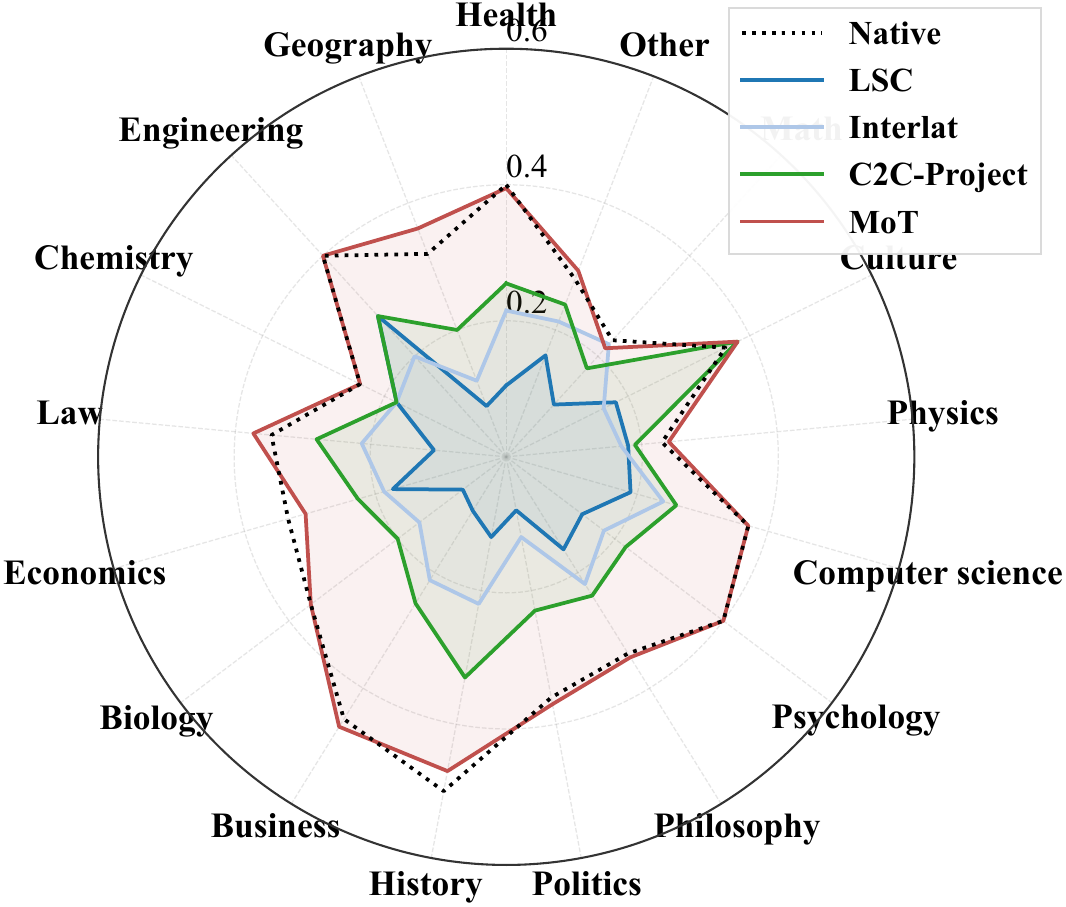}
        \vspace{0pt}
        \captionof{figure}{MMLU-Redux \texttt{7B}.}
        \label{fig:mmlu_qwen}
    \end{minipage}
    \hfill
    \begin{minipage}[t]{0.34\textwidth}
        \centering
        \vspace{0pt}
        \includegraphics[width=\textwidth,height=3.5cm]{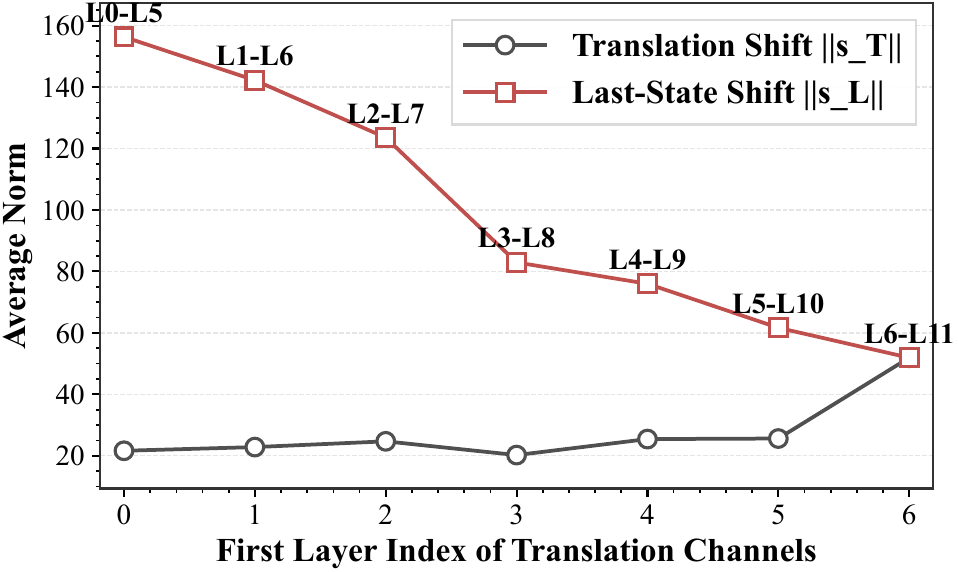}
        \vspace{0pt}
        \captionof{figure}{Improved shifts.}
        \label{fig:improved_shifts}
    \end{minipage}
    \hfill
    \begin{minipage}[t]{0.34\textwidth}
        \centering
        \vspace{0pt}
        \includegraphics[width=\textwidth,height=3.5cm]{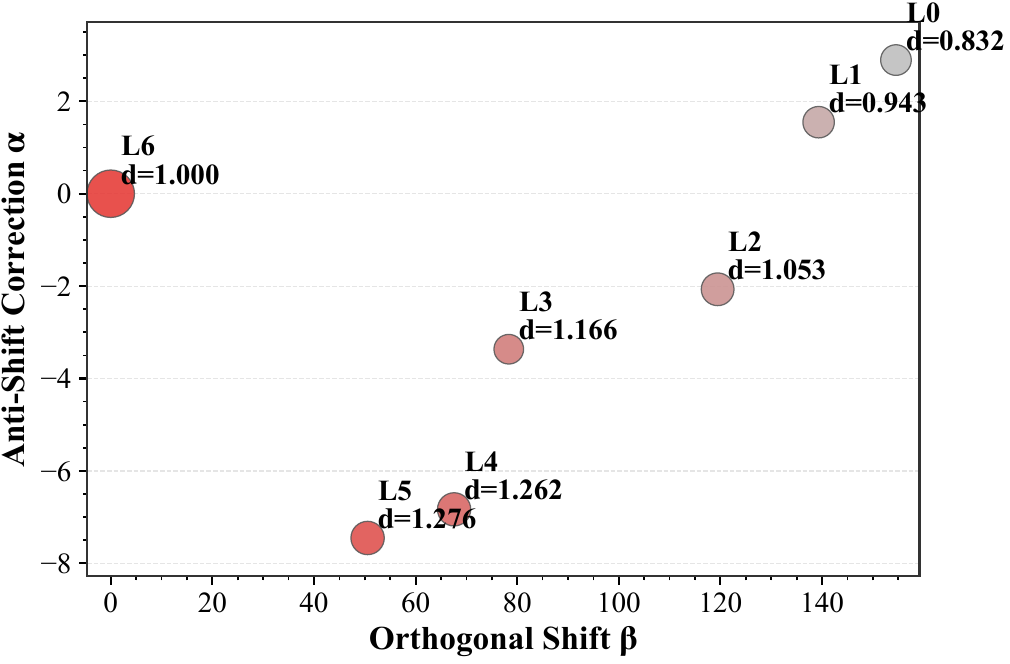}
        \vspace{0pt}
        \captionof{figure}{Improved correction.}
        \label{fig:improved_correction}
    \end{minipage}
\end{figure*}

Table~\ref{tab:qwen25_combined} shows homogeneous and heterogeneous translation results for \texttt{Qwen2.5}. This setting reflects architectural heterogeneity within the same model family: \texttt{Qwen2.5-7B} has larger depth and hidden dimension, more attention heads, and more key-value heads than \texttt{Qwen2.5-0.5B}, leading to different layer-wise representation dimensions and KV cache shapes. \mot maintains stable performance across both closed-set QA and extractive QA. In the homogeneous setting, \mot achieves an average accuracy of \(49.0\%\) and an average F1 of 0.42. In the heterogeneous setting, it achieves an average accuracy of \(51.0\%\) and an average F1 of 0.43. These results show that \mot remains stable even when the source model has larger capacity and the source and target cache spaces differ more substantially.
In contrast, \ctoc achieves only 0.03 average F1 in both settings, indicating that projection-only mapping cannot capture the multi-layer cache mismatch needed for extractive QA. \interlat partially recovers closed-set QA but remains limited on extractive QA, likely because last-layer hidden-state communication does not reconstruct the full KV hierarchy. \kvcomm performs well on BoolQ in the homogeneous setting, but its low average F1 suggests that selectively copied KV layers leave important unshared layers unmodeled; we omit it in the heterogeneous setting because it assumes homogeneous architectures. \lsc also degrades substantially, consistent with our analysis that global shared-space alignment can suffer from both propagation and correction-deficit errors.

Fig.~\ref{fig:mmlu_qwen} breaks down the MMLU-Redux results by 17 categories. In both homogeneous and heterogeneous settings, \mot stays close to Native across categories and covers the performance region of the other translation methods. This indicates that \mot does not rely on isolated category-specific gains, but instead provides stable translation quality across diverse closed-set QA categories.

Overall, \mot provides stable performance across closed-set QA and extractive QA in both homogeneous and heterogeneous settings. These results show that token-level specialization and context correction are effective for robust cache translation. Although \mot introduces additional peak memory, this overhead is only a marginal cost in larger-model translation settings. A detailed scalability analysis is provided in Appendix~\ref{app:scalability}, and additional model-family results are provided in Appendix~\ref{app:additional_results}.

\subsection{Root Cause Analysis}
\label{sec:root_cause_analysis}

\paragraph{Shifts and Correction.}

Fig.~\ref{fig:improved_shifts} shows that, compared with the \single \& \textsf{Prompt LM Loss} setting in Fig.~\ref{fig:last_state_shift}, the proposed \mot \& \textsf{Context Correction Loss} sharply reduces both the translation shift \(\|\bs_T\|\) and the last-state shift \(\|\bs_L\|\) across translation-layer positions. This supports that the two components suppress their target error dynamics.
We further compare correction trends between Fig.~\ref{fig:correction_decomposition} and Fig.~\ref{fig:improved_correction}. Overall, \(\alpha_{T:L}\) slightly decreases and rises only at a few later positions, while \(\beta_{T:L}\) drops substantially from early translation layers and is further reduced in later layers. Although \(d_{T:L}\) slightly increases, its magnitude remains limited. Thus, from Eq.~\eqref{eq:last_state_shift_decomposition}, the reduction in \(\|\bs_L\|\) is better attributed to the joint decrease of \(\|\bs_T\|\) and \(\beta_{T:L}\), rather than to a substantial improvement in correction deficit. Additional results are provided in Appendix~\ref{app:root_cause_analysis}.

\paragraph{Loss Ablation.}

\begin{figure*}[t]
    \centering

    \begin{minipage}{0.42\textwidth}
        \centering
        \includegraphics[width=\textwidth]{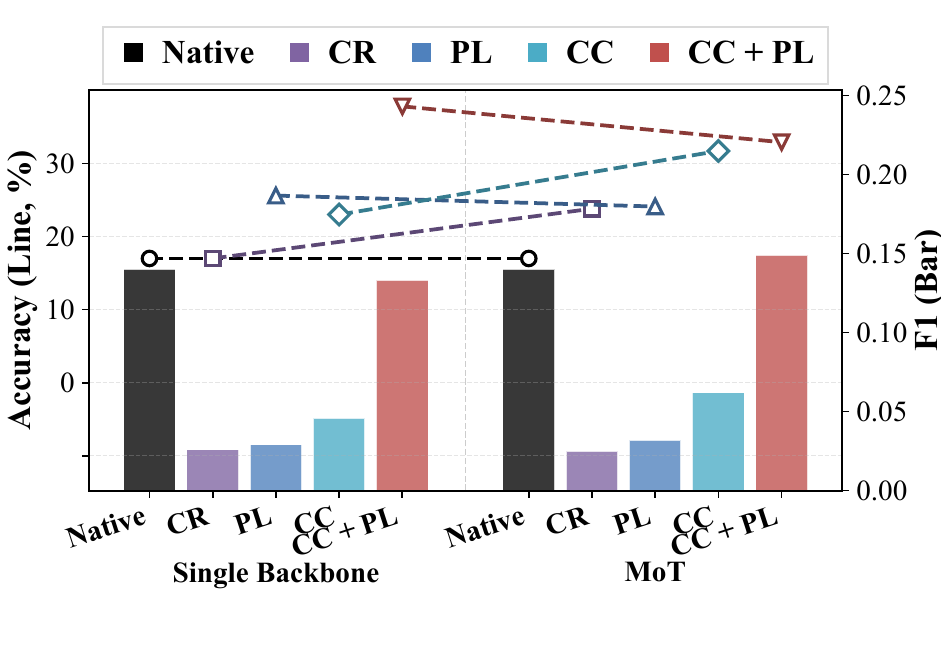}
        \captionof{figure}{Loss ablation.}
        \label{fig:ablation_loss}
    \end{minipage}
    \qquad
    \begin{minipage}{0.52\textwidth}
        \vspace{0.7cm}
        \centering
        \footnotesize
        \setlength{\tabcolsep}{4pt}
        \begin{tabular}{ccccc}
        \toprule
        \multirow{2}{*}{\textbf{Method}} &
        \multirow{2}{*}{\textbf{Acc(\%)}} &
        \multirow{2}{*}{\textbf{F1}} &
        \multicolumn{2}{c}{\textbf{Translator Usage}} \\
        \cmidrule(lr){4-5}
        &
        &
        &
        \(\mathbf{CV^2}\) &
        \textbf{Entropy} \\
        \midrule
        \textsf{MoE}         & 48.1 & 0.41 & 0.82 & 0.15 \\
        \textsf{MoT-Uni} & 49.0 & 0.41 & 0.00 & 0.69 \\
        \mot                 & 51.0 & 0.43 & 0.32 & 0.51 \\
        \bottomrule
        \end{tabular}
        \vspace{0.7cm}
        \captionof{table}{\mot architecture ablation.}
        \label{tab:mot_analysis}
    \end{minipage}

\end{figure*}

Fig.~\ref{fig:ablation_loss} shows that the proposed \texttt{CC} (\textsf{Context Correction}) loss remains effective independently of \mot. Even in the \single setting, without routing, \texttt{CC} yields more stable accuracy and F1 gains than the prior \texttt{CR} (Context Reconstruction) and \texttt{PL} (\textsf{Prompt LM}) losses~\cite{lsc}. This suggests that \texttt{CC} directly reduces the last-state shift by aligning the translated cache with the native target trajectory, rather than relying on translator routing. The same trend holds under \mot, supporting our analysis that reducing the translation shift with \mot and the last-state shift with \texttt{CC} are complementary.
Individually, \texttt{CR} and \texttt{PL} provide limited recovery, whereas \texttt{CC} gives stronger gains. Combining \texttt{CC} with \texttt{PL} achieves best accuracy and F1 by coupling context-level KV correction with prompt-conditioned generation supervision, so we adopt \texttt{CC}+\texttt{PL} as the final heuristic loss.

\paragraph{\mot Architecture Ablation.}

Using the same heterogeneous setting as above, we compare translator selection in \mot under the simplest configuration,
\((\mathrm{Top}\text{-}K,N_{\mathrm{Tr}})=(1,2)\).
As baselines, we use \textsf{MoE} (Mixture-of-Experts), which splits the internal Dense MLP into experts, and \textsf{MoT-Uni}, which always uses translators uniformly.
The results are summarized in Table~\ref{tab:mot_analysis}.
Usage Imbalance is \(\mathrm{CV}^2=\mathrm{Var}_e(I_e)/\mathrm{Mean}_e(I_e)^2\),
where \(I_e\) is the cumulative selection count of translator \(e\), and Usage Entropy is
\(-\sum_e p_e\log p_e\) with \(p_e=I_e/\sum_j I_j\). Higher entropy indicates more balanced translator usage.
Table~\ref{tab:mot_analysis} shows that \textsf{MoE} usage is highly skewed, while \textsf{MoT-Uni} is balanced but lacks input-specific selectivity. In contrast, \mot uses both translators selectively and achieves higher Acc and F1, supporting that token-wise translator selection helps reduce the translation shift \(\bs_T\).

\section{Case Studies}
\label{sec:case_studies}

\paragraph{Memory Reuse: Multi-Agent Reasoning.}
\label{sec:case_memory_reuse}

\begin{figure*}[t!]
    \centering

    \begin{minipage}{0.49\textwidth}
        \centering
        \vspace{0.3cm}
        \begin{minipage}{0.48\textwidth}
            \centering
            \includegraphics[width=\textwidth]{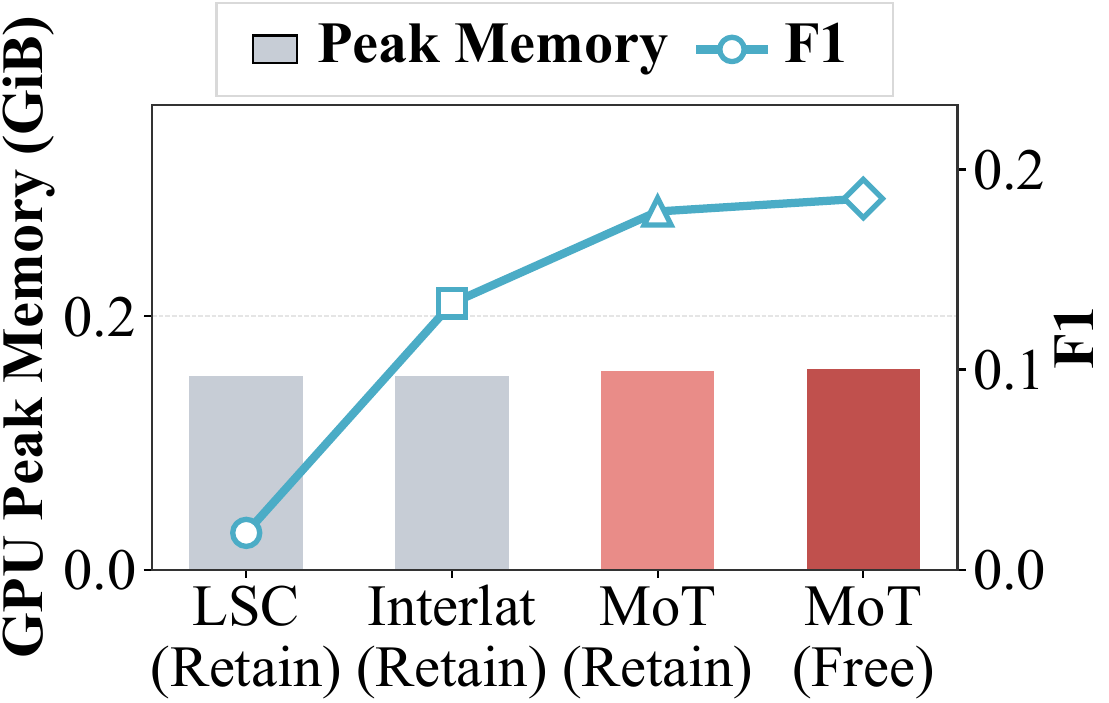}\\
            \small (a) 2 Agents
        \end{minipage}
        \hfill
        \begin{minipage}{0.48\textwidth}
            \centering
            \includegraphics[width=\textwidth]{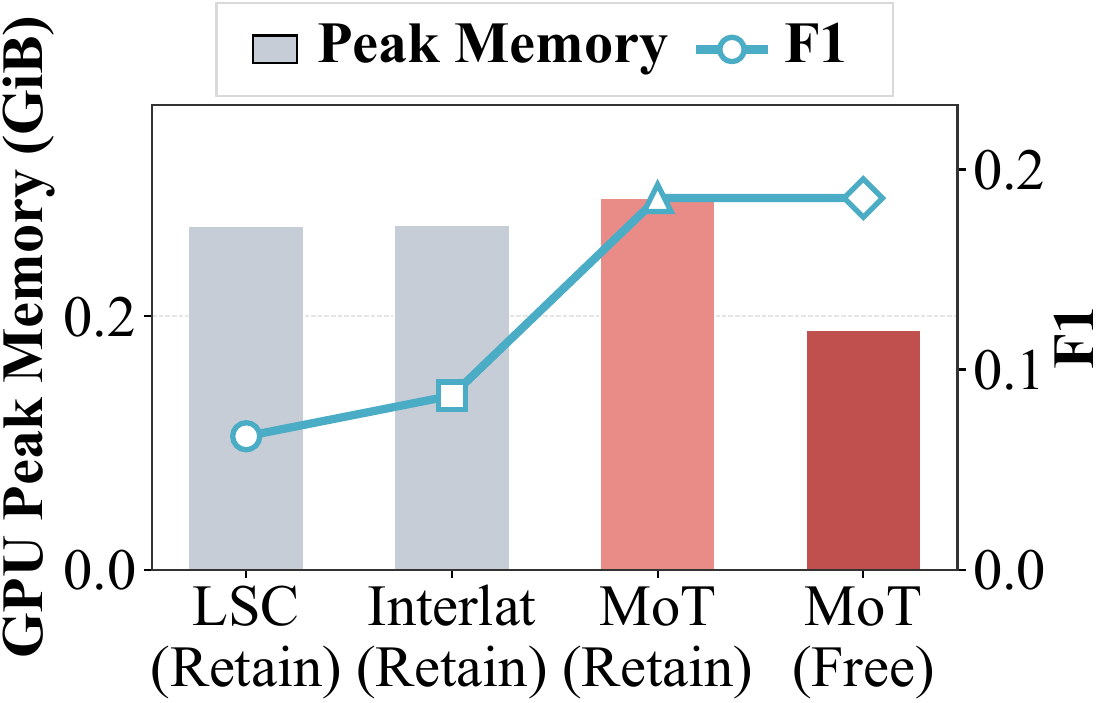}\\
            \small (b) 4 Agents
        \end{minipage}
        \vspace{0.1cm}
        \caption{Agent scalability.}
        \label{fig:agents}
    \end{minipage}
    \hfill
    \begin{minipage}{0.49\textwidth}
        \centering
        \vspace{0.2cm}
        \begin{minipage}[t]{0.48\textwidth}
            \centering
            \includegraphics[width=\textwidth]{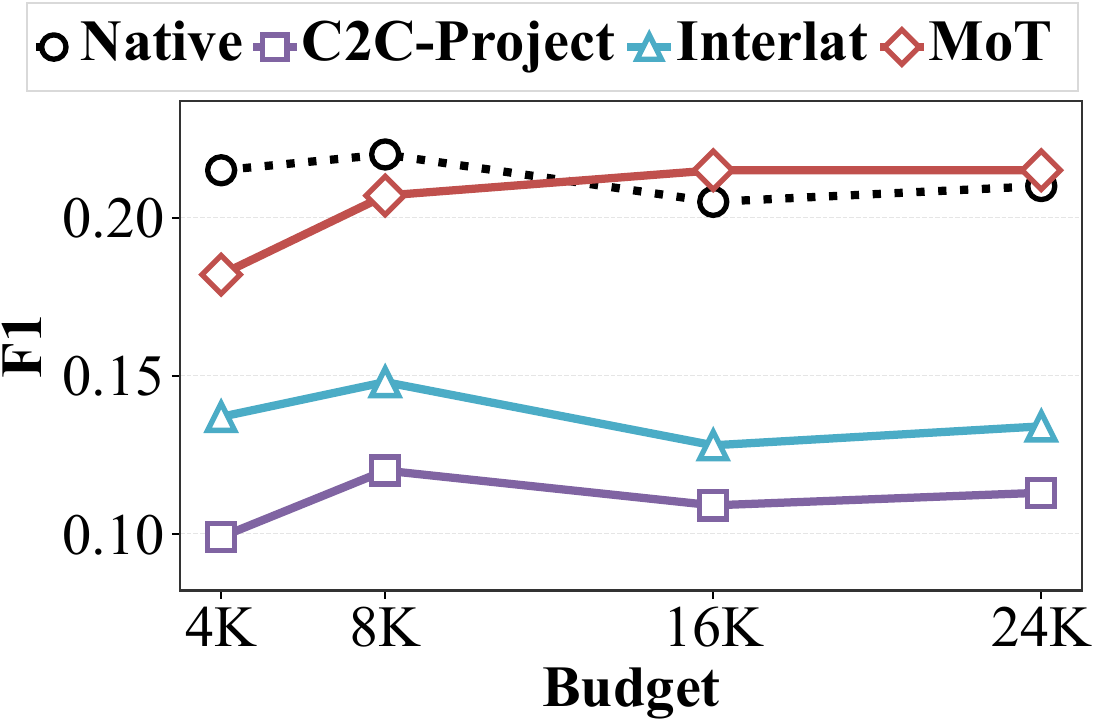}
            \subcaption{F1 by Budget}
            \label{fig:f1_by_budget}
        \end{minipage}
        \hfill
        \begin{minipage}[t]{0.48\textwidth}
            \centering
            \includegraphics[width=\textwidth]{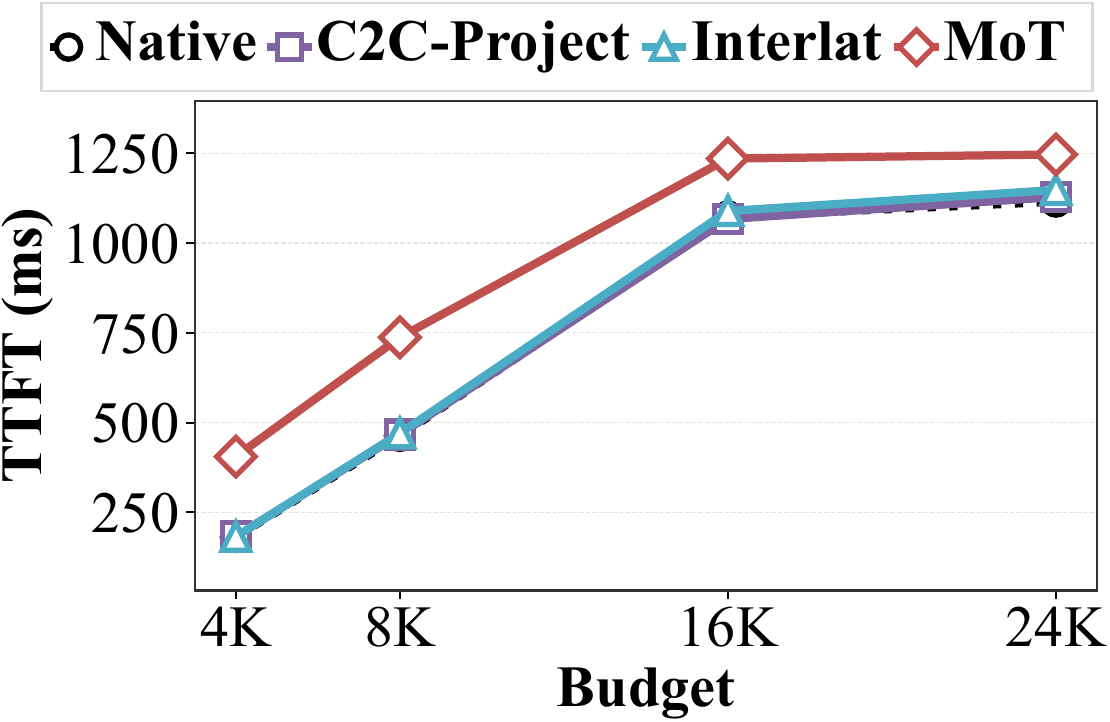}
            \subcaption{TTFT by Budget}
            \label{fig:ttft_by_budget}
        \end{minipage}
        \vspace{0pt}
        \caption{Long-context scalability.}
        \label{fig:budget_f1_ttft_analysis}
    \end{minipage}

\end{figure*}

In multi-agent reasoning, agents sequentially refine answers by reusing the passage, question, previous responses, and accumulated dialogue history. As the number of agents grows, KV cache management becomes increasingly costly. This case study evaluates whether \mot reduces memory growth through inter-agent cache offload while preserving reasoning quality.
We define the first agent as the \textsf{Hub Agent}, which uses the same cache offload mechanism as the other agents but retains the accumulated KV history for final response generation. We compare two cache strategies: \textsf{Retain}, which keeps all agents' caches, and \textsf{Free}, which removes non-hub caches after offload.
Fig.~\ref{fig:agents} compares 2-agent and 4-agent settings. In \textsf{Retain}, each agent keeps its KV cache after its turn, so peak memory increases with the number of agents: \interlat(\textsf{Retain}) and \lsc(\textsf{Retain}) increase by about 77\%, and \mot(\textsf{Retain}) by about 86\%. In contrast, \mot(\textsf{Free}) removes non-hub caches after offload, limiting the active cache working set. With 4 agents, it uses about 36\% less peak memory than \mot(\textsf{Retain}), showing improved memory scalability. Moreover, in the extreme 10-agent scale-up setting in Fig.~\ref{fig:multi_agents_intro}, \mot(\textsf{Free}) uses only about 0.3GiB, whereas the other \textsf{Retain}-based methods exceed 1GiB; this gap necessarily grows as the number of agents increases.
\mot also preserves quality better as the agent count increases. While \interlat loses F1 in the 4-agent setting, both \mot(\textsf{Retain}) and \mot(\textsf{Free}) maintain stable F1. Thus, \mot(\textsf{Free}) reduces peak memory without sacrificing reasoning quality, demonstrating accuracy-preserving cache eviction through offload and providing \textbf{scale-invariant memory management}. Details are provided in Appendix~\ref{sec:multi_agent_reasoning}.

\paragraph{Storage Reuse: Long-Context CAG.}
\label{sec:case_storage_reuse}
This case study evaluates long-context CAG\,(Cache-Augmented Generation), where long contexts are stored as KV caches and reused. On \texttt{HotpotQA-E}, we vary the context budget over 4K, 8K, 16K, and 24K. \native represents the RAG setting that prefills the original context for every query, while the other methods reuse stored caches in the CAG setting.
Fig.~\ref{fig:budget_f1_ttft_analysis}(a) shows that \ctoc and \interlat remain in a low-F1 regime even as the context budget increases. In contrast, \mot improves substantially with larger budgets, especially at 16K and 24K, indicating that it can effectively reuse long-context information stored in the KV cache. Fig.~\ref{fig:budget_f1_ttft_analysis}(b) shows that TTFT\,(Time to First Token) increases with context budget. \native incurs larger prefill cost, while CAG methods incur cache conversion and injection overhead. \mot shows only marginally higher TTFT due to the additional use of MoT.
Overall, \mot achieves F1 close to \native under large budgets, showing stronger cache-reuse quality for long-context CAG. Details are provided in Appendix~\ref{app:long-context}.

\section{Conclusion}
We introduced \textsf{MoT (Mixture-of-Translators)}, a cache translation
framework for reusing KV caches across heterogeneous LLMs. MoT combines multiple
translator modules with token-level routing and a Context Correction Loss that
aligns the replayed target trajectory with the native target trajectory. Our
analysis identified two competing error sources, propagated translation shift and
correction-deficit error, which motivate this joint design.
Experiments on closed-set QA, extractive QA, multi-agent reasoning, and
long-context cache-augmented generation show that MoT preserves downstream
quality across homogeneous and heterogeneous model pairs. The case studies
further demonstrate memory reuse in multi-agent reasoning and storage reuse in
long-context CAG, reducing redundant cache construction while maintaining
generation quality. These results suggest that heterogeneous KV-cache
translation is a practical direction for scalable multi-model LLM systems.

\bibliography{references}

\newpage

\appendix

\section{Related Work}
\subsection{Communication between Models}

Communication between models has been studied at multiple levels before the emergence of LLMs, including symbolic messages, parameters, logits, and vector messages.
Symbolic communication externalizes a model output as a message and passes it to another model \cite{finin1994kqml}; federated learning shares parameters or model updates \cite{mcmahan2017communication}; and federated distillation exchanges logits or soft targets \cite{jeong2018federated,li2019fedmd}.
Emergent communication encourages cooperation through learned vector messages or non-linguistic protocols \cite{foerster2016learning,lazaridou2020emergent}.
However, these approaches mainly focus on training time aggregation, output level distillation, or task-specific protocols, and do not directly transmit the internal state generated for a particular input context.

With the advent of LLMs, natural language communication has become the standard interface for model-to-model interaction.
Role-based collaboration, mixture-of-agents, debate, and routing methods improve performance or efficiency by exchanging text messages, aggregating responses, or selecting appropriate models \cite{li2023camel,wu2024autogen,hong2024metagpt,wang2025mixtureofagents,du2024multiagentdebate,ong2025routellm,shen2024collaborative_decoding}.
However, text-to-text communication compresses internal representations into discrete tokens, introducing a semantic bottleneck and decoding overhead, while routing does not directly transfer internal understanding between models.
Recent studies have also explored embedding, activation, and latent state communication \cite{pham2024cipher,ramesh2025communicating,tang2025state_delta}, but they often depend on specific representation choices or language trajectories.
Heterogeneous federated fine-tuning methods such as HeteroTune are closer to training time knowledge sharing \cite{jia2024heterotune}.

Recent model communication methods directly use internal states conditioned on a specific context as the communication medium.
KVComm\,\cite{kvcomm} proposes training-free cache communication that approximates KV cache offsets for shared contexts in multi-agent settings to reduce cross-context prefill redundancy.
LSC\,\cite{lsc} aligns the KV caches of multiple models into a global shared latent space and enables models to read and write each other's cache representations through in/out adapters.
Cache-to-Cache (C2C)\,\cite{c2c} projects and fuses the source model's KV cache into the receiver model space, enabling direct cache-to-cache semantic communication without text.
Interlat\,\cite{interlat} treats the last hidden states corresponding to an agent's generated message as thought representations and improves the efficiency of language-free communication through latent compression.
These studies extend model communication from text, logits, and parameters to the exchange of context-specific internal states.
However, KVComm mainly focuses on cross-context cache reuse among RoPE-based LLMs, LSC requires shared-space alignment, and C2C may incur increasing fuser or translator training costs as the model pool grows.
In addition, while Interlat demonstrates efficient agent communication through hidden states based latent communication, it is orthogonal to cache translation that explicitly handles KV cache format mismatch.
\mot aims to alleviate the representation bottleneck of existing cache communication methods that rely on a single translator or a fixed shared space by using a gated mixture of translators for heterogeneous KV cache translation.

\subsection{Memory Reuse for Agentic Reasoning}

In agentic reasoning such as multi-agent debate, parallel agent execution, and branching search, system prompts, tool instructions, retrieved contexts, and shared workspaces are repeatedly used across multiple agents or branches \cite{du2023multiagent_debate}.
Sharing or reusing prefix KV caches can avoid redundant prefilling for identical prefixes, but most prefix KV sharing methods do not support reuse across heterogeneous models because KV caches are tied to internal formats specific to each model.
Thus, in multi-model settings, even the same prefix must be cached separately for each model, resulting in memory redundancy.
In this subsection, heterogeneous support means that the same prefix cache can be reused regardless of the model architecture.

Token-exact prefix matching approaches, such as chunk-level sharing \cite{ye2024chunkattention}, PromptCache, which structurally fixes and reuses prompt segments \cite{gim2024prompt_cache}, workflow-aware cache management and multi-tier prefix storage \cite{pan2025kvflow,chen2025impress}, and semantic or segment level reuse methods \cite{zhao-mastorakis-2025-semsharekv,yang2025kvshare,anonymous2026crosskv}, all assume the same model or the same KV format, and therefore cannot fundamentally remove redundancy in multi-model settings.

Cross-model KV communication methods, including C2C\cite{c2c}, LSC\cite{lsc}, and FedRefine\cite{chen2026fedrefine}, can transfer prefix states across different models or devices, showing the potential to reduce prefix redundancy in multi-model agentic reasoning.
However, these approaches may require additional management costs, such as pairwise fuser or translator training, shared-space alignment, or federated communication.
In this subsection, we compare these methods from the perspective of memory redundancy.

Under the assumption of the same backbone or the same architecture, limited cross-model sharing is possible.
DroidSpeak shares KV caches among variants with the same architecture while selectively recomputing some layers, but its strong compatibility assumption makes it difficult to regard it as supporting heterogeneous models \cite{liu2024droidspeak}.
LRAgent decomposes KV caches in multi-LoRA settings into shared backbone components and LoRA adapter components, but it operates only under a shared-backbone assumption \cite{jeon2026lragent}.
These approaches can reduce multi-model redundancy, but they are difficult to generalize to heterogeneous models.

Vision Wormhole \cite{liu2026vision_wormhole} recently proposed latent communication based on visual latent tokens, rather than KV caches, through the visual interface of vision-language models (VLMs).
Since this differs from cache reuse that stores and shares prefix KV caches, we include it only as related work in this subsection.

\mot unifies shared prefix caches into the hub model format and absorbs KV format mismatch with model-specific translators, allowing heterogeneous multi-model systems to maintain only a single copy of the shared prefix cache.
This separates memory redundancy from the number of models and reduces it to $O(1)$.

\begin{table}[t]
\centering
\resizebox{\textwidth}{!}{
\begin{tabular}{lccc}
\toprule
\textbf{Category} &
\textbf{Heterogeneous Support} &
\textbf{Cache Format} &
\textbf{Memory Redundancy} \\
\midrule

\makecell[l]{%
ChunkAttention\cite{ye2024chunkattention}, PromptCache\cite{gim2024prompt_cache},\\
KVFlow\cite{pan2025kvflow}, IMPRESS\cite{chen2025impress}, SemShareKV\cite{zhao-mastorakis-2025-semsharekv},\\
KVShare\cite{yang2025kvshare}, CrossKV\cite{anonymous2026crosskv}%
} &
No &
KV Cache &
$O(M)$ \\

DroidSpeak\cite{liu2024droidspeak} &
No &
KV Cache &
$O\!\left(1 + (1-\rho)M\right)$ \\

LRAgent\cite{jeon2026lragent} &
No &
KV Cache &
$O\!\left(1 + Mr\right)$ \\

\makecell[l]{%
C2C\cite{c2c}, LSC\cite{lsc}, FedRefine\cite{chen2026fedrefine}%
} &
Yes &
KV Cache &
$O(1)$ \\

Vision Wormhole\cite{liu2026vision_wormhole} &
Yes &
Visual Tokens (Non-KV) &
N/A \\

\midrule

\textbf{\mot (Proposed)} &
\textbf{Yes} &
\textbf{KV Cache} &
\textbf{$O(1)$} \\

\bottomrule
\end{tabular}
}
\caption{
Summary of related work from the perspective of memory redundancy caused by model-dependent prefix caches in multi-model inference.
The fourth column denotes how the number of additional KV cache copies required for reusing the same prefix scales with the number of models $M$.
For DroidSpeak, $\rho\in[0,1]$ denotes the fraction of KV cache that is shared, and for LRAgent, $r$ denotes the LoRA rank.
Vision Wormhole is based on visual latent tokens rather than KV caches, and is therefore marked as N/A for KV cache memory redundancy.
}
\label{tab:rw-mem-dup}
\end{table}

Table~\ref{tab:rw-mem-dup} summarizes the related work in this subsection from the perspective of memory redundancy.
Prefix KV sharing methods that do not support heterogeneous models require KV caches to be duplicated across models in multi-model settings because their KV formats are tied to model-specific internal representations, resulting in $O(M)$ scaling.
DroidSpeak and LRAgent partially reduce redundancy under strong compatibility assumptions.
Cross-model KV communication methods such as C2C, LSC, and FedRefine reduce per-model cache copies by transferring prefix states across models, while \mot pursues the same goal by unifying caches in the hub model format.

\subsection{Storage Reuse for Long-Context RAG}

Retrieval-Augmented Generation (RAG) is a representative approach that improves accuracy without additional training by retrieving query-relevant documents and including them in the prompt context \cite{lewis2020retrieval}.
Since text is a common interface that all models can interpret, the storage itself can be shared across heterogeneous models.
However, text-based approaches must repeatedly perform prefilling and KV cache construction for every request, and therefore do not provide storage reuse in the sense of reusing stored internal states.

For RAG workloads, storage reuse has been explored by precomputing and storing internal states for document prefixes offline and reusing them online.
TurboRAG precomputes KV caches for retrieved chunks and concatenates them online \cite{lu2025turborag}.
CAG preloads a document collection into the context window, builds offline KV caches, and performs generation using only the query online in a retrieval-free setting \cite{chan2024dontdorag}.
Hidden states restoration methods store prefill hidden states and later restore them online to rebuild KV caches through key/value projections, thereby moving the stored representation from KV caches to hidden states \cite{hcache}.
KVLink also targets KV cache reuse for retrieval chunks, but it involves trainable special tokens and a training procedure \cite{yang2025kvlink}.

However, in most of these methods, the stored internal state, whether a KV cache or a hidden state, is tied to a specific model architecture and tokenizer.
When the model changes, the same item must be stored again or maintained separately for each model \cite{lu2025turborag,chan2024dontdorag,hcache}.
In other words, when the goal of storage reuse extends from item reuse to cross-model reuse, model-dependent states become the bottleneck.

\mot stores each shared prefix item only once as a KV cache of the hub model $m_{\mathrm{H}}$.
At online inference time, it performs one-hop hub-to-target translation to skip the target model's prefilling, thereby enabling storage reuse across heterogeneous models without requiring per-model stores for prefix state caches.

\begin{table}[t]
\centering\small
\resizebox{\textwidth}{!}{
\begin{tabular}{lccc}
\toprule
\textbf{Category} &
\textbf{Heterogeneous Support} &
\textbf{Storage Redundancy} &
\textbf{Training Complexity} \\
\midrule

RAG\cite{lewis2020retrieval} &
Yes (Text) &
N/A (No KV Cache Stored) &
N/A \\

CAG\cite{chan2024dontdorag}, HCache\cite{hcache} &
No &
$O(M)$ &
N/A \\

TurboRAG\cite{lu2025turborag}, KVLink\cite{yang2025kvlink} &
No &
$O(M)$ &
$O(MP)$ \\

\midrule

\textbf{\mot (Proposed)} &
\textbf{Yes} &
\textbf{$O(1)$} &
\textbf{$O(Mp)$} \\

\bottomrule
\end{tabular}
}
\caption{
Summary of related work from the perspective of storage reuse in RAG tasks.
Storage redundancy denotes the dominant scaling term in the number of KV cache copies that must be maintained when multiple models reuse the same item prefix.
Training complexity denotes the dominant offline training cost required to enable cross-model reuse.
$P$ and $p$ denote the full-model parameter count and the translator parameter count, respectively, where typically $p \ll P$.
}
\label{tab:rw-storage-reuse}
\end{table}

Table~\ref{tab:rw-storage-reuse} summarizes the related work in this subsection from the perspective of storage reuse.
Prefix state cache methods suffer from $O(M)$ storage redundancy as the number of models increases, because each model requires its own model-dependent state.
TurboRAG and KVLink require model-specific offline fine-tuning, whereas \mot maintains only a single copy and uses translator-based training, keeping storage redundancy fixed at $O(1)$ while keeping offline training cost linear in the number of models.

\section{Deferred Proofs}
\label{app:deferred-proofs}

\subsection{Shift Delta and Correction Deficit Coefficient}

\begin{definition}
\label{def:shift_delta_and_correction_deficit}
(Shift Delta and Correction Deficit Coefficient) \\
For the highest target translation layer \(T\), let \(\bs_T\) denote the translation shift accumulated within the channel set, and let \(\bs_L\) denote the final shift observed at the last hidden state. We define the shift delta additionally induced by the remaining upper layers after \(T\) as
\[
\Delta \bs_{T:L}
:=
\bs_L-\bs_T
\]
That is, the shift delta is a vector that describes how the residual dynamics after \(T\) modify the translation shift.

When \(\bs_T\neq 0\), we define the anti-shift correction coefficient as
\begin{equation}
\alpha_{T:L}
:=
-\left\langle
\Delta \bs_{T:L},
\bu^{\parallel}
\right\rangle,
\qquad
\bu^{\parallel}:=\frac{\bs_T}{\|\bs_T\|}.
\label{eq:anti_shift_correction_coefficient}
\end{equation}
Here, \(\bu^{\parallel}\) denotes the unit vector in the translation-shift direction.
We also define the orthogonal shift coefficient as
\begin{equation}
\beta_{T:L}
:=
\left\|
\Delta \bs_{T:L}
-
\left\langle
\Delta \bs_{T:L},
\bu^{\parallel}
\right\rangle
\bu^{\parallel}
\right\|.
\label{eq:orthogonal_shift_coefficient}
\end{equation}
Finally, the correction deficit coefficient is defined as
\begin{equation}
d_{T:L}
:=
\frac{\langle \bs_L,\bu^{\parallel}\rangle}{\|\bs_T\|}.
\label{eq:correction_deficit_coefficient_definition}
\end{equation}
This value is a normalized projection coefficient that quantifies how much of the translation-shift direction accumulated within the channel set remains in the last hidden state.
\end{definition}

\subsection{Proof of Proposition~\ref{prop:source_propagation_error}}
\label{app:proof:source_propagation_error}

\begin{proof}
Let \(\bh_\ell\) and \(\widehat{\bh}_\ell\) denote the hidden states after the \(\ell\)-th layer induced by the native target cache and the translated target cache, respectively, and define
\[
\bs_\ell := \bh_\ell-\widehat{\bh}_\ell
\]
The shift formed at \(T_{\mathrm{Start}}\) is given by
\[
\bs_{T_{\mathrm{Start}}}
=
\bh_{T_{\mathrm{Start}}}-\widehat{\bh}_{T_{\mathrm{Start}}}
\]

This proposition tracks how much the translation-induced shift component generated at \(T_{\mathrm{Start}}\) can be amplified as it passes through subsequent layers. For each layer \(\ell=T_{\mathrm{Start}}+1,\dots,L\), assuming that the two hidden-state trajectories propagate through the same target residual recurrence, we can write
\[
\bh_\ell
=
\bh_{\ell-1}+F_\ell(\bh_{\ell-1}),
\qquad
\widehat{\bh}_\ell
=
\widehat{\bh}_{\ell-1}+F_\ell(\widehat{\bh}_{\ell-1})
\]
Therefore,
\[
\bs_\ell
=
\bs_{\ell-1}
+
\left(
F_\ell(\bh_{\ell-1})-F_\ell(\widehat{\bh}_{\ell-1})
\right).
\]
Taking the norm on both sides and applying the triangle inequality together with Assumption~\ref{assump:residual_lipschitz}, we obtain
\[
\|\bs_\ell\|
\le
\|\bs_{\ell-1}\|
+
\delta\|\bs_{\ell-1}\|
=
(1+\delta)\|\bs_{\ell-1}\|.
\]
Applying this recursively for \(\ell=T_{\mathrm{Start}}+1,\dots,L\) gives
\[
\boxed{
\|\bs_L\|
\le
(1+\delta)^{L-T_{\mathrm{Start}}}
\|\bs_{T_{\mathrm{Start}}}\|.
}
\]
This proves Eq.~\eqref{eq:combined_shift_bound}. Since \(L\) and \(\|\bs_{T_{\mathrm{Start}}}\|\) are fixed, increasing \(T_{\mathrm{Start}}\) reduces the exponent \(L-T_{\mathrm{Start}}\), and hence the bound on the propagated last-state shift decreases exponentially.
\end{proof}

\subsection{Proof of Proposition~\ref{prop:correction_deficit_coefficient_bound}}
\label{app:proof:correction_deficit_coefficient_bound}

This appendix gives the detailed derivation behind
Proposition~\ref{prop:correction_deficit_coefficient_bound}. The key point is
that, as the highest target translation layer \(T\) moves toward the final layer
\(L\), fewer upper layers remain available to correct the translation shift.
Consequently, the correction-deficit coefficient approaches its terminal value.

\begin{proof}
Let \(\bh_\ell\) and \(\widehat{\bh}_\ell\) denote the hidden states after the
\(\ell\)-th layer induced by the native target cache and the translated target
cache, respectively, and define
\[
\bs_\ell := \bh_\ell-\widehat{\bh}_\ell.
\]
For a channel set \(\mathcal C\), define the target translation layer set and its boundary layers as
\[
\mathcal I^{\mathrm{Tgt}}(\mathcal C)
:=
\{j \mid (i,j)\in\mathcal C\},
\qquad
T_{\mathrm{Start}}
:=
\min \mathcal I^{\mathrm{Tgt}}(\mathcal C),
\qquad
T
:=
\max \mathcal I^{\mathrm{Tgt}}(\mathcal C).
\]
The shift formed at \(T_{\mathrm{Start}}\) is given by
\[
\bs_{T_{\mathrm{Start}}}
=
\bh_{T_{\mathrm{Start}}}-\widehat{\bh}_{T_{\mathrm{Start}}}.
\]

For each layer \(\ell=T_{\mathrm{Start}}+1,\dots,T\), assuming that the two
hidden-state trajectories propagate through the same target residual recurrence,
we can write
\[
\bh_\ell
=
\bh_{\ell-1}+F_\ell(\bh_{\ell-1}),
\qquad
\widehat{\bh}_\ell
=
\widehat{\bh}_{\ell-1}+F_\ell(\widehat{\bh}_{\ell-1}).
\]
Therefore,
\[
\bs_\ell
=
\bs_{\ell-1}
+
\left(
F_\ell(\bh_{\ell-1})-F_\ell(\widehat{\bh}_{\ell-1})
\right).
\]
Taking the norm and applying the triangle inequality together with
Assumption~\ref{assump:residual_lipschitz}, we obtain
\[
\|\bs_\ell\|
\le
\|\bs_{\ell-1}\|
+
\delta\|\bs_{\ell-1}\|
=
(1+\delta)\|\bs_{\ell-1}\|.
\]
Applying this recursively for \(\ell=T_{\mathrm{Start}}+1,\dots,T\) gives
\begin{equation}
\|\bs_T\|
\le
(1+\delta)^{T-T_{\mathrm{Start}}}
\|\bs_{T_{\mathrm{Start}}}\|.
\label{eq:translation_shift_bound_reintroduced}
\end{equation}
Thus, when \(T_{\mathrm{Start}}\) and
\(\|\bs_{T_{\mathrm{Start}}}\|\) are fixed, the upper bound on the translation
shift \(\|\bs_T\|\) can increase as \(T\) increases.

We now bound the correction opportunity provided by the remaining upper layers
after \(T\). By Definition~\ref{def:shift_delta_and_correction_deficit},
\[
\Delta \bs_{T:L}
=
\bs_L-\bs_T.
\]
Also,
\[
\bs_L-\bs_T
=
\sum_{\ell=T+1}^{L}
(\bs_\ell-\bs_{\ell-1}),
\]
and therefore
\[
\|\Delta \bs_{T:L}\|
\le
\sum_{\ell=T+1}^{L}
\|\bs_\ell-\bs_{\ell-1}\|.
\]
For each term,
\[
\bs_\ell-\bs_{\ell-1}
=
F_\ell(\bh_{\ell-1})-F_\ell(\widehat{\bh}_{\ell-1}),
\]
so by Assumption~\ref{assump:residual_lipschitz},
\[
\|\bs_\ell-\bs_{\ell-1}\|
\le
\delta\|\bs_{\ell-1}\|.
\]
Moreover, applying the recurrence bound starting from \(T\), we have
\[
\|\bs_{\ell-1}\|
\le
(1+\delta)^{\ell-1-T}\|\bs_T\|.
\]
Therefore,
\[
\|\Delta \bs_{T:L}\|
\le
\sum_{\ell=T+1}^{L}
\delta(1+\delta)^{\ell-1-T}\|\bs_T\|.
\]
Computing the geometric sum yields
\[
\|\Delta \bs_{T:L}\|
\le
\Bigl((1+\delta)^{L-T}-1\Bigr)\|\bs_T\|.
\]

By Definition~\ref{def:shift_delta_and_correction_deficit}, the anti-shift
correction coefficient is
\[
\alpha_{T:L}
=
-\left\langle
\Delta \bs_{T:L},
\bu^{\parallel}
\right\rangle.
\]
Thus, by the Cauchy--Schwarz inequality,
\[
\alpha_{T:L}
\le
\left|
\left\langle
\Delta \bs_{T:L},
\bu^{\parallel}
\right\rangle
\right|
\le
\|\Delta \bs_{T:L}\|.
\]
Hence,
\begin{equation}
\alpha_{T:L}
\le
\Bigl((1+\delta)^{L-T}-1\Bigr)\|\bs_T\|.
\label{eq:anti_shift_correction_bound_local}
\end{equation}
Substituting Eq.~\eqref{eq:translation_shift_bound_reintroduced} into this bound
gives
\begin{equation}
\alpha_{T:L}
\le
\Bigl(
(1+\delta)^{L-T_{\mathrm{Start}}}
-
(1+\delta)^{T-T_{\mathrm{Start}}}
\Bigr)
\|\bs_{T_{\mathrm{Start}}}\|.
\label{eq:anti_shift_correction_bound_global}
\end{equation}
Therefore, when \(T_{\mathrm{Start}}\) and
\(\|\bs_{T_{\mathrm{Start}}}\|\) are fixed, this upper bound decreases to \(0\)
as \(T\to L\). This formalizes the intuition that late translation leaves fewer
remaining upper layers for anti-shift correction.

Finally, we derive the correction-deficit bound. Since
\[
\bs_L
=
\bs_T+\Delta \bs_{T:L}
\]
and
\[
\bs_T=\|\bs_T\|\bu^{\parallel},
\]
we have
\[
\langle \bs_L,\bu^{\parallel}\rangle
=
\|\bs_T\|
+
\left\langle
\Delta \bs_{T:L},
\bu^{\parallel}
\right\rangle
=
\|\bs_T\|-\alpha_{T:L}.
\]
Dividing both sides by \(\|\bs_T\|\), we obtain
\[
d_{T:L}
=
1-\frac{\alpha_{T:L}}{\|\bs_T\|}.
\]
Using Eq.~\eqref{eq:anti_shift_correction_bound_local}, we get
\begin{equation}
d_{T:L}
\ge
1-
\Bigl((1+\delta)^{L-T}-1\Bigr)
=
2-(1+\delta)^{L-T}.
\label{eq:correction_deficit_lower_bound}
\end{equation}
Thus, when \(L\) is fixed, the lower bound on \(d_{T:L}\) increases as \(T\)
increases.

When \(T=L\), there are no remaining upper layers, so
\[
\Delta \bs_{L:L}
=
\bs_L-\bs_L
=
0.
\]
Therefore,
\[
\alpha_{L:L}=0,
\qquad
\beta_{L:L}=0.
\]
Moreover, if \(\bs_L\neq0\), then
\[
\bu^{\parallel}
=
\frac{\bs_L}{\|\bs_L\|}.
\]
Hence,
\[
d_{L:L}
=
\frac{\langle \bs_L,\bs_L/\|\bs_L\|\rangle}{\|\bs_L\|}
=
1.
\]
Thus, at the final hidden state, no upper-layer correction opportunity remains,
and the correction-deficit coefficient reaches its terminal value.
\end{proof}

\subsection{Proof of Proposition~\ref{prop:last_state_shift_decomposition}}
\label{app:proof:last_state_shift_decomposition}

\begin{proof}
By Definition~\ref{def:shift_delta_and_correction_deficit},
\[
\alpha_{T:L}
=
-\left\langle
\Delta \bs_{T:L},
\bu^{\parallel}
\right\rangle
\]
and
\[
\beta_{T:L}
=
\left\|
\Delta \bs_{T:L}
-
\left\langle
\Delta \bs_{T:L},
\bu^{\parallel}
\right\rangle
\bu^{\parallel}
\right\|.
\]
Therefore, when \(\beta_{T:L}>0\), for some \(\bu^{\perp}\perp \bu^{\parallel}\), we can write
\[
\Delta \bs_{T:L}
=
-\alpha_{T:L}\bu^{\parallel}
+
\beta_{T:L}\bu^{\perp}
\]
When \(\beta_{T:L}=0\), the same expression holds by choosing any \(\bu^{\perp}\perp \bu^{\parallel}\).

By definition,
\[
\bs_L
=
\bs_T+\Delta \bs_{T:L}
\]
and
\[
\bs_T=\|\bs_T\|\bu^{\parallel}
\]
Thus,
\[
\bs_L
=
(\|\bs_T\|-\alpha_{T:L})\bu^{\parallel}
+
\beta_{T:L}\bu^{\perp}.
\]
Since \(\bu^{\parallel}\perp \bu^{\perp}\),
\[
\|\bs_L\|^2
=
(\|\bs_T\|-\alpha_{T:L})^2+\beta_{T:L}^2.
\]
Moreover, by Definition~\ref{def:shift_delta_and_correction_deficit},
\[
d_{T:L}
=
\frac{\langle \bs_L,\bu^{\parallel}\rangle}{\|\bs_T\|}
=
\frac{\|\bs_T\|-\alpha_{T:L}}{\|\bs_T\|}.
\]
Therefore,
\[
\|\bs_T\|-\alpha_{T:L}
=
d_{T:L}\|\bs_T\|.
\]
Substituting this into the previous expression gives Eq.~\eqref{eq:last_state_shift_decomposition}.
\end{proof}

\subsection{Proof of Proposition~\ref{prop:mot_reduces_translation_shift}}
\label{app:proof:mot_reduces_translation_shift}

\begin{proof}
Fix the channel set \(\mathcal C\), and let
\[
T := \max \mathcal I^{\mathrm{Tgt}}(\mathcal C)
\]
be the highest target translation layer in the selected channel window. For any translator realization, denote by \(\widehat{\bK\bV}^{\mathrm{Tgt},\mathrm{Ctx}}_{\mathcal C}\) the translated target-side context cache injected into the target model, and let
\[
\bs_T
=
\widehat{\bh}_T^{\mathrm{Tgt}}
-
\bh_T^{\mathrm{Tgt}}
\]
be the translation shift at layer \(T\), where \(\widehat{\bh}_T^{\mathrm{Tgt}}\) is the hidden states induced by the translated cache and \(\bh_T^{\mathrm{Tgt}}\) is the hidden states induced by the native target cache.

We compare two translator classes. The first is the class of single backbone translators. For a token-level source-cache input \(\bz_u\), a single backbone translator produces
\[
\operatorname{Tr}_{\mathrm{Single}}(\bz_u).
\]
The same translation rule is applied to every token \(u\).

The second is the class of \textsf{Mixture-of-Translators}. For the same token-level input \(\bz_u\), it produces
\[
\operatorname{Tr}_{\mathrm{MoT}}(\bz_u)
=
\sum_{m\in\mathcal T_{\mathrm{Top}\text{-}K}(u)}
\widetilde g_m(\bz_u)\,
\operatorname{Tr}_m(\bz_u),
\]
where \(\widetilde g_m(\bz_u)\) denotes the normalized routing weight after Top-\(K\) selection.
This class contains the single backbone translator as a special case. Indeed, choose one translator index \(m_0\), set
\[
\operatorname{Tr}_{m_0}
=
\operatorname{Tr}_{\mathrm{Single}},
\]
and choose the routing scores so that, for every token \(u\),
\[
g_{m_0}(\bz_u)=1,
\qquad
g_m(\bz_u)=0
\quad
\text{for all } m\neq m_0.
\]
Then \(\mathcal T_{\mathrm{Top}\text{-}K}(u)\) contains \(m_0\), the normalized routing weight satisfies
\[
\widetilde g_{m_0}(\bz_u)=1,
\]
and therefore
\[
\operatorname{Tr}_{\mathrm{MoT}}(\bz_u)
=
\operatorname{Tr}_{m_0}(\bz_u)
=
\operatorname{Tr}_{\mathrm{Single}}(\bz_u)
\]
for every token \(u\). Hence every cache translation produced by a single backbone translator can also be produced by a \textsf{Mixture-of-Translators}.

Now consider the translation-shift objective
\[
\mathcal S(\operatorname{Tr})
:=
\left\|
\bs_T(\operatorname{Tr})
\right\|^2,
\]
where \(\bs_T(\operatorname{Tr})\) denotes the shift induced at layer \(T\) when the translated cache is generated by translator \(\operatorname{Tr}\). Let
\[
\operatorname{Tr}_{\mathrm{Single}}^\star
\in
\arg\min_{\operatorname{Tr}_{\mathrm{Single}}}
\left\|
\bs_T(\operatorname{Tr}_{\mathrm{Single}})
\right\|^2
\]
be an optimal single backbone translator, and let
\[
\operatorname{Tr}_{\mathrm{MoT}}^\star
\in
\arg\min_{\operatorname{Tr}_{\mathrm{MoT}}}
\left\|
\bs_T(\operatorname{Tr}_{\mathrm{MoT}})
\right\|^2
\]
be an optimal \textsf{Mixture-of-Translators}.

Since the \mot translator class contains the single backbone translator class as a subset, minimizing the same nonnegative objective over the \mot class cannot yield a larger optimum value. Therefore,
\[
\left\|
\bs_T(\operatorname{Tr}_{\mathrm{MoT}}^\star)
\right\|^2
\le
\left\|
\bs_T(\operatorname{Tr}_{\mathrm{Single}}^\star)
\right\|^2.
\]
Taking square roots on both sides gives
\[
\left\|
\bs_T(\operatorname{Tr}_{\mathrm{MoT}}^\star)
\right\|
\le
\left\|
\bs_T(\operatorname{Tr}_{\mathrm{Single}}^\star)
\right\|.
\]
Thus, the optimal translator mixture cannot induce a larger translation shift than the optimal single backbone translator.

It remains to show when the inequality becomes strict. Suppose there exists a token-wise \mot realization \(\widetilde{\operatorname{Tr}}_{\mathrm{MoT}}\) such that the induced shift at the highest target translation layer satisfies
\[
\left\|
\bs_T(\widetilde{\operatorname{Tr}}_{\mathrm{MoT}})
\right\|
<
\left\|
\bs_T(\operatorname{Tr}_{\mathrm{Single}}^\star)
\right\|.
\]
This condition means that token-wise routing realizes a cache translation that cannot be matched by the best single translator under the translation-shift objective. Since \(\operatorname{Tr}_{\mathrm{MoT}}^\star\) is optimal over the \mot class, we have
\[
\left\|
\bs_T(\operatorname{Tr}_{\mathrm{MoT}}^\star)
\right\|
\le
\left\|
\bs_T(\widetilde{\operatorname{Tr}}_{\mathrm{MoT}})
\right\|.
\]
Combining the two inequalities yields
\[
\left\|
\bs_T(\operatorname{Tr}_{\mathrm{MoT}}^\star)
\right\|
<
\left\|
\bs_T(\operatorname{Tr}_{\mathrm{Single}}^\star)
\right\|.
\]
Therefore, whenever input-dependent routing provides a translation that is not representable by any single backbone translator and yields a smaller shift at layer \(T\), the optimal \textsf{Mixture-of-Translators} strictly reduces the translation shift.

This proves that \mot weakly dominates a single backbone translator with respect to the translation shift \(\|\bs_T\|\), and strictly improves it whenever token-wise specialization provides a better cache translation for the selected channel window.
\end{proof}

\section{Proposed Method Details}
\label{app:proposed_method_details}

\subsection{Context Correction Loss}
\label{app:context_correction_loss}

In practice, instead of explicitly storing and comparing hidden states, we use a heuristic that compares the KV cache formed during replay with the native KV cache. Thus, in implementation, we use the following KV-based form:
\begin{equation}
\sum_{\ell=T_{\mathrm{Start}}+1}^{L}
\left(
\left\|
\widehat{\bK}_\ell^{\mathrm{Tgt}}
-
\bK_\ell^{\mathrm{Tgt}}
\right\|^2
+
\left\|
\widehat{\bV}_\ell^{\mathrm{Tgt}}
-
\bV_\ell^{\mathrm{Tgt}}
\right\|^2
\right).
\label{eq:context_correction_kv}
\end{equation}

\begin{proposition}[Last-State Shift Reduction by \textsf{Context Correction Loss}]
\label{prop:context_correction_kv_bound}
Suppose that the stacked final-layer key/value projection of the target model has full column rank. Then, reducing the final-layer term of Eq.~\eqref{eq:context_correction_kv} provides a training signal that reduces an upper bound on the last-state shift \(\|\bs_L\|\).
\end{proposition}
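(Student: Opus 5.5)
The plan is to express the final-layer key/value tensors as a linear map of the hidden states that enter the last layer. The full-column-rank assumption then lets us invert this map on its range, giving a bound of the form $\|\bs\|\le c\,\|\text{KV error}\|$. Once that is in hand, we transport the bound to $\bs_L$ using the residual-recurrence argument from the proof of Proposition~\ref{prop:source_propagation_error}.

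\emph{Step 1: write the last-layer key/value as a linear image.} In the target model, the final-layer keys and values are produced by linear projections of the (normalized) input to layer $L$. Write $\bK_L=\bW_K\,\bh_{L-1}$ and $\bV_L=\bW_V\,\bh_{L-1}$, and absorb any normalization into the local Lipschitz region as done in Assumption~\ref{assump:residual_lipschitz}. Stack the two projections into $\bW:=\begin{pmatrix}\bW_K\\ \bW_V\end{pmatrix}$. Then
\[
\|\widehat{\bK}_L-\bK_L\|^2+\|\widehat{\bV}_L-\bV_L\|^2=\|\bW(\widehat{\bh}_{L-1}-\bh_{L-1})\|^2=\|\bW\bs_{L-1}\|^2 .
\]

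\emph{Step 2: invert the stacked projection.} Because $\bW$ has full column rank, its smallest singular value satisfies $\sigma_{\min}(\bW)>0$. This gives $\|\bW\bs_{L-1}\|\ge\sigma_{\min}(\bW)\|\bs_{L-1}\|$, hence
\[
\|\bs_{L-1}\|\le \sigma_{\min}(\bW)^{-1}\bigl(\|\widehat{\bK}_L-\bK_L\|^2+\|\widehat{\bV}_L-\bV_L\|^2\bigr)^{1/2}.
\]

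\emph{Step 3: propagate one layer.} Apply the one-step recurrence bound from the proof of Proposition~\ref{prop:source_propagation_error}, namely $\|\bs_L\|\le(1+\delta)\|\bs_{L-1}\|$. This yields
\[
\|\bs_L\|\le \frac{1+\delta}{\sigma_{\min}(\bW)}\sqrt{\mathcal E_L},
\]
where $\mathcal E_L$ denotes the final-layer term of Eq.~\eqref{eq:context_correction_kv}. The right-hand side is monotone in $\mathcal E_L$, and the constant factor depends only on the fixed target weights. Consequently, any decrease of $\mathcal E_L$ during training lowers this upper bound on $\|\bs_L\|$, which is exactly the claim.

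\emph{Main obstacle: the indexing convention.} The hardest part is matching the cache index to the hidden-state index. If the layer-$L$ KV cache is computed from $\bh_L$ rather than from $\bh_{L-1}$, Step 3 is unnecessary, and we get directly $\|\bs_L\|\le\sigma_{\min}(\bW)^{-1}\sqrt{\mathcal E_L}$. If there is a pre-attention LayerNorm, we need a lower Lipschitz (co-Lipschitz) constant for it on the relevant region, which is an assumption we would add. We would first try the cleanest convention and treat normalization as absorbed, stating explicitly that the bound holds up to the layer-indexing choice.

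Positional rotations on the keys do not cause trouble: RoPE is orthogonal, so it preserves norms.
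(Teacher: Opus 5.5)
Your proposal is correct and matches the paper's argument. The paper likewise writes the final-layer KV matching error as $\|\bW_L^{KV}\bs_L\|^2$ for the fixed stacked projection, and uses its full column rank to conclude that this error controls $\|\bs_L\|$ with a constant that does not change during training; your $\sigma_{\min}(\bW)^{-1}$ makes that constant explicit. The paper adopts the convention $\bK_L=\bW_L^K\bh_L$, $\bV_L=\bW_L^V\bh_L$, so your ``direct'' variant is exactly its proof, and your Step~3 is only needed under the $\bh_{L-1}$ convention (there it also relies on Assumption~\ref{assump:residual_lipschitz}, which this proposition does not list as a hypothesis).
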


\begin{proof}
Let the final-layer key and value projection matrices be denoted by
\(\bW_L^K\) and \(\bW_L^V\), respectively, and define the stacked final-layer
key/value projection as
\[
\bW_L^{KV}
:=
\begin{bmatrix}
\bW_L^K\\
\bW_L^V
\end{bmatrix}.
\]
The assumption in Proposition~\ref{prop:context_correction_kv_bound}
means that \(\bW_L^{KV}\) has full column rank.

At each layer, the native key and value caches are fixed linear projections of the native hidden state:
\[
\bK_\ell^{\mathrm{Tgt}}
=
\bW_\ell^K \bh_\ell^{\mathrm{Tgt}},
\qquad
\bV_\ell^{\mathrm{Tgt}}
=
\bW_\ell^V \bh_\ell^{\mathrm{Tgt}}.
\]
Similarly, the replayed key and value caches are given by
\[
\widehat{\bK}_\ell^{\mathrm{Tgt}}
=
\bW_\ell^K \widehat{\bh}_\ell^{\mathrm{Tgt}},
\qquad
\widehat{\bV}_\ell^{\mathrm{Tgt}}
=
\bW_\ell^V \widehat{\bh}_\ell^{\mathrm{Tgt}}.
\]

By the definition of the last-state shift,
\[
\bs_L
=
\widehat{\bh}_L^{\mathrm{Tgt}}
-
\bh_L^{\mathrm{Tgt}}.
\]
Therefore, at the final layer,
\[
\widehat{\bK}_L^{\mathrm{Tgt}}
-
\bK_L^{\mathrm{Tgt}}
=
\bW_L^K \bs_L,
\qquad
\widehat{\bV}_L^{\mathrm{Tgt}}
-
\bV_L^{\mathrm{Tgt}}
=
\bW_L^V \bs_L.
\]
Therefore, the final-layer KV matching error is
\[
\left\|
\bW_L^K \bs_L
\right\|^2
+
\left\|
\bW_L^V \bs_L
\right\|^2
=
\left\|
\begin{bmatrix}
\bW_L^K\\
\bW_L^V
\end{bmatrix}
\bs_L
\right\|^2.
\]

Since
\[
\begin{bmatrix}
\bW_L^K\\
\bW_L^V
\end{bmatrix}
\]
is fixed by the target model and has full column rank, any nonzero shift \(\bs_L\neq 0\) cannot vanish in the final-layer key/value space.

Furthermore, because this projection is a fixed linear map that does not change during training, the proportional relationship between the final-layer KV matching error and the last-state shift also remains fixed throughout training. Therefore, reducing the final-layer KV matching error is equivalent to reducing a fixed upper bound on \(\|\bs_L\|\). Since Eq.~\eqref{eq:context_correction_kv} includes this final-layer term, the KV-based implementation is justified as a heuristic for reducing the last-state shift.
\end{proof}

\subsection{Backbone Translator}
\label{app:backbone-translator}

The backbone translator in this work is a module that directly translates the context \(\bK\bV\) cache of the source model into the cache space of the target model for a selected channel set. We adopt the cross-attention architecture from prior work~\cite{lsc}. However, unlike shared-latent-space approaches, our method does not introduce an additional shared latent space; instead, it directly performs cache translation between the layer pairs specified by the channel set.

The backbone translator independently translates the key cache and the value cache. It first projects the source cache block into a common translator space, and then combines layer-wise cache information within the channel set through a cross-attention-based translator module. The resulting representation is then projected into the target cache space to produce the translated context \(\bK\bV\) cache. This architecture can be applied not only when the actual layer indices are contiguous, but also when the channel set is non-contiguous.

This section provides the simplified mathematical formulation of the backbone translator used inside \mot. The backbone translator follows the cross-attention-based translation pattern of prior work~\cite{lsc}, but does not introduce an additional shared latent space. Instead, it directly translates the source context cache block into the target cache space for the selected channel set.

Given a fixed channel set \(\mathcal C\), let
\[
\bX_{\mathcal C}^{\mathrm{Src},\mathrm{Ctx}}
=
(\bX_1^{\mathrm{Src},\mathrm{Ctx}},\dots,\bX_N^{\mathrm{Src},\mathrm{Ctx}})
\]
denote the source-side cache block selected by \(\mathcal C\), where \(N=|\mathcal C|\). Here, the index \(r\in\{1,\dots,N\}\) refers only to the slot order inside the selected channel window, not to an absolute model layer index. Key and value caches are translated independently, so the following description applies to each \(X\in\{K,V\}\).

First, each source cache slot is normalized and projected into the translator hidden space:
\[
\bz_r^X
=
\phi\!\left(
\bW_{\mathrm{In}}^X
\operatorname{LN}_{\mathrm{In}}^X
\left(
\bX_r^{\mathrm{Src},\mathrm{Ctx}}
\right)
\right),
\qquad
r=1,\dots,N,
\]
where \(\phi\) denotes the GELU activation. The translator then performs recurrent cross-attention over the projected window. It maintains a running hidden states \(\bh^X\), initialized from the first projected slot:
\[
\bh_0^X = \bz_1^X.
\]
For each slot \(r\), the running hidden states attend to the corresponding projected source cache slot:
\[
\bh_r^X
=
\operatorname{CAB}_r^X
\left(
\bh_{r-1}^X,
\bz_r^X
\right),
\qquad
r=1,\dots,N.
\]
The cross-attention block is defined as
\[
\ba_r^X
=
\operatorname{CrossAttn}
\left(
\operatorname{LN}_{Q}(\bh_{r-1}^X),
\operatorname{LN}_{C}(\bz_r^X),
\operatorname{LN}_{C}(\bz_r^X)
\right),
\]
\[
\widetilde{\bh}_r^X
=
\bh_{r-1}^X+\ba_r^X,
\]
\[
\bh_r^X
=
\widetilde{\bh}_r^X
+
\operatorname{FFN}
\left(
\operatorname{LN}_{F}(\widetilde{\bh}_r^X)
\right).
\]
Thus, in the notation \(\operatorname{CrossAttn}(Q,K,V)\), the query \(Q\) is the normalized running hidden state, while both the key \(K\) and value \(V\) are the normalized projected source cache slot. Intuitively, the running hidden states accumulates information from previously visited slots, and each cross-attention step lets it retrieve the information needed from the current source slot. This cross-layer scan can be repeated for multiple translator stages, following the recurrent translator pattern of LSC~\cite{lsc}.

After the final scan, the hidden states collected across the channel window are concatenated and projected into the target cache space:
\[
\bF^X
=
\operatorname{Concat}
\left(
\bh_1^X,\dots,\bh_N^X
\right),
\]
\[
\widehat{\bX}_{\mathcal C}^{\mathrm{Tgt},\mathrm{Ctx}}
=
\phi\!\left(
\bW_{\mathrm{Out}}^X
\operatorname{LN}_{\mathrm{Out}}^X
\left(
\bF^X
\right)
\right),
\qquad
X\in\{K,V\}.
\]
The translated key and value blocks are then combined as
\[
\widehat{\bK\bV}_{\mathcal C}^{\mathrm{Tgt},\mathrm{Ctx}}
=
\left(
\widehat{\bK}_{\mathcal C}^{\mathrm{Tgt},\mathrm{Ctx}},
\widehat{\bV}_{\mathcal C}^{\mathrm{Tgt},\mathrm{Ctx}}
\right).
\]

This formulation abstracts away implementation-level details such as tensor layout changes and reshaping. The important point is that the backbone translator uses recurrent cross-attention to fuse information across the selected source cache window and directly emits the corresponding target cache block. It serves as the basic translation unit used by \mot: each expert in \mot has the same backbone structure, while the routing function determines how the outputs of multiple backbone translators are selected or combined for each token.

\subsection{Pre-Translation Phase: Channel Mapping}
\label{app:channel-mapping}

Channel mapping is the pre-translation stage that determines the set of layer pairs
\(\mathcal C\) over which cache translation is performed between the source model
and the target model. The main focus of this work is to learn accurate cache
translation for a given channel set, while the bi-objective problem of jointly
optimizing the size of the channel set is left outside the scope of this paper.
Therefore, we assume that \(|\mathcal C|\) is given, and discuss the corresponding
parameter search in the experimental section.

Before specifying the channel-mapping rule, we first conduct a preliminary
layer-mapping analysis to examine whether layers of heterogeneous models exhibit
a meaningful correspondence. Specifically, we prefill the same OpenWebText
samples into both models, compute the mean representations of the key and value
caches produced at each layer, and measure the layer-wise similarity for all
source--target layer combinations. Fig.~\ref{fig:layer_mapping} visualizes the
resulting similarity matrices for several heterogeneous model pairs.

\begin{figure*}[t!]
    \centering

    \begin{minipage}[t]{0.19\textwidth}
        \centering
        \includegraphics[width=3cm,height=4cm]{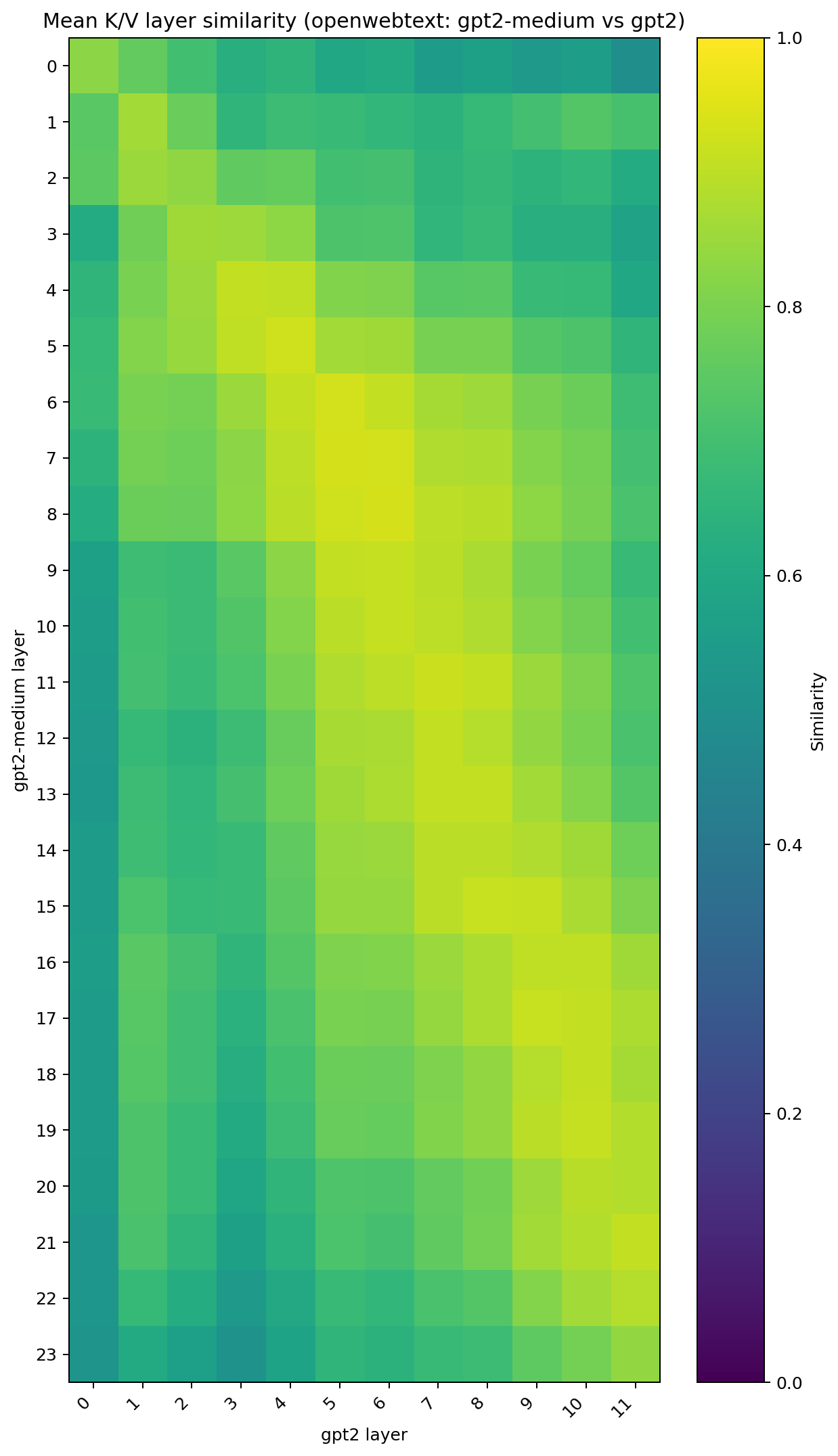}
        {\small (a) \texttt{gpt2-medium}\newline↔\,\texttt{gpt2}\par}
    \end{minipage}
    \hfill
    \begin{minipage}[t]{0.19\textwidth}
        \centering
        \includegraphics[width=3cm,height=4cm]{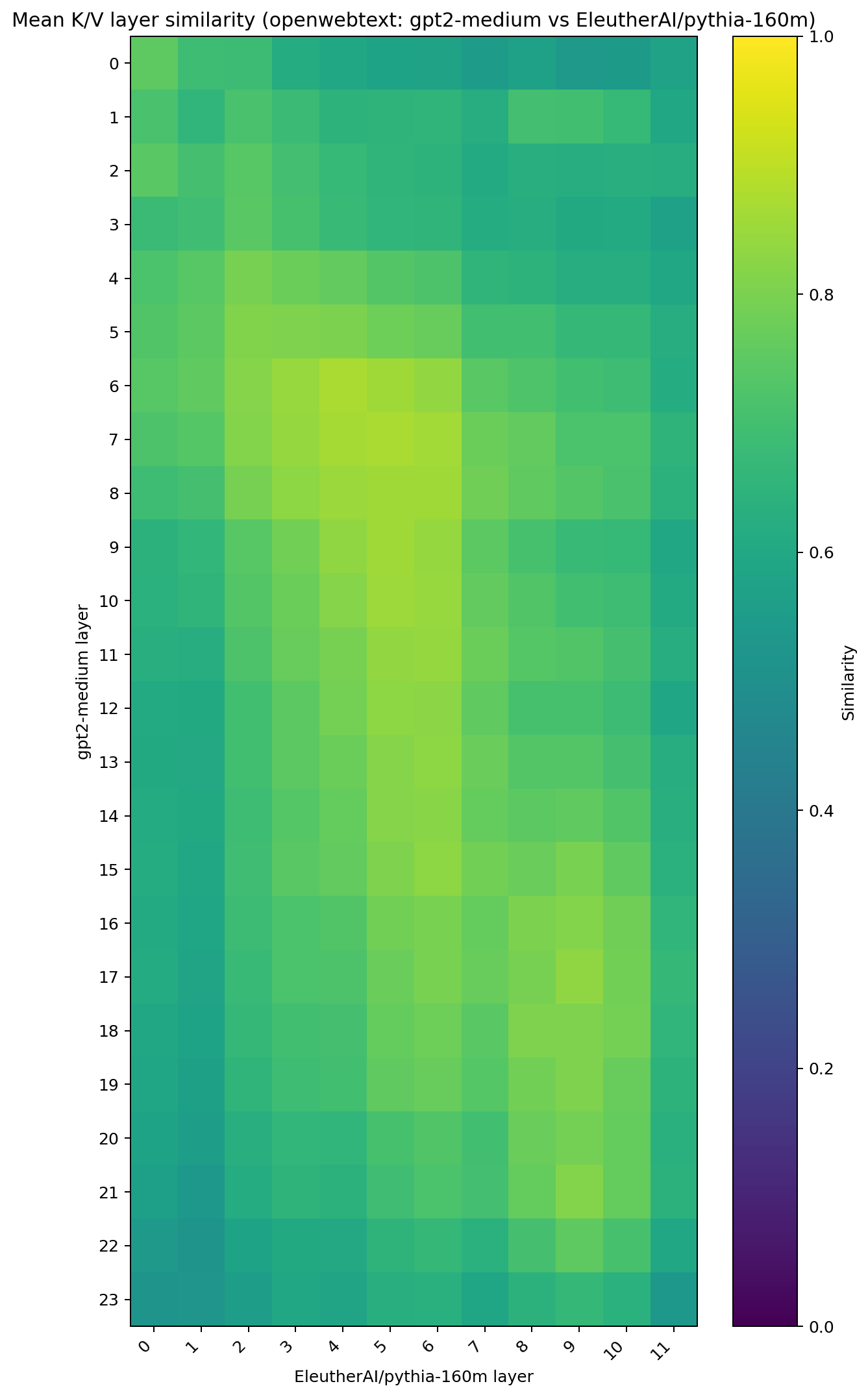}
        {\small (b) \texttt{gpt2-medium}\newline↔\,\texttt{pythia-160m}\par}
    \end{minipage}
    \hfill
    \begin{minipage}[t]{0.19\textwidth}
        \centering
        \includegraphics[width=3cm,height=4cm]{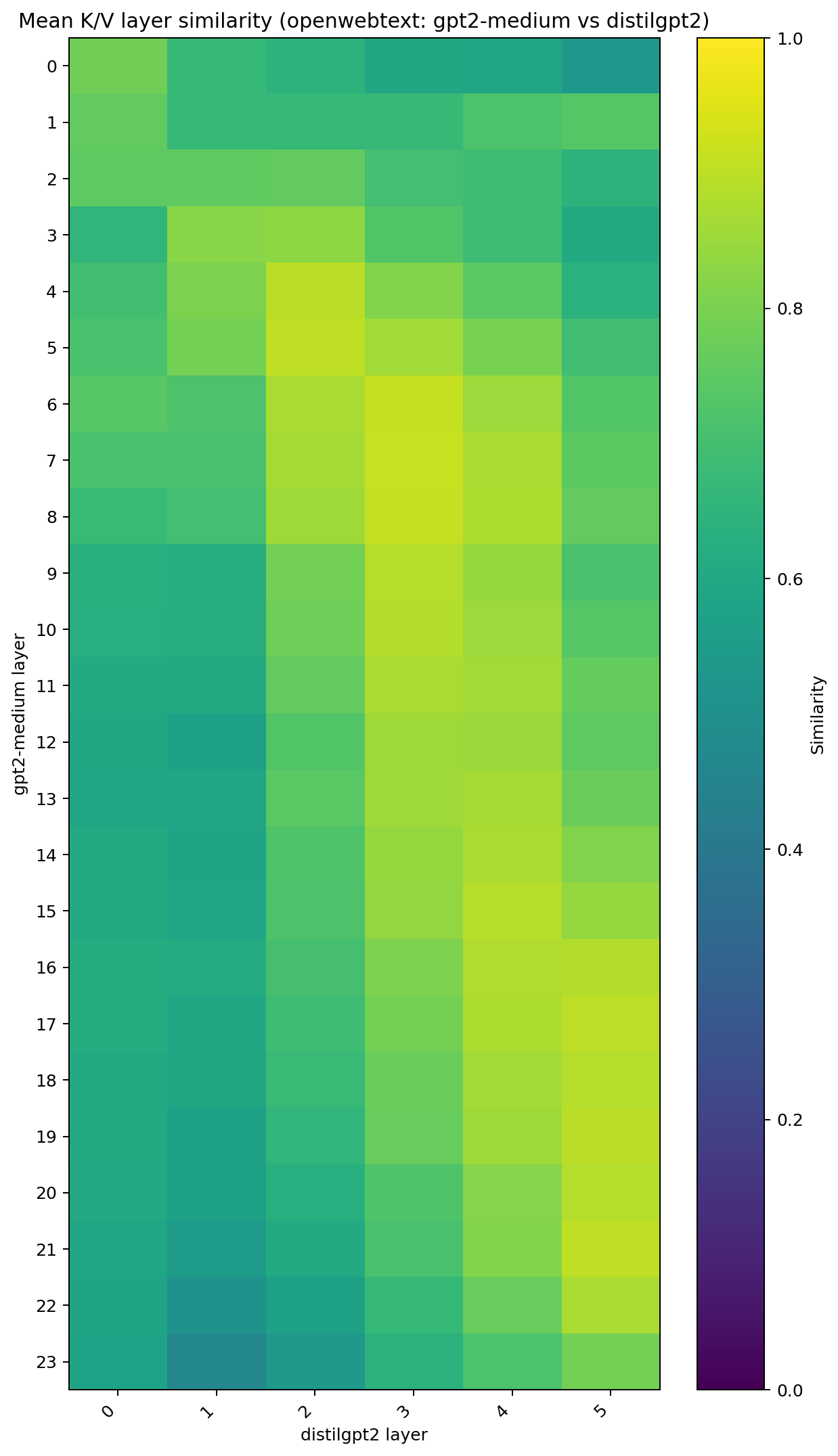}
        {\small (c) \texttt{gpt2-medium}\newline↔\,\texttt{distilgpt2}\par}
    \end{minipage}
    \hfill
    \begin{minipage}[t]{0.19\textwidth}
        \centering
        \includegraphics[width=3cm,height=4cm]{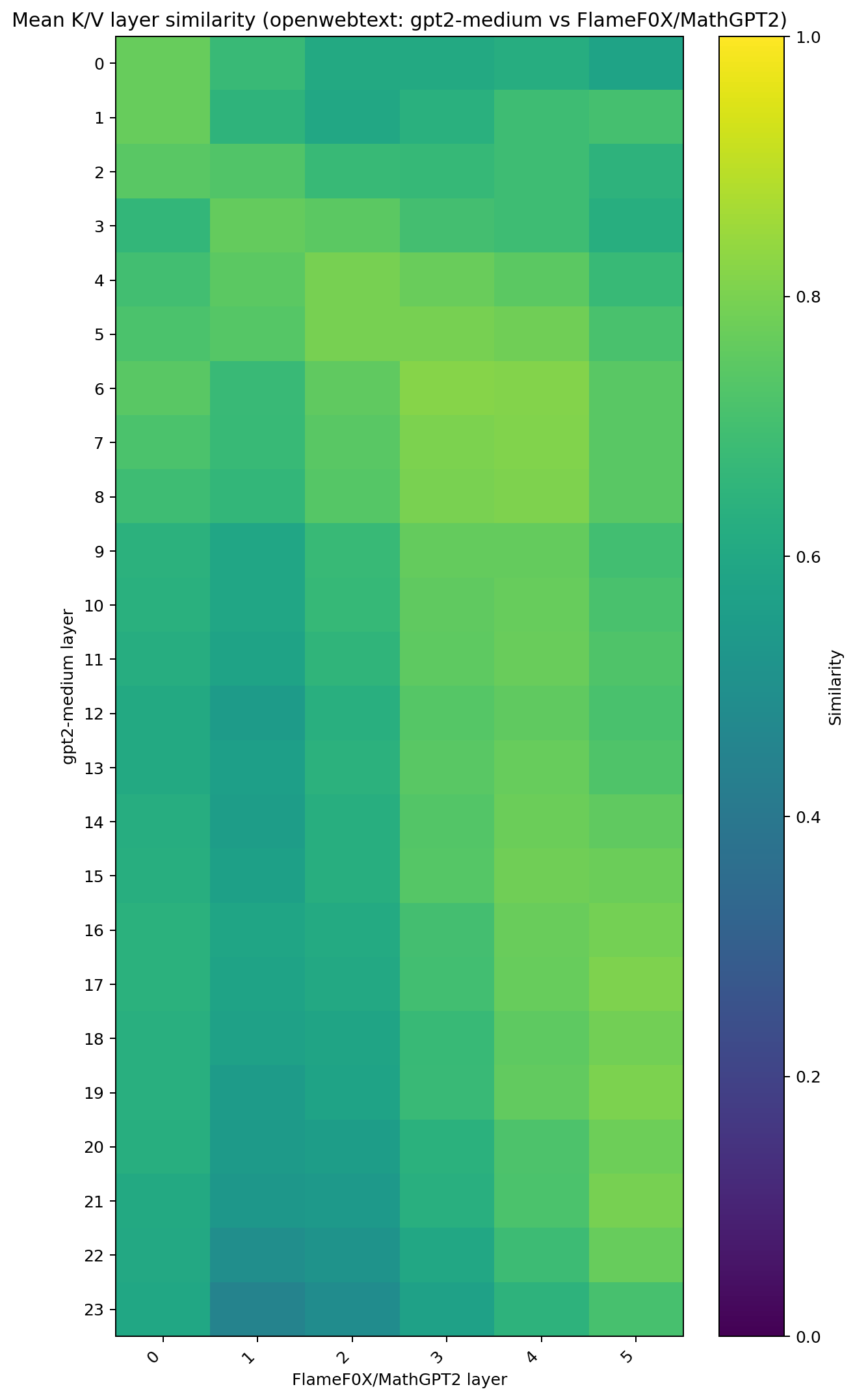}
        {\small (d) \texttt{gpt2-medium}\newline↔\,\texttt{MathGPT2}\par}
    \end{minipage}
    \hfill
    \begin{minipage}[t]{0.19\textwidth}
        \centering
        \includegraphics[width=3cm,height=4cm]{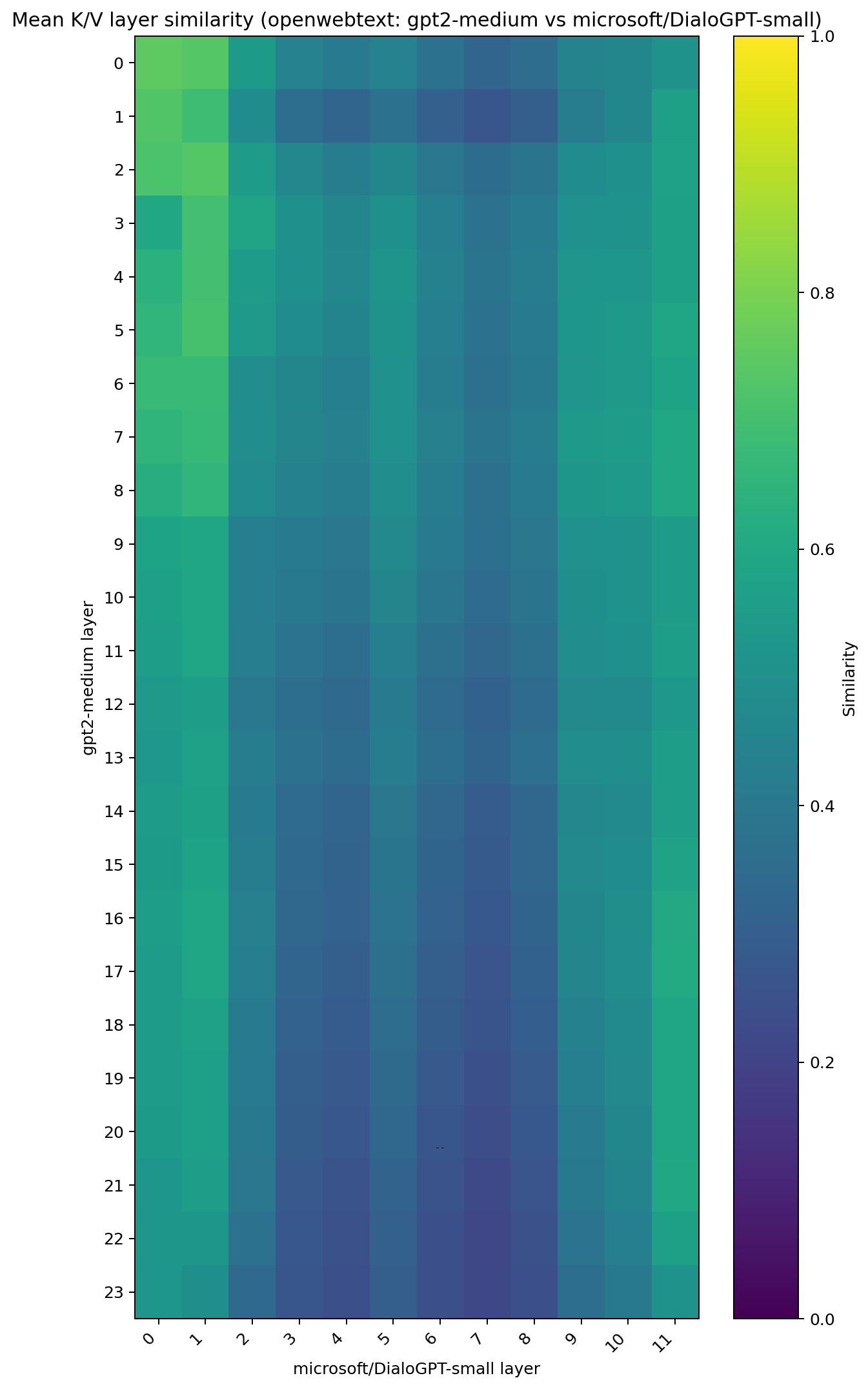}
        {\small (e) \texttt{gpt2-medium}\newline↔\,\texttt{DialoGPT-small}\par}
    \end{minipage}

    \caption{Preliminary layer-mapping analysis.
    Layer-wise similarity between heterogeneous model architectures, computed from
    the mean key/value cache representations obtained on the same OpenWebText
    samples.}
    \label{fig:layer_mapping}
\end{figure*}

Fig.~\ref{fig:layer_mapping}(a)--(d) show that, even when the source and target
models have different numbers of layers, high similarity often appears between
layers with similar relative depths. In other words, lower-depth source layers
tend to form mean KV representations that are more similar to lower-depth target
layers, while higher-depth source layers are more similar to higher-depth target
layers. This trend appears as a spectral diagonal pattern in the similarity
matrix, suggesting that relative depth provides a useful first-order principle
for constructing channel mappings.

In particular, Fig.~\ref{fig:layer_mapping}(b) shows the result between
\texttt{gpt2-medium} and \texttt{pythia-160m}. Although the two models do not
share the same training data or architecture family, they still exhibit high
similarity between layer pairs with similar depth ratios. This suggests that the
depth-ratio relationship is not merely a phenomenon within the same model family,
but may also hold under more general forms of model heterogeneity.

By contrast, Fig.~\ref{fig:layer_mapping}(e) illustrates an exceptional case.
When models differ substantially in their instruction-following behavior or
training data distribution, the spectral diagonal pattern can become less
pronounced. In such cases, a fixed depth-ratio rule may not always yield the
optimal channel mapping. We therefore treat depth-ratio mapping as a simple and
effective default strategy, while leaving adaptive channel mapping under stronger
data or instruction mismatch as future work.

Motivated by this preliminary analysis, we use Depth-Ratio Mapping~\cite{c2c} as
the default channel-mapping rule. Intuitively, even when two models differ, lower
layers tend to encode lower-level representations, whereas higher layers tend to
encode higher-level representations. Therefore, it is natural to connect layer
pairs that have similar relative depths.

Depth-Ratio Mapping connects source layers and target layers in order according
to their relative depth. That is, it aligns the source model from bottom to top
with the target model from bottom to top, so that lower-depth source layers are
connected to lower-depth target layers, and higher-depth source layers are
connected to higher-depth target layers. Let the resulting ordered channel
sequence be
\[
\mathcal C_{\mathrm{DR}}
=
(c_1,c_2,\dots,c_m).
\]
Here, each \(c_r\) denotes the \(r\)-th depth-ratio layer pair, and
\[
m=\min(L^{\mathrm{Src}},L^{\mathrm{Tgt}})
\]
is the length of the sequence.

Given \(|\mathcal C|\), we do not search over all possible subsets of
\(\mathcal C_{\mathrm{DR}}\). Instead, we use only contiguous channel windows as
candidates:
\[
\mathfrak C_{\mathrm{DR}}(|\mathcal C|)
=
\left\{
\{c_q,c_{q+1},\dots,c_{q+|\mathcal C|-1}\}
\,\middle|\,
1\le q\le m-|\mathcal C|+1
\right\}.
\]
This design is motivated by Proposition~\ref{prop:source_propagation_error} and
Proposition~\ref{prop:correction_deficit_coefficient_bound}, which show that
translating at too early layers can increase propagation error, whereas
translating at too late layers can reduce correction opportunity. Therefore,
instead of selecting sparse and disconnected channels, choosing a single
contiguous window allows us to move the start and end points of the window
together and more stably find a balance between the two error dynamics.

The final channel set is selected as the one that minimizes the validation loss
among the sliding-window candidates. However, training the full \mot for every
candidate channel window is computationally expensive. Therefore, during the
channel optimization stage, we use a single backbone translator as a proxy to
quickly evaluate the validation loss of each candidate. Since this proxy uses the
same backbone structure as the translators inside \mot, it is suitable for
measuring the translation difficulty induced by channel location. The detailed
structure of the backbone translator is provided in
Appendix~\ref{app:backbone-translator}:
\[
\mathcal C_{\star}^{\mathrm{Src}\to\mathrm{Tgt}}
=
\arg\min_{\mathcal C'\in\mathfrak C_{\mathrm{DR}}(|\mathcal C|)}
\mathcal L_{\mathrm{Val}}^{\mathrm{Src}\to\mathrm{Tgt}}(\mathcal C').
\]

\subsection{Post-Translation Phase: Cache Replay}
\label{app:cache-replay}

Immediately after translation, we need to construct a cache replay procedure so that the selected translation result can be used in the actual long-context setting. After the translation phase, the source model's context cache has been converted into the cache space of the target model, and this translated result should be usable as the actual context cache of the target model. However, the translated cache provides only the context \(\bK\bV\) cache corresponding to the target translation layer set \(\mathcal I^{\mathrm{Tgt}}\). Therefore, in order to obtain a target-side context \(\bK\bV\) cache that can be used consistently during subsequent prompt prefill and completion generation, a cache replay procedure that passes the context through the target model again is required.

The most direct approach is to perform full attention at each target layer \(\ell\), while injecting the translated \(\bK\bV\) cache instead of the native \(\bK\bV\) cache only at the translated layers. Let \(N_{\mathrm{Ctx}}\) denote the context length, \(L^{\mathrm{Tgt}}\) the number of target layers, and \(d\) the attention dimension. Since full-attention-based replay allows each context query token to attend to all previous context key-value tokens, its computational complexity is approximately
\[
\mathcal O\!\left(L^{\mathrm{Tgt}}N_{\mathrm{Ctx}}^2d\right).
\]
In particular, replay is performed before prompt prefill in order to reconstruct the target-side context cache. Thus, for long contexts, the \(N_{\mathrm{Ctx}}^2\) term dominates the cost, and the need to pass through multiple target layers makes this procedure expensive.

To reduce this cost, we use \textsf{Top-\(S\) Source-Guided Sparse Attention}. Here, \(S_{\mathrm{Sparse}}\) denotes the number of source-guided key positions to be attended to at each context query position. The key idea is that, during target replay, we do not attend to all context key-value tokens again. Instead, we select only the token positions that the source model regarded as important in the given context and reuse them for target attention. In other words, the selection criterion for sparse attention is not newly learned inside the target model, but is derived from the attention pattern of the source model.

Let the context \(\bK\bV\) cache used for replay at target layer \(\ell\) be denoted by
\[
\overline{\bK\bV}^{\mathrm{Tgt},\mathrm{Ctx}}_{\ell}
=
\left(
\bar{\bK}^{\mathrm{Tgt},\mathrm{Ctx}}_{\ell},
\bar{\bV}^{\mathrm{Tgt},\mathrm{Ctx}}_{\ell}
\right).
\]
It is defined as
\[
\overline{\bK\bV}^{\mathrm{Tgt},\mathrm{Ctx}}_{\ell}
=
\begin{dcases}
\widehat{\bK\bV}^{\mathrm{Tgt},\mathrm{Ctx}}_{\ell},
&
\ell \in \mathcal I^{\mathrm{Tgt}},
\\
\bK\bV^{\mathrm{Tgt},\mathrm{Ctx}}_{\ell},
&
\ell \notin \mathcal I^{\mathrm{Tgt}}.
\end{dcases}
\]
That is, at the target translation layers, we inject the translated context \(\bK\bV\) cache, while at the remaining layers, we use the context \(\bK\bV\) cache computed by the native replay process of the target model.

Full-attention replay computes, for each context token position \(u\in\{1,\dots,N_{\mathrm{Ctx}}\}\),
\[
\bo^{\mathrm{Full}}_{\ell,u}
=
\operatorname{Attn}
\left(
\bQ^{\mathrm{Tgt}}_{\ell,u},
\bar{\bK}^{\mathrm{Tgt},\mathrm{Ctx}}_{\ell,1:u},
\bar{\bV}^{\mathrm{Tgt},\mathrm{Ctx}}_{\ell,1:u}
\right).
\]
Thus, each context query attends to the entire causal context
\[
\{1,\dots,u\}.
\]

In contrast, Top-\(S\) Source-Guided Sparse Attention first defines the source guidance layer corresponding to target layer \(\ell\) as
\[
\pi(\ell).
\]
Here, \(\pi(\ell)\) is a source layer map used to guide the sparse attention pattern of target layer \(\ell\). Let the attention score used as context replay guidance at source layer \(\pi(\ell)\) be
\[
\bA^{\mathrm{Src}}_{\pi(\ell)}
\in
\mathbb R^{N_{\mathrm{Ctx}}\times N_{\mathrm{Ctx}}}.
\]
We omit the head index, and when necessary, interpret this as an attention map averaged over heads. For each context query position \(u\), we define the \(S_{\mathrm{Sparse}}\) context key positions most strongly attended to by the source attention as
\[
\mathcal N_{\ell}(u)
=
\operatorname{Top-S}
\left(
\bA^{\mathrm{Src}}_{\pi(\ell),u,1:u},
S_{\mathrm{Sparse}}
\right).
\]
Then, the attention in target replay attends only to the selected source-guided sparse context token set, rather than to the entire context:
\[
\bo^{\mathrm{Sparse}}_{\ell,u}
=
\operatorname{Attn}
\left(
\bQ^{\mathrm{Tgt}}_{\ell,u},
\bar{\bK}^{\mathrm{Tgt},\mathrm{Ctx}}_{\ell,\mathcal N_{\ell}(u)},
\bar{\bV}^{\mathrm{Tgt},\mathrm{Ctx}}_{\ell,\mathcal N_{\ell}(u)}
\right).
\]
In other words, the reference set of full attention,
\[
\{1,\dots,u\},
\]
is replaced by the source-guided Top-\(S\) context position set
\[
\mathcal N_{\ell}(u).
\]

When \(S_{\mathrm{Sparse}}\ll N_{\mathrm{Ctx}}\), the attention cost of each target layer is reduced from
\[
\mathcal O(N_{\mathrm{Ctx}}^2d)
\]
to
\[
\mathcal O(N_{\mathrm{Ctx}}S_{\mathrm{Sparse}}d).
\]
Accordingly, the overall replay cost is reduced approximately as
\[
\mathcal O\!\left(L^{\mathrm{Tgt}}N_{\mathrm{Ctx}}^2d\right)
\quad\longrightarrow\quad
\mathcal O\!\left(L^{\mathrm{Tgt}}N_{\mathrm{Ctx}}S_{\mathrm{Sparse}}d\right).
\]
In implementation, for stability, some bottom layers can be kept as full-attention layers, while source-guided sparse attention is applied only to the remaining layers. Let \(L_{\mathrm{Full}}\) denote the number of bottom full-attention layers. Then, the attention reference set is defined as
\[
\mathcal N_{\ell}(u)
=
\begin{dcases}
\{1,\dots,u\},
&
\ell < L_{\mathrm{Full}},
\\
\operatorname{Top-S}
\left(
\bA^{\mathrm{Src}}_{\pi(\ell),u,1:u},
S_{\mathrm{Sparse}}
\right),
&
\ell \ge L_{\mathrm{Full}}.
\end{dcases}
\]

Overall, the cache replay procedure in the post-translation phase can be summarized as follows. First, the translated target context \(\bK\bV\) cache is injected into the target translation layers, while the remaining layers are reconstructed through the target model's native computation. Standard full-attention replay attends to all context tokens and therefore incurs a quadratic cost in \(N_{\mathrm{Ctx}}\). In contrast, the proposed Top-\(S\) Source-Guided Sparse Attention reduces the replay cost to nearly linear in the context length by attending only to the \(S_{\mathrm{Sparse}}\) context tokens selected by the source attention pattern. The target model then performs prompt prefill conditioned on the target-side context cache obtained through replay and finally generates the completion.

\subsection{Hidden States Translation: \moth}
\label{app:mot-h}

This section describes \moth, a memory-efficient variant of \mot. While \mot directly translates the source context KV cache into the target KV cache space, \moth translates intermediate activations and lets the target model restore the corresponding KV cache through its own fixed key and value projections. This design is inspired by HCache~\cite{hcache}, which observes that the KV cache of a transformer can be restored from the intermediate activation entering the attention module, instead of storing or transferring the full KV cache.

In Sections~\ref{subsec:context-correction-loss} and~\ref{sec:root_cause_analysis}, we use the term hidden states as an abstract representation for readability. Here, we use the more precise term intermediate activation to emphasize the connection to HCache. Let
\[
\ba^{\mathrm{Tgt}}
\]
denote a target-side intermediate activation used to construct the target KV cache. The key and value cache are deterministic projections of this activation:
\[
\bK^{\mathrm{Tgt}}
=
\bW^{K}\ba^{\mathrm{Tgt}},
\qquad
\bV^{\mathrm{Tgt}}
=
\bW^{V}\ba^{\mathrm{Tgt}}.
\]
Therefore, if a translator can produce a target-compatible intermediate activation
\[
\widehat{\ba}^{\mathrm{Tgt}},
\]
then the target model can recover the corresponding KV cache by applying its native key and value projections:
\[
\widehat{\bK}^{\mathrm{Tgt}}
=
\bW^{K}\widehat{\ba}^{\mathrm{Tgt}},
\qquad
\widehat{\bV}^{\mathrm{Tgt}}
=
\bW^{V}\widehat{\ba}^{\mathrm{Tgt}}.
\]
Thus, \moth does not translate key and value tensors separately. Instead, it translates a single activation representation and relies on the target model's own attention projections to construct the KV cache.

Formally, \moth replaces KV-to-KV translation with activation-to-activation translation. For a selected channel set \(\mathcal C\), the source-side intermediate activation block is translated into a target-compatible intermediate activation block:
\[
\widehat{\ba}_{\mathcal C}^{\mathrm{Tgt},\mathrm{Ctx}}
=
\operatorname{Tr}_{\moth}
\left(
\ba_{\mathcal C}^{\mathrm{Src},\mathrm{Ctx}}
\right).
\]
The target model then uses this translated activation block and restores the target cache through its native key and value projections:
\[
\widehat{\bK\bV}_{\mathcal C}^{\mathrm{Tgt},\mathrm{Ctx}}
=
\left(
\bW^{K}
\widehat{\ba}_{\mathcal C}^{\mathrm{Tgt},\mathrm{Ctx}},
\;
\bW^{V}
\widehat{\ba}_{\mathcal C}^{\mathrm{Tgt},\mathrm{Ctx}}
\right).
\]
In this sense, \moth keeps the same high-level translation pipeline as \mot, but changes the translated representation from KV cache to intermediate activation.

This design reduces memory usage because an intermediate activation is a single tensor, whereas a KV cache contains both key and value tensors. In the common transformer setting where the activation dimension and the combined key/value cache dimension are comparable, translating and storing intermediate activations instead of KV cache reduces the translated state size by roughly half. The KV cache is restored only when needed through the fixed projections already present in the target model. As a result, \moth can lower peak memory and transfer cost compared with \mot.

The benefit comes with a trade-off. Since \moth translates a single activation representation and relies on the target model to recover both key and value from it, it has less direct control over the target KV cache than \mot, which translates key and value separately. This can reduce the effective representation capacity of the translator and may make translation more difficult, especially when the source and target models differ substantially in capacity or architecture. Consistent with the empirical results, \moth improves memory efficiency but can show a marginal drop in accuracy compared with \mot.

Overall, \moth is a memory-oriented variant of \mot. While \mot prioritizes translation quality by directly matching the target KV cache space, \moth prioritizes memory efficiency by translating HCache-style intermediate activations and restoring KV cache through the target model's native projections.

\section{Benchmark Prompts}

This section summarizes the prompt format used for each benchmark.
The main goal of our evaluation is to measure whether a target model can use a long input through translated cache, instead of directly prefilling the full context.
Therefore, each prompt is designed to clearly separate long information from the short query.
Long and difficult input, such as a passage, abstract, article, or multi-document evidence, is placed in the context region.
The short question, answer cue, or choice instruction required by the target model is placed in the prompt region.
In the boxes below, the red context region denotes the part subject to cache translation, while the blue prompt and completion regions denote the part where the target model receives a short query and generates the answer.

The benchmarks are divided into two groups: closed-set QA and extractive QA.
Closed-set QA benchmarks have a fixed answer space, and the generated answer is matched to the valid answer set to compute accuracy.
Extractive QA benchmarks require the model to generate a short answer grounded in the passage, and the output is evaluated by token-level F1 against gold answers.

\subsection{Closed-set QA benchmarks}

We use BoolQ, PubMedQA, and MMLU-Redux as closed-set QA benchmarks.
Each dataset is represented with three regions: context, prompt, and completion.
The context contains the long input information, the prompt contains the short question or choice instruction, and the completion contains the generated answer expected to match one of the valid candidates.
The model is evaluated by accuracy after answer matching.

\begin{bluebox}[BoolQ prompt with sample]{width=\linewidth, valign=top, breakable}
\label{text:prompt_boolq_sample}

\noindent\textbf{Dataset.}
BoolQ is a binary QA benchmark where the model reads a passage and answers with either yes or no.

\vspace{0.8em}

\begin{translatedcontextbox}
{\ttfamily\small
Read the passage and answer the question with yes or no only.

\vspace{0.6em}
Passage: \(\langle\)Passage\(\rangle\)
}
\end{translatedcontextbox}

\begin{promptbox}
{\ttfamily\small
Question: \(\langle\)Question\(\rangle\)\\
Answer:
}
\end{promptbox}

\begin{completionbox}
{\ttfamily\small
yes \quad or \quad no
}
\end{completionbox}

\tcbline
\noindent\textbf{Sample.}

\vspace{0.6em}

\begin{translatedcontextbox}
{\ttfamily\small
Read the passage and answer the question with yes or no only.

\vspace{0.6em}
Passage: Windows Movie Maker is a discontinued video editing software by Microsoft. It is a part of Windows Essentials software suite and offers the ability to create and edit videos as well as to publish them on OneDrive, Facebook, Vimeo, YouTube, and Flickr.
}
\end{translatedcontextbox}

\begin{promptbox}
{\ttfamily\small
Question: is windows movie maker part of windows essentials\\
Answer:
}
\end{promptbox}

\begin{completionbox}
{\ttfamily\small
yes
}
\end{completionbox}

\end{bluebox}

\begin{bluebox}[PubMedQA prompt with sample]{width=\linewidth, valign=top, breakable}
\label{text:prompt_pubmedqa_sample}

\noindent\textbf{Dataset.}
PubMedQA is a biomedical QA benchmark where the model reads an abstract and answers with yes, no, or maybe.

\vspace{0.8em}

\begin{translatedcontextbox}
{\ttfamily\small
Read the abstract and answer the biomedical research question with yes, no, or maybe only.

\vspace{0.6em}
Abstract: \(\langle\)Abstract\(\rangle\)
}
\end{translatedcontextbox}

\begin{promptbox}
{\ttfamily\small
Question: \(\langle\)Question\(\rangle\)\\
Answer:
}
\end{promptbox}

\begin{completionbox}
{\ttfamily\small
yes \quad or \quad no \quad or \quad maybe
}
\end{completionbox}

\tcbline
\noindent\textbf{Sample.}

\vspace{0.6em}

\begin{translatedcontextbox}
{\ttfamily\small
Read the abstract and answer the biomedical research question with yes, no, or maybe only.

\vspace{0.6em}
Abstract: Halofantrine is a newly developed antimalarial drug used for the treatment of Plasmodium falciparum malaria. The introduction of this drug has been delayed because of its possible side effects, and due to insufficient studies on adverse reactions in humans. \(\ldots\) Halofantrine has mild to moderate pathological effects on cochlea histology, and can be considered an ototoxic drug.
}
\end{translatedcontextbox}

\begin{promptbox}
{\ttfamily\small
Question: Is halofantrine ototoxic?\\
Answer:
}
\end{promptbox}

\begin{completionbox}
{\ttfamily\small
yes
}
\end{completionbox}

\end{bluebox}

\begin{bluebox}[MMLU-Redux prompt with sample]{width=\linewidth, valign=top, breakable}
\label{text:prompt_mmlu_redux_sample}

\noindent\textbf{Dataset.}
MMLU-Redux is a multiple-choice QA benchmark with subject-specific questions.
The model generates one answer label from four candidates.

\vspace{0.8em}

\begin{translatedcontextbox}
{\ttfamily\small
Subject: \(\langle\)Subject\(\rangle\)\\
Question: \(\langle\)Question\(\rangle\)
}
\end{translatedcontextbox}

\begin{promptbox}
{\ttfamily\small
Choices:\\
(A) \(\langle\)Choice 1\(\rangle\)\\
(B) \(\langle\)Choice 2\(\rangle\)\\
(C) \(\langle\)Choice 3\(\rangle\)\\
(D) \(\langle\)Choice 4\(\rangle\)\\
Answer:
}
\end{promptbox}

\begin{completionbox}
{\ttfamily\small
(A) \quad or \quad (B) \quad or \quad (C) \quad or \quad (D)
}
\end{completionbox}

\tcbline
\noindent\textbf{Sample.}

\vspace{0.6em}

\begin{translatedcontextbox}
{\ttfamily\small
Subject: abstract algebra\\
Question: Find all \(c\) in \(\mathbb{Z}_3\) such that \(\mathbb{Z}_3[x]/(x^3 + cx^2 + 1)\) is a field.
}
\end{translatedcontextbox}

\begin{promptbox}
{\ttfamily\small
Choices:\\
(A) 0\\
(B) 2\\
(C) 1\\
(D) 3\\
Answer:
}
\end{promptbox}

\begin{completionbox}
{\ttfamily\small
(B)
}
\end{completionbox}

\end{bluebox}

\subsection{Extractive QA benchmarks}

The extractive QA benchmarks include SQuAD-v1.1 and NewsQA.
Both datasets use the same context--prompt--completion structure.
A passage is placed in the context region, while a short question is placed in the prompt region.
The model generates a short answer, and the output is evaluated by token-level F1 against the gold answers.

Under this abstraction, SQuAD-v1.1 and NewsQA evaluate whether the translated context cache preserves the evidence needed for short-answer extraction.
Although the model produces the answer autoregressively, the task is not treated as open-ended generation; instead, it is evaluated as extractive QA.

\begin{bluebox}[Unified extractive QA prompt]{width=\linewidth, valign=top, breakable}
\label{text:prompt_unified_extractive_qa}

\noindent\textbf{Datasets.}
SQuAD-v1.1 and NewsQA use the same short-answer extractive QA prompt format.
HotpotQA-E in the long-context CAG case study follows the same format with a multi-document context.

\vspace{0.8em}

\begin{translatedcontextbox}
{\ttfamily\small
Read the passage and answer the question briefly. Use a short phrase from the passage when possible.

\vspace{0.6em}
Passage: \(\langle\)Passage / article / multi-document evidence\(\rangle\)
}
\end{translatedcontextbox}

\begin{promptbox}
{\ttfamily\small
Question: \(\langle\)Question\(\rangle\)\\
Answer:
}
\end{promptbox}

\begin{completionbox}
{\ttfamily\small
\(\langle\)Short answer\(\rangle\)
}
\end{completionbox}

\end{bluebox}

\begin{bluebox}[SQuAD-v1.1 sample]{width=\linewidth, valign=top, breakable}
\label{text:prompt_squad_sample}

\noindent\textbf{Dataset.}
SQuAD-v1.1 is an extractive QA benchmark where the model reads a single Wikipedia passage and generates a short answer grounded in the passage.

\vspace{0.8em}

\begin{translatedcontextbox}
{\ttfamily\small
Read the passage and answer the question briefly. Use a short phrase from the passage when possible.

\vspace{0.6em}
Passage: Architecturally, the school has a Catholic character. Atop the Main Building's gold dome is a golden statue of the Virgin Mary. Immediately in front of the Main Building and facing it, is a copper statue of Christ with arms upraised with the legend ``Venite Ad Me Omnes''. Next to the Main Building is the Basilica of the Sacred Heart. Immediately behind the basilica is the Grotto, a Marian place of prayer and reflection. \(\ldots\)
}
\end{translatedcontextbox}

\begin{promptbox}
{\ttfamily\small
Question: To whom did the Virgin Mary allegedly appear in 1858 in Lourdes France?\\
Answer:
}
\end{promptbox}

\begin{completionbox}
{\ttfamily\small
Saint Bernadette Soubirous
}
\end{completionbox}

\end{bluebox}

\begin{bluebox}[NewsQA sample]{width=\linewidth, valign=top, breakable}
\label{text:prompt_newsqa_sample}

\noindent\textbf{Dataset.}
NewsQA is an extractive QA benchmark where the model reads a news article and generates a short answer grounded in the article.

\vspace{0.8em}

\begin{translatedcontextbox}
{\ttfamily\small
Read the passage and answer the question briefly. Use a short phrase from the passage when possible.

\vspace{0.6em}
Passage: City officials said the new transportation plan will expand bus routes, add protected bike lanes, and increase late-night service. The first phase of construction is expected to begin next spring and will focus on downtown corridors. Supporters argue that the plan will reduce congestion, while critics are concerned about construction delays and costs.
}
\end{translatedcontextbox}

\begin{promptbox}
{\ttfamily\small
Question: When is the first phase of construction expected to begin?\\
Answer:
}
\end{promptbox}

\begin{completionbox}
{\ttfamily\small
next spring
}
\end{completionbox}

\end{bluebox}

\section{Additional Results}
\label{app:additional_results}

\subsection{Translator Training Configuration}
\label{app:translator-training-configuration}

All trainable translator-based methods are trained on OpenWebText using the same common optimization setup. We use sequences of 128 tokens, where the first 64 tokens define the context region and the remaining tokens define the prompt-side training signal. Unless otherwise specified, translators are trained for 500 steps with batch size 4, gradient accumulation 4, AdamW with learning rate \(10^{-4}\), weight decay \(0.01\), 50 warmup steps followed by cosine decay, gradient clipping at \(1.0\), and an OpenWebText shuffle buffer of 50K examples. These shared hyperparameters were empirically selected across multiple experiments; method-specific hyperparameters follow the corresponding original papers or open-source implementations.

\paragraph{Compute Worker.}
All training and inference experiments were conducted on a compute worker equipped with two NVIDIA H100 GPUs. The same two-GPU worker was used consistently across translator training, channel-selection experiments, downstream benchmark evaluation, ablation studies, scalability experiments, and case-study inference. Unless otherwise specified, reported GPU peak memory values are measured from runs executed under this two-H100 setup.

\subsection{Homogeneous and Heterogeneous Translations}

\begin{table}[t]
\centering
\resizebox{\textwidth}{!}{%
\begin{tabular}{c|c|c c c c|c c c|c}
\toprule
\multirow{2}{*}{\textbf{Method}} &
\textbf{Cosine} &
\multirow{2}{*}{\textbf{BoolQ}} &
\textbf{PubMed} &
\textbf{MMLU} &
\multirow{2}{*}{\textbf{Acc Avg}} &
\multirow{2}{*}{\textbf{SQuAD}} &
\multirow{2}{*}{\textbf{NewsQA}} &
\multirow{2}{*}{\textbf{F1 Avg}} &
\textbf{GPU Peak} \\
&
\textbf{Sim} &
&
\textbf{QA} &
\textbf{Redux} &
&
&
&
&
\textbf{Memory (GiB)} \\
\midrule
Native & N/A & 51.0 & 36.0 & 17.0 & 35.0 & 0.50 & 0.29 & 0.40 & 7.06 \\
C2C-Project & 0.12 & 37.0 & 7.0 & 9.0 & 18.0 & 0.13 & 0.02 & 0.07 & 7.09 \\
Interlat & 0.63 & \textbf{65.0} & 38.0 & 15.0 & 39.0 & 0.16 & \textbf{0.04} & 0.10 & 7.18 \\
KVComm & 0.82 & 17.0 & 2.0 & 4.0 & 8.0 & 0.02 & 0.00 & 0.01 & \textbf{6.80} \\
LSC & 0.07 & 25.0 & 2.0 & \textbf{19.0} & 15.0 & 0.06 & 0.01 & 0.04 & 12.67 \\
\midrule
\textbf{MoT} & \textbf{0.89} & 62.0 & \textbf{45.0} & 18.0 & \textbf{42.0} & \textbf{0.42} & 0.01 & \textbf{0.22} & 13.54 \\
\bottomrule
\end{tabular}%
}
\caption{Homogeneous translation : \texttt{gpt2-large} $\to$ \texttt{gpt2-large}.}
\label{tab:gpt2-large_homogeneous}
\end{table}

\begin{table}[t]
\centering
\resizebox{\textwidth}{!}{%
\begin{tabular}{c|c|c c c c|c c c|c}
\toprule
\multirow{2}{*}{\textbf{Method}} &
\textbf{Cosine} &
\multirow{2}{*}{\textbf{BoolQ}} &
\textbf{PubMed} &
\textbf{MMLU} &
\multirow{2}{*}{\textbf{Acc Avg}} &
\multirow{2}{*}{\textbf{SQuAD}} &
\multirow{2}{*}{\textbf{NewsQA}} &
\multirow{2}{*}{\textbf{F1 Avg}} &
\textbf{GPU Peak} \\
&
\textbf{Sim} &
&
\textbf{QA} &
\textbf{Redux} &
&
&
&
&
\textbf{Memory (GiB)} \\
\midrule
Native & N/A & 51.0 & 36.0 & 17.0 & 35.0 & 0.50 & 0.29 & 0.40 & 10.18 \\
C2C-Project & 0.11 & 45.0 & 14.0 & 13.0 & 24.0 & 0.13 & \textbf{0.02} & 0.08 & \textbf{10.21} \\
Interlat & 0.58 & 55.0 & 36.0 & 18.0 & 36.0 & 0.12 & 0.02 & 0.07 & 10.36 \\
LSC & 0.10 & 35.0 & 4.0 & 17.0 & 19.0 & 0.07 & 0.00 & 0.03 & 18.67 \\
\midrule
\textbf{MoT} & \textbf{0.89} & \textbf{61.0} & \textbf{46.0} & \textbf{19.0} & \textbf{42.0} & \textbf{0.43} & 0.02 & \textbf{0.22} & 16.66 \\
\bottomrule
\end{tabular}%
}
\caption{Heterogeneous translation : \texttt{gpt2-xl} $\to$ \texttt{gpt2-large}.}
\label{tab:gpt2_xl_gpt2-large_heterogeneous}
\end{table}

Table~\ref{tab:gpt2-large_homogeneous} reports the homogeneous translation result for
\texttt{gpt2-large}$\to$\texttt{gpt2-large}. Among the translation methods, \mot
achieves the strongest overall performance, obtaining the highest average
accuracy of \(42.0\%\) and the highest average F1 of 0.22. It also achieves the highest
cosine similarity of 0.89, indicating that the translated cache is better aligned
with the target cache space than those produced by the baselines. In closed-set
QA, \mot obtains the best PubMedQA score and remains competitive on BoolQ and
MMLU-Redux, while in extractive QA it substantially improves SQuAD over the other
translation methods. Although Interlat achieves the best BoolQ score, its
extractive QA performance remains limited, suggesting that last-hidden-state
communication does not sufficiently preserve the token-level evidence needed for
answer extraction. Similarly, C2C-Project, KVComm, and LSC show limited F1,
indicating that projection-only mapping, selective KV sharing, or global
shared-space alignment is insufficient for robust cache reuse in this setting.
Overall, \mot provides the most balanced behavior across closed-set and
extractive QA.

Table~\ref{tab:gpt2_xl_gpt2-large_heterogeneous} further evaluates a heterogeneous
translation setting, \texttt{gpt2-xl}$\to$\texttt{gpt2-large}, where the source
and target models differ in depth and representation size. In this more
challenging setting, \mot again achieves the highest cosine similarity of 0.89,
the best average accuracy of \(42.0\%\), and the best average F1 of 0.22 among the
translation methods. Compared with the baselines, \mot improves both closed-set
QA and extractive QA: it obtains the best BoolQ, PubMedQA, MMLU-Redux, SQuAD,
and average scores, while matching the best NewsQA score. These results indicate
that \mot remains effective even when a larger source cache must be mapped into a
smaller and structurally different target cache space. We omit \kvcomm in this
heterogeneous setting because it assumes homogeneous KV cache compatibility and
does not support heterogeneous source--target architectures.

Across both GPT-2 family settings, \mot consistently provides stronger cache
alignment and more robust downstream performance than the competing translation
methods. The gains are especially clear on extractive QA, where accurate
preservation of context information is critical. \mot introduces additional peak
memory due to the use of multiple translators, but this is a marginal overhead
relative to the benefit of improved cache alignment and stable downstream
performance. These additional results confirm that the effectiveness of \mot is
not limited to the \texttt{Qwen2.5} experiments in
Section~\ref{sec:results}, but also generalizes to both homogeneous and
heterogeneous \texttt{GPT-2} family translations.

\subsection{Root Cause Analysis}
\label{app:root_cause_analysis}

\paragraph{Shifts and Correction.}

\begin{figure*}[t!]
    \centering

    \begin{minipage}[t]{0.48\textwidth}
        \centering
        \includegraphics[width=\linewidth]{figures/rca_before/final_shift.pdf}
        {\small (a) \single \& \textsf{Prompt LM Loss}.\par}
    \end{minipage}
    \hfill
    \begin{minipage}[t]{0.48\textwidth}
        \centering
        \includegraphics[width=\linewidth]{figures/rca_after/final_shift.pdf}
        {\small (b) \mot \& \textsf{Context Correction Loss}.\par}
    \end{minipage}

    \caption{Shift improvement.}
    \label{fig:final_shift_improvement_appendix}
\end{figure*}

\begin{figure*}[t!]
    \centering

    \begin{minipage}[t]{0.48\textwidth}
        \centering
        \includegraphics[width=\linewidth]{figures/rca_before/correction_decomposition.pdf}
        {\small (a) \single \& \textsf{Prompt LM Loss}.\par}
    \end{minipage}
    \hfill
    \begin{minipage}[t]{0.48\textwidth}
        \centering
        \includegraphics[width=\linewidth]{figures/rca_after/correction_decomposition.pdf}
        {\small (b) \mot \& \textsf{Context Correction Loss}.\par}
    \end{minipage}

    \caption{Correction improvement from layer L0 to L6.}
    \label{fig:correction_improvement_appendix}
\end{figure*}

Fig.~\ref{fig:final_shift_improvement_appendix} and Fig.~\ref{fig:correction_improvement_appendix} reorganize the shift and correction results shown separately in Fig.~\ref{fig:last_state_shift}, Fig.~\ref{fig:improved_shifts}, Fig.~\ref{fig:correction_decomposition}, and Fig.~\ref{fig:improved_correction}. These appendix figures place the before-and-after results side by side for easier comparison between the setting before applying the proposed method, \single \& \textsf{Prompt LM Loss}, and the setting after applying it, \mot \& \textsf{Context Correction Loss}.

\begin{figure*}[t!]
    \centering

    \begin{minipage}[t]{0.48\textwidth}
        \centering
        \includegraphics[width=\linewidth]{figures/rca_before/val_loss.pdf}
        {\small (a) \single \& \textsf{Prompt LM Loss}.\par}
    \end{minipage}
    \hfill
    \begin{minipage}[t]{0.48\textwidth}
        \centering
        \includegraphics[width=\linewidth]{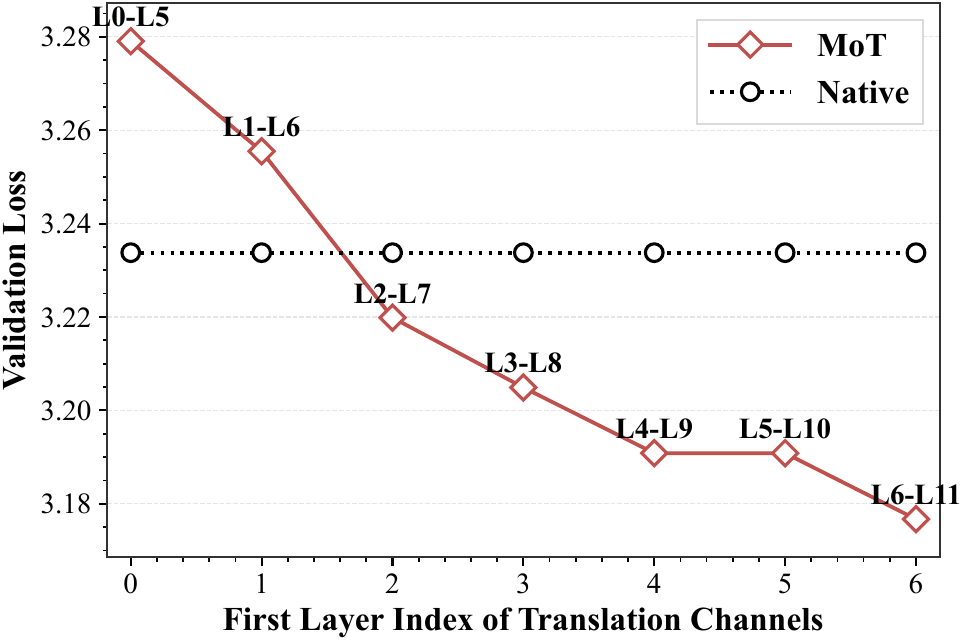}
        {\small (b) \mot \& \textsf{Context Correction Loss}.\par}
    \end{minipage}

    \caption{Validation loss improvement.}
    \label{fig:val_loss_improvement}
\end{figure*}

\begin{figure*}[t!]
    \centering

    \begin{minipage}[t]{0.48\textwidth}
        \centering
        \includegraphics[width=\linewidth]{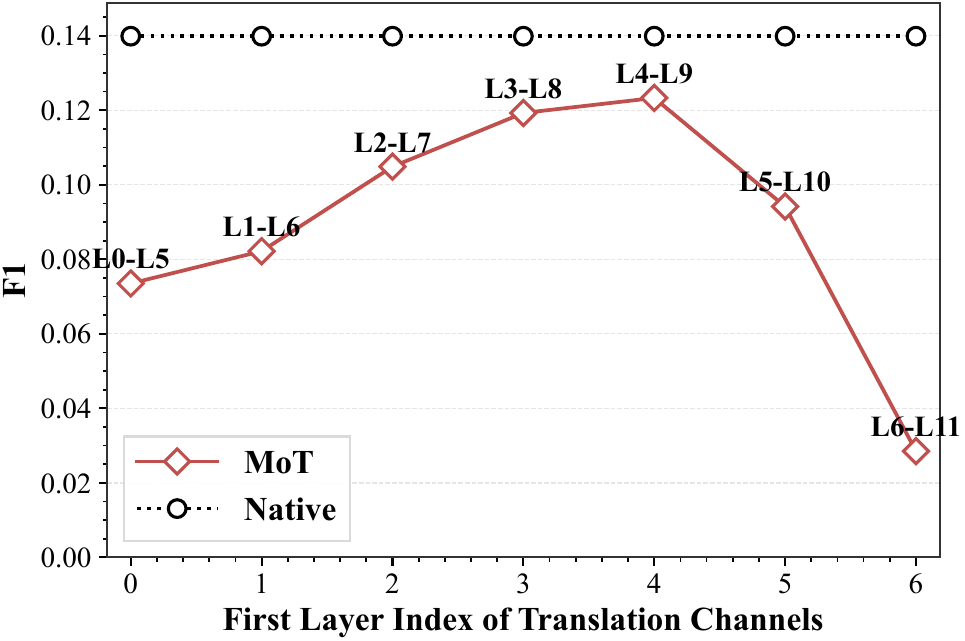}
        {\small (a) \single \& \textsf{Prompt LM Loss}.\par}
    \end{minipage}
    \hfill
    \begin{minipage}[t]{0.48\textwidth}
        \centering
        \includegraphics[width=\linewidth]{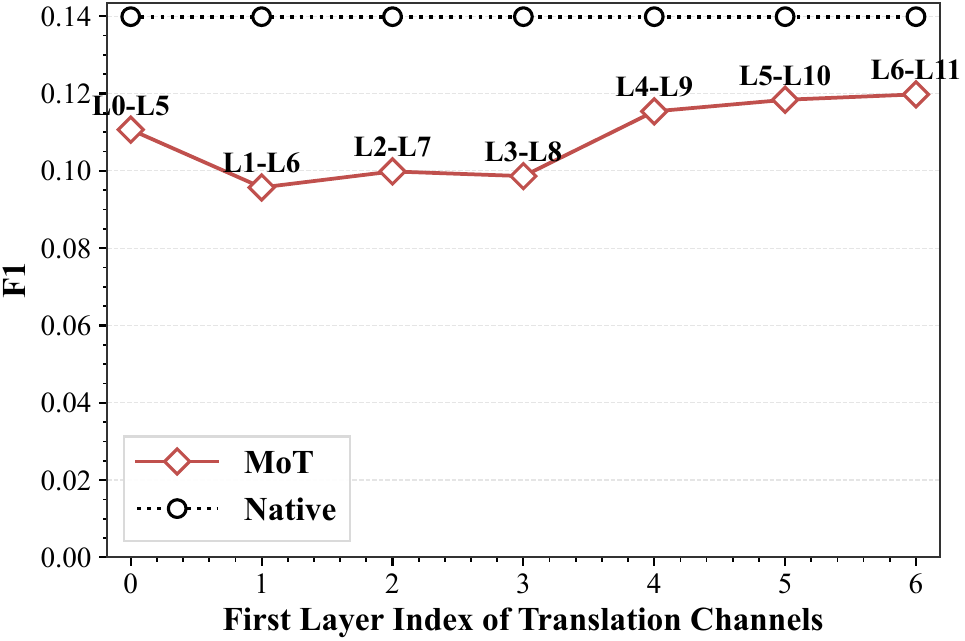}
        {\small (b) \mot \& \textsf{Context Correction Loss}.\par}
    \end{minipage}

    \caption{F1 score improvement.}
    \label{fig:f1_improvement}
\end{figure*}

Similarly, Fig.~\ref{fig:val_loss_improvement} and Fig.~\ref{fig:f1_improvement} show the improvements in validation loss and F1 score. The U-shaped loss curve changes substantially into a sharply decreasing trend, and the F1 curve also becomes flat across the entire layer range without large variation, rather than exhibiting an inverted U-shape. This can likewise be interpreted as a consequence of reducing both the initial translation shift $\|\bs_T\|$ and the last-state shift $\|\bs_L\|$.

\paragraph{Layer-wise Translated Error Propagation and Correction.}

\begin{figure*}[t!]
    \centering

    \begin{minipage}[t]{0.48\textwidth}
        \centering
        \includegraphics[width=\linewidth]{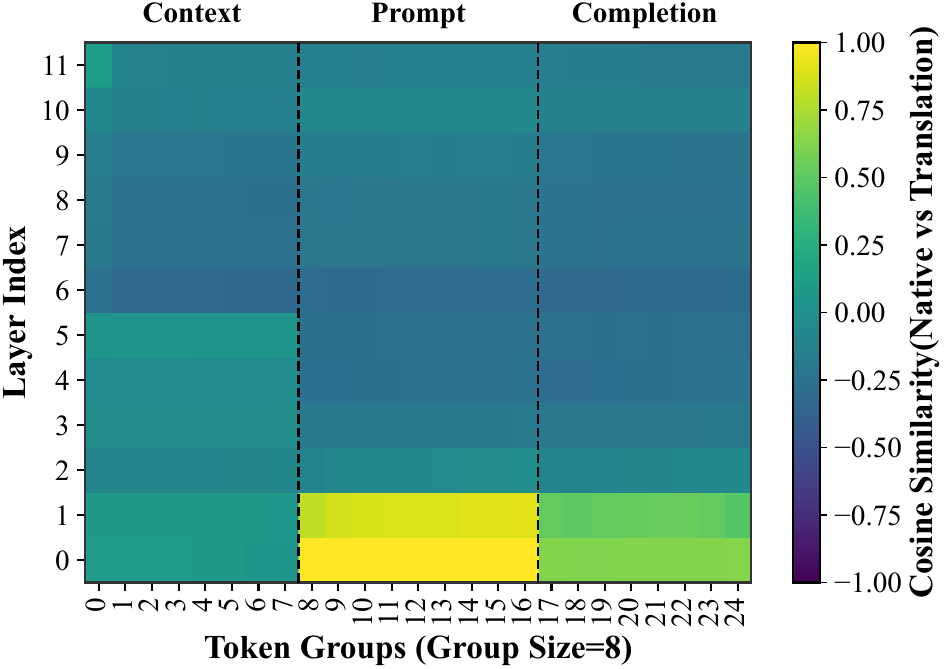}
        {\small (a) Layer 0 (\single)\par}
    \end{minipage}
    \hfill
    \begin{minipage}[t]{0.48\textwidth}
        \centering
        \includegraphics[width=\linewidth]{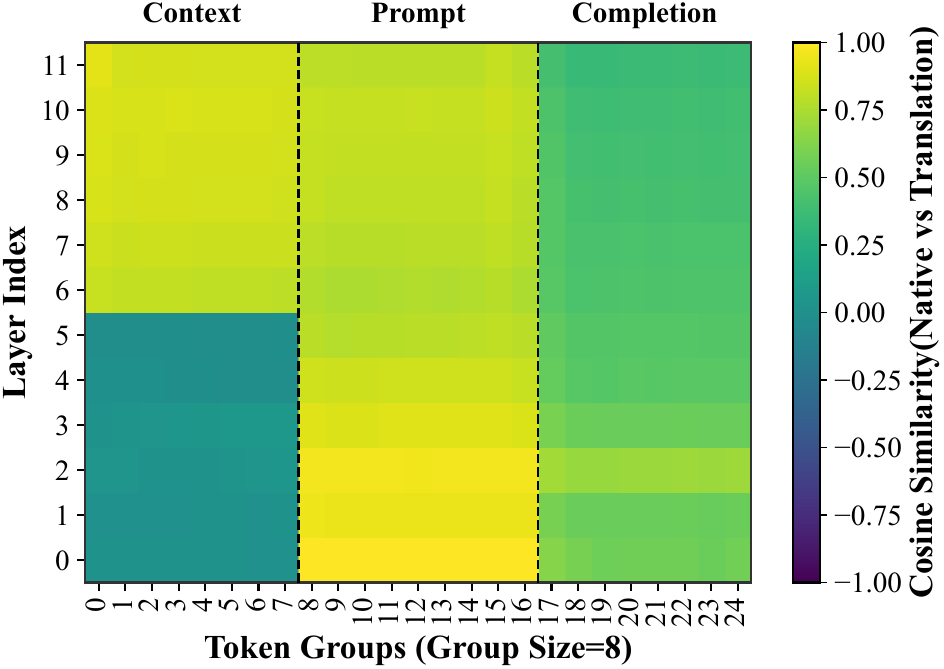}
        {\small (b) Layer 0 (\mot)\par}
    \end{minipage}

    \begin{minipage}[t]{0.48\textwidth}
        \centering
        \includegraphics[width=\linewidth]{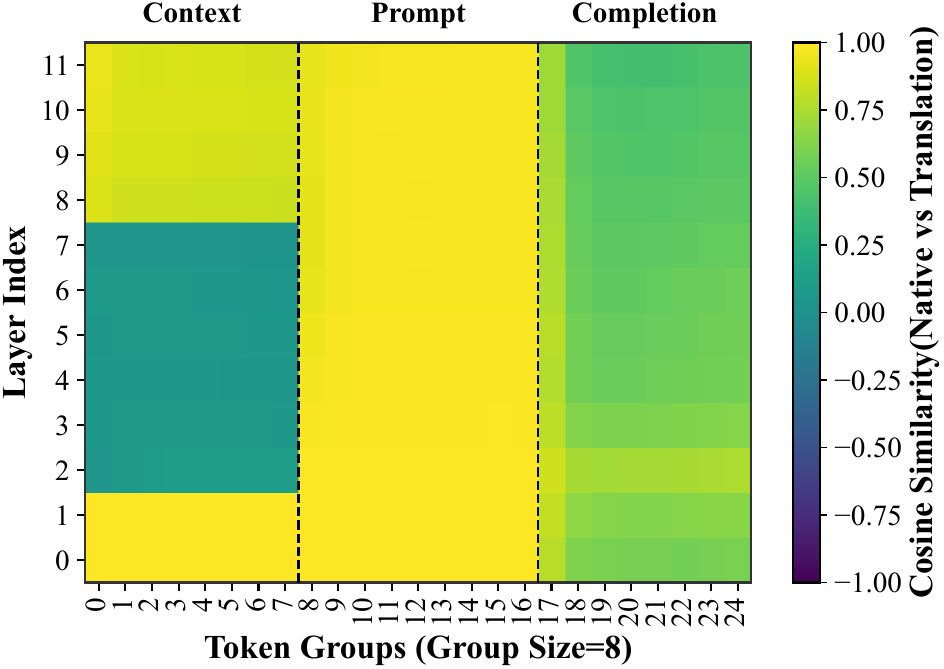}
        {\small (c) Layer 2 (\single)\par}
    \end{minipage}
    \hfill
    \begin{minipage}[t]{0.48\textwidth}
        \centering
        \includegraphics[width=\linewidth]{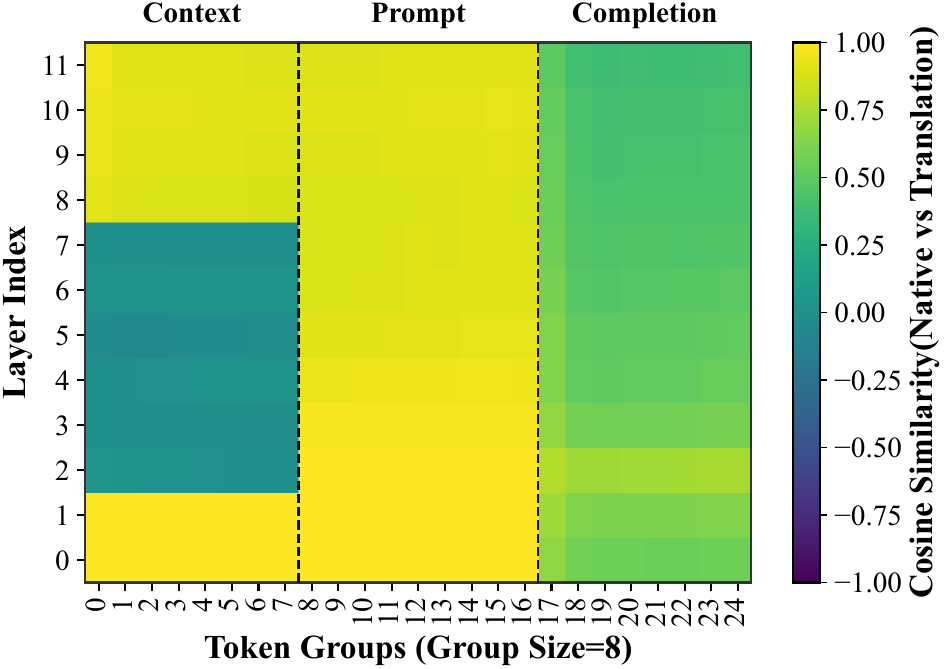}
        {\small (d) Layer 2 (\mot)\par}
    \end{minipage}

    \begin{minipage}[t]{0.48\textwidth}
        \centering
        \includegraphics[width=\linewidth]{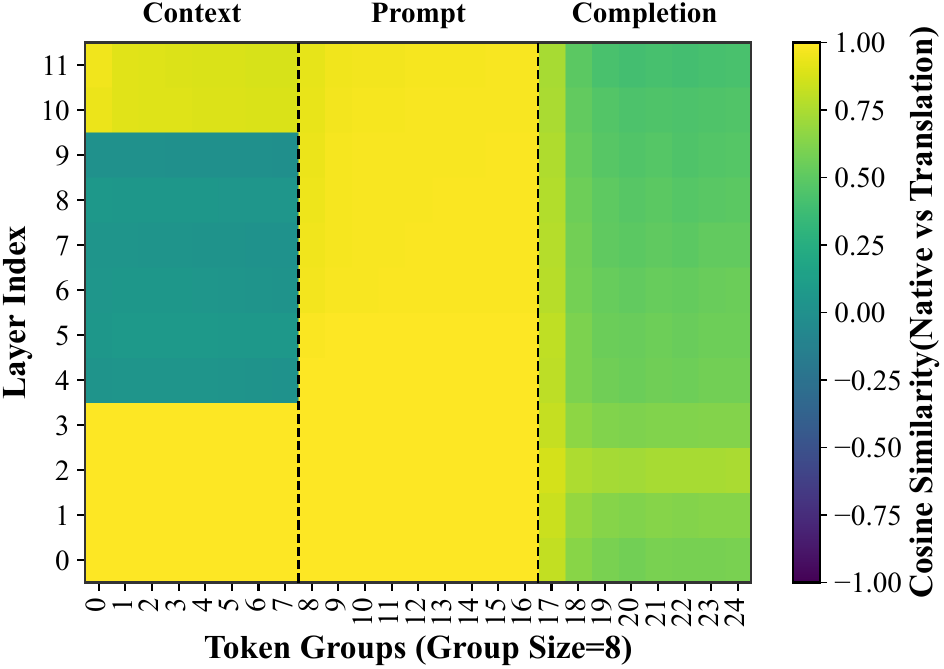}
        {\small (e) Layer 4 (\single)\par}
    \end{minipage}
    \hfill
    \begin{minipage}[t]{0.48\textwidth}
        \centering
        \includegraphics[width=\linewidth]{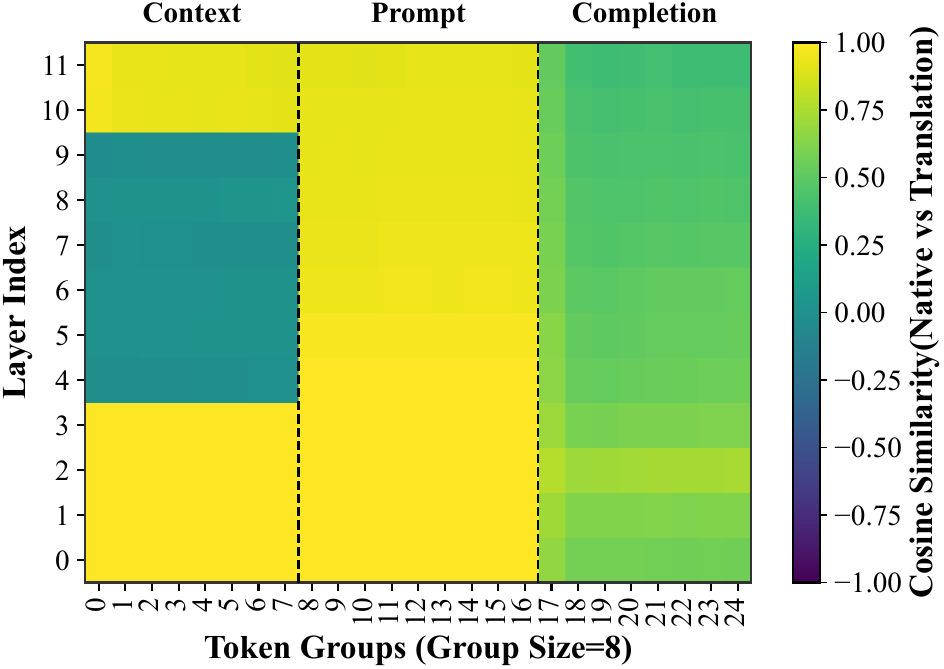}
        {\small (f) Layer 4 (\mot)\par}
    \end{minipage}

    \begin{minipage}[t]{0.48\textwidth}
        \centering
        \includegraphics[width=\linewidth]{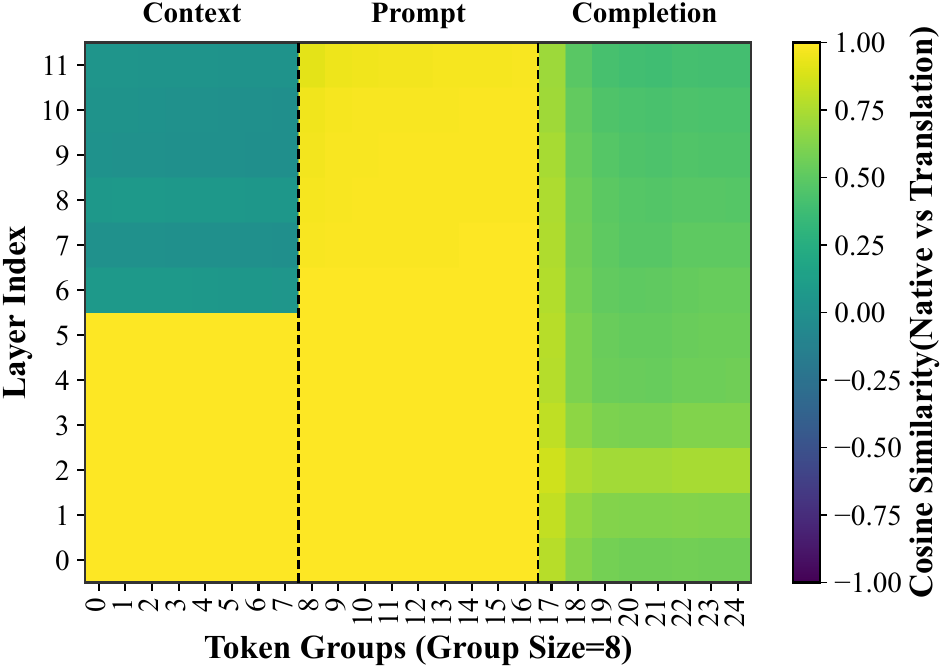}
        {\small (g) Layer 6 (\single)\par}
    \end{minipage}
    \hfill
    \begin{minipage}[t]{0.48\textwidth}
        \centering
        \includegraphics[width=\linewidth]{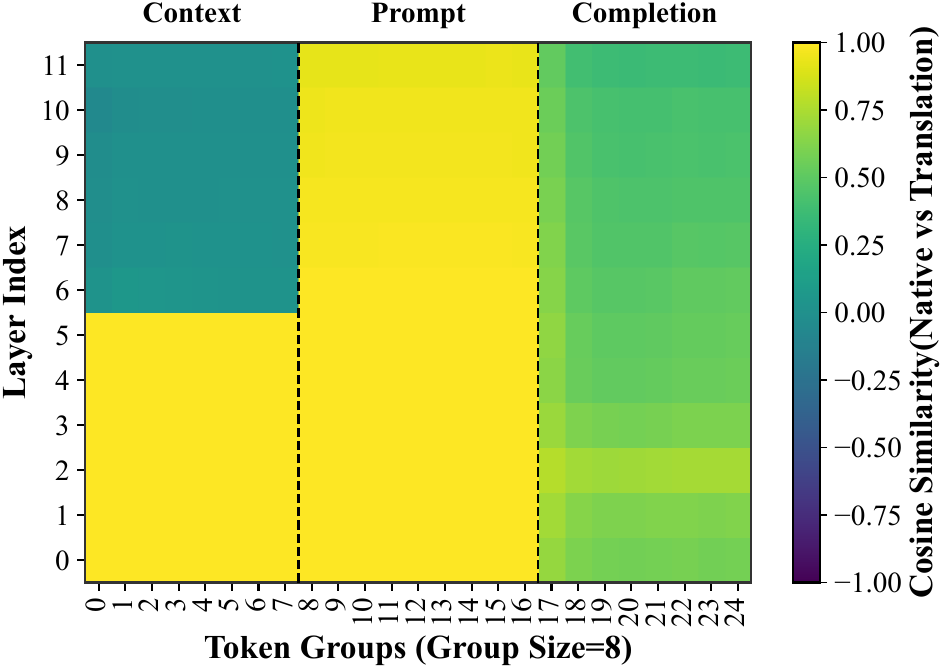}
        {\small (h) Layer 6 (\mot)\par}
    \end{minipage}

    \caption{Error propagation improvement.}
    \label{fig:error_prop_improve}
\end{figure*}

\begin{figure*}[t!]
    \centering

    \begin{minipage}[t]{0.48\textwidth}
        \centering
        \includegraphics[width=\linewidth]{figures/rca_before/kv_last_layer_violin_plot.pdf}
        {\small (a) KV Similarity (\single)\par}
    \end{minipage}
    \hfill
    \begin{minipage}[t]{0.48\textwidth}
        \centering
        \includegraphics[width=\linewidth]{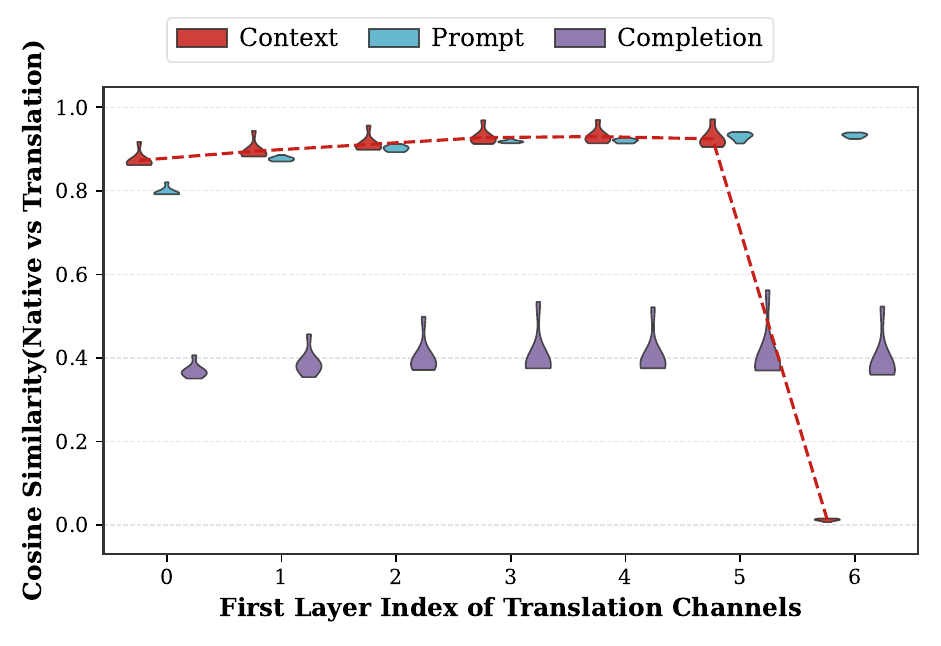}
        {\small (b) KV Similarity (\mot)\par}
    \end{minipage}

    \caption{Overall error propagation improvement.}
    \label{fig:overall_error_prop_improve}
\end{figure*}

In Fig.~\ref{fig:error_prop_3}, we examined the KV cache similarity heatmap between Native and MoT when the context is translated from translation start layer 3, corresponding to the layer range [3,8]. This analysis confirmed the phenomena of error propagation and correction at the KV level. Fig.~\ref{fig:error_prop_improve} provides an overall heatmap overview comparing the trends in KV similarity before and after applying \mot, while sliding the translation start layer over 0, 2, 4, and 6.

The most noticeable change appears between Fig.~\ref{fig:error_prop_3}(a) and Fig.~\ref{fig:error_prop_3}(b): under \single, severe error propagates across all regions, whereas \mot substantially alleviates this propagation. In the context-token region above the translation window in Fig.~\ref{fig:error_prop_3}(c--f), the corresponding regions in (d) and (f) become slightly brighter than those in (c) and (e), which indirectly indicates improved correction. In particular, when focusing on token group 0, both \single and \mot show a tendency for correction to become stronger near the top layers.

Fig.~\ref{fig:overall_error_prop_improve}(a) shows the inverted U-shaped similarity trend that appears especially in the context region, whereas Fig.~\ref{fig:overall_error_prop_improve}(b) shows that the error propagation effect is almost eliminated under \mot. This can be attributed to the large reduction in $\|\bs_T\|$ observed earlier. However, at the final translation-layer index 6, correction-deficit error still remains, which is consistent with the analysis in Proposition~\ref{prop:correction_deficit_coefficient_bound} that the correction deficit increases to 1 at the final layer. Note that this does not mean that correction itself has failed.

\subsection{Ablation}
\label{sec:ablation}

\paragraph{Top-\(K\) and Number of Translators.}

\begin{table}[t]
\footnotesize
\centering
\begin{tabular}{cccc}
\toprule
Translation Direction & (Top-\(K\), \(N_{\mathrm{Tr}}\)) & F1 & GPU Peak Memory (GiB) \\
\midrule

\multirow{4}{*}{\texttt{gpt2}$\rightarrow$\texttt{gpt2-medium}}
& (1,1) & 0.343 & 3.678 \\
& (1,2) & 0.336 & \textbf{5.131} \\
& (1,4) & 0.347 & 7.960 \\
& (2,4) & \textbf{0.350} & 7.959 \\
\midrule

\multirow{4}{*}{\texttt{gpt2-medium}$\rightarrow$\texttt{gpt2}}
& (1,1) & 0.124 & 3.442 \\
& (1,2) & 0.127 & \textbf{4.835} \\
& (1,4) & 0.124 & 7.526 \\
& (2,4) & \textbf{0.135} & 7.525 \\
\midrule

\multirow{4}{*}{\texttt{Qwen2.5-0.5B}$\rightarrow$\texttt{Qwen2.5-1.5B}}
& (1,1) & 0.529 & 10.280 \\
& (1,2) & \textbf{0.535} & \textbf{12.828} \\
& (1,4) & 0.531 & 17.898 \\
& (2,4) & 0.522 & 17.899 \\
\midrule

\multirow{4}{*}{\texttt{Qwen2.5-1.5B}$\rightarrow$\texttt{Qwen2.5-0.5B}}
& (1,1) & 0.388 & 10.106 \\
& (1,2) & \textbf{0.395} & \textbf{12.530} \\
& (1,4) & 0.364 & 17.321 \\
& (2,4) & 0.370 & 17.322 \\
\midrule

\multirow{4}{*}{\texttt{opt-125m}$\rightarrow$\texttt{opt-350m}}
& (1,1) & 0.048 & 3.524 \\
& (1,2) & 0.045 & \textbf{4.975} \\
& (1,4) & 0.048 & 7.806 \\
& (2,4) & \textbf{0.050} & 7.806 \\
\midrule

\multirow{4}{*}{\texttt{opt-350m}$\rightarrow$\texttt{opt-125m}}
& (1,1) & 0.043 & 3.286 \\
& (1,2) & \textbf{0.045} & \textbf{4.681} \\
& (1,4) & 0.044 & 7.371 \\
& (2,4) & 0.043 & 7.371 \\

\bottomrule
\end{tabular}
\caption{F1 and peak memory according to Top-\(K\) out of \(N_{\mathrm{Tr}}\) translators}
\label{tab:gen_f1_gpu_memory}
\end{table}

Table~\ref{tab:gen_f1_gpu_memory} compares F1 and GPU Peak Memory under different routing and translator-count settings. The (1,1) row corresponds to \single and is included only as a reference point, while the remaining rows are \mot configurations used for hyperparameter selection. Across the \mot settings, the best F1 is obtained either with (1,2) or (2,4) depending on the source--target direction, indicating that increasing Top-\(K\) and the number of translators does not consistently yield a clear performance advantage. In contrast, GPU Peak Memory increases mainly with the number of translators, and (1,2) consistently uses the least memory among the \mot configurations. Therefore, we conservatively choose \((\mathrm{Top}\text{-}K,N_{\mathrm{Tr}})=(1,2)\) as the default configuration.

\paragraph{Channel Mapping.}

\begin{figure*}[t]
    \centering

    \begin{minipage}{0.48\textwidth}
        \centering
        \includegraphics[width=\textwidth,height=4.6cm]{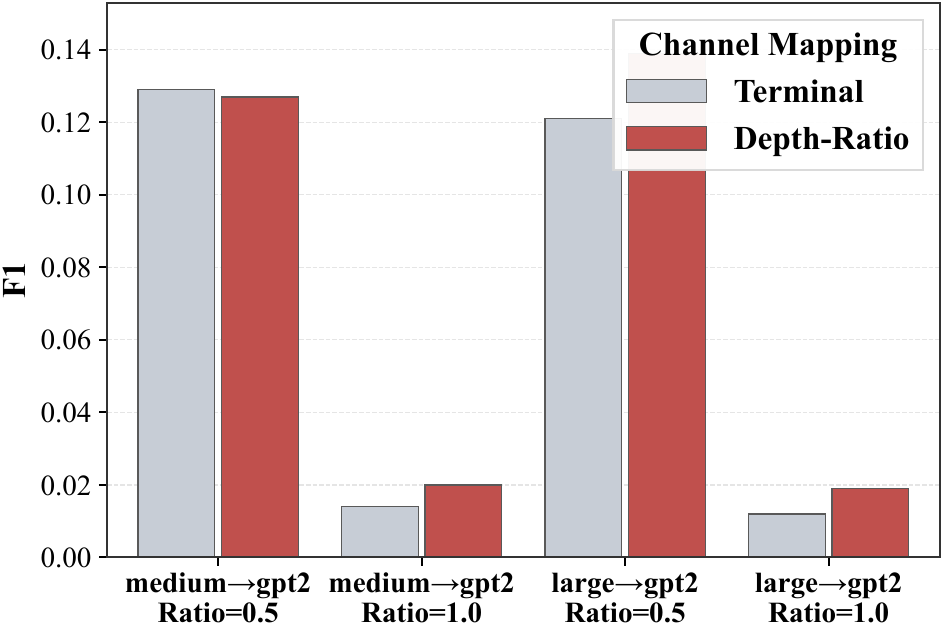}
        \captionof{figure}{Channel mapping comparison\newline: Terminal vs. Depth-Ratio.}
        \label{fig:channel_mapping}
    \end{minipage}
    \hfill
    \begin{minipage}{0.48\textwidth}
        \centering
        \includegraphics[width=\textwidth,height=4.6cm]{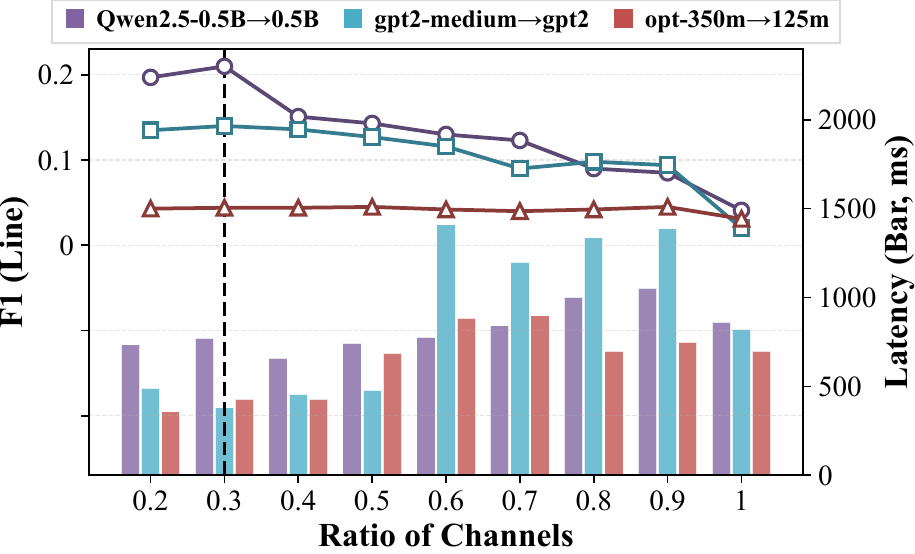}
        \captionof{figure}{Effect of $ChannelRatio$ of translation channels.}
        \label{fig:channel_ratio}
    \end{minipage}

\end{figure*}

To examine the effect of the channel-mapping strategy, we compare Terminal Mapping and Depth-Ratio Mapping. Terminal Mapping concentrates the channels near the final layers of the target model, whereas Depth-Ratio Mapping connects source and target layers that have similar relative depths.

As shown in Fig.~\ref{fig:channel_mapping}, Depth-Ratio Mapping achieves higher Gen F1 than Terminal Mapping in most settings. In particular, when the Channel Ratio is \(1.0\), Terminal Mapping suffers a large drop in F1, whereas Depth-Ratio Mapping maintains relatively high F1. This shows that connecting layers with similar depth ratios provides more stable cache translation than concentrating channels in the terminal region.

Therefore, in the remaining experiments, we use Depth-Ratio Mapping as the default channel-mapping strategy.

\paragraph{Ratio of Translation Channels.}

To determine the optimal size of the channel set, we vary the ratio of channels used for actual translation among all Depth-Ratio channel candidates and measure both validation performance and latency. As shown in Fig.~\ref{fig:channel_ratio}, F1 generally decreases and latency increases as the channel ratio becomes larger. This indicates that using more channels does not necessarily lead to better translation; rather, it may increase the likelihood of including propagation error or correction-deficit error, while also increasing the translation cost.

Although a low channel ratio is advantageous in terms of both accuracy and latency, using too small a ratio can limit the available translation channels and make performance unstable depending on the model and task. Therefore, we use \(ChannelRatio=0.3\) as the default setting, since it maintains high F1 stably across multiple models while keeping the latency increase small.

\paragraph{Number of Full Attention Layers at Bottom.}

\begin{figure*}[t]
    \centering

    \begin{minipage}{0.48\textwidth}
        \centering
        \includegraphics[width=\textwidth,height=4.6cm]{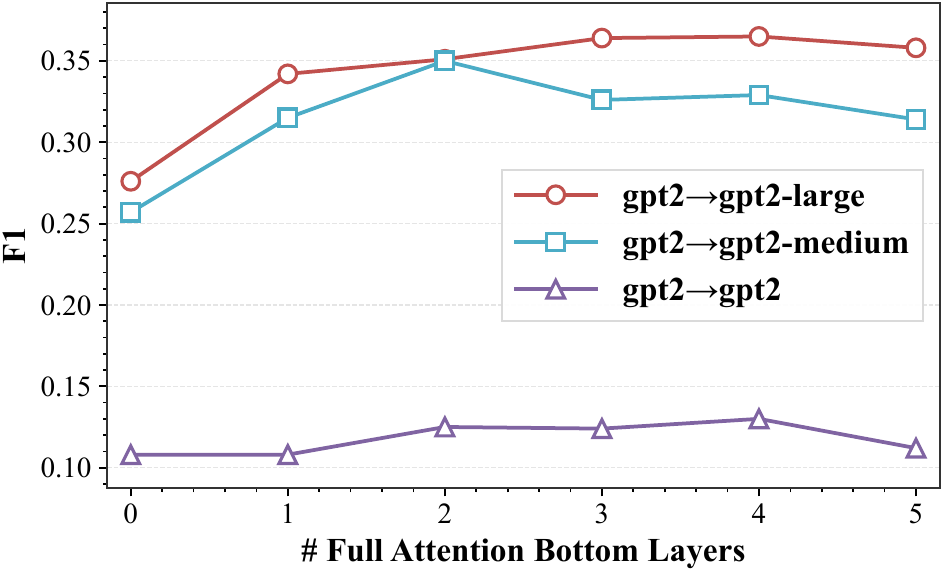}
        \captionof{figure}{Effect of the number of full attention layers at the bottom.}
        \label{fig:num_full_attn_layers}
    \end{minipage}
    \hfill
    \begin{minipage}{0.48\textwidth}
        \centering
        \includegraphics[width=\textwidth,height=4.6cm]{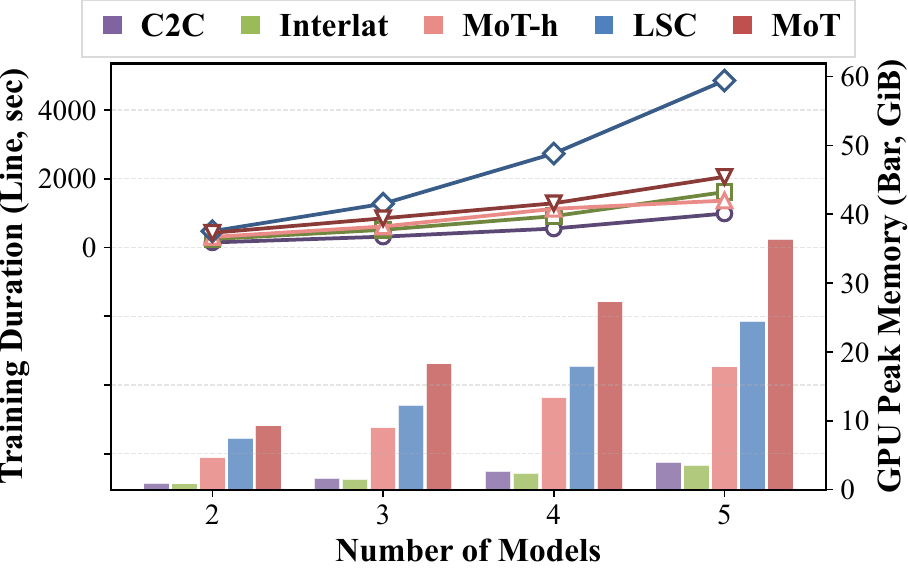}
        \caption{Training scalability of \gpt with an increasing number of models in the translation pool.}
        \label{fig:scalability_num_models}
    \end{minipage}

\end{figure*}

Top-\(S\) Source-Guided Sparse Attention reduces the replay cost, but small perturbations in the lowest target layers can substantially disrupt the subsequent representation mapping. Therefore, it is necessary to keep full attention in some bottom layers. To verify this, we vary the number of full attention layers at the bottom and measure F1 in three settings with target models of 12, 24, and 36 layers: (\texttt{gpt2}$\rightarrow$\texttt{gpt2}, \texttt{gpt2}$\rightarrow$\texttt{gpt2-medium}, and \texttt{gpt2}$\rightarrow$\texttt{gpt2-large}). As shown in Fig.~\ref{fig:num_full_attn_layers}, even as the target model becomes deeper, most of the performance gain occurs when full attention is applied to only a small number of bottom layers, and converting additional bottom layers to full attention provides limited benefit. This is consistent with prior analysis showing that representation-mapping collapse caused by perturbations is most pronounced in the lowest layers~\cite{zheng2025spurious}. Therefore, in this work, we set the number of bottom full-attention layers to \(L_{\mathrm{Full}}=2\) to balance efficiency and accuracy. This choice minimizes the additional cost of full-attention replay while mitigating the loss of lower-layer mapping caused by sparse replay.

\subsection{Scalability}
\label{app:scalability}

This appendix provides additional scalability analyses that complement the main benchmark results.
We study two axes of scalability.
First, we evaluate training scalability as the number of models participating in the translation pool increases.
Second, we evaluate capacity scalability by fixing the target model and increasing the capacity of the source model within the same model family.
The purpose of these experiments is not to show that larger source models always improve downstream accuracy, but to examine whether translation quality, peak memory, latency, and throughput remain stable as the translation setting becomes larger or more heterogeneous.

\paragraph{Number of Training Models.}

Fig.~\ref{fig:scalability_num_models} shows the training scalability results when the number of participating models increases in a translation pool composed of \texttt{gpt2}-family models.
\ctoc and \interlat appear efficient in terms of training duration and peak memory, but their lower cost does not necessarily translate into reliable translation quality, as shown by the main benchmark and capacity-scaling results.
This indicates that training efficiency alone is insufficient if the translated cache cannot preserve downstream QA performance.

By contrast, \lsc shows a rapid increase in training duration as the number of models grows.
This behavior is expected because shared-space alignment requires losses over model pairs, leading to quadratic scaling with respect to the number of models in the translation pool.
As the pool becomes larger, this pairwise training structure quickly becomes expensive.

\mot uses multiple translators directly and therefore incurs a higher peak-memory cost as the number of participating models increases.
However, this cost should be interpreted together with translation quality.
As shown in the capacity-scaling results below, when the source model becomes larger, the translator-related memory overhead becomes relatively small compared with the model-side memory footprint.
Moreover, \moth provides a memory-oriented alternative when the direct KV-to-KV translation of \mot becomes a bottleneck.
Since \moth translates intermediate activations and restores KV caches through the target model projections, it reduces the transferred state size and yields a clear peak-memory advantage, although it can introduce a marginal accuracy trade-off.
Overall, \mot and \moth provide a more practical trade-off between translation quality and scalability than methods that are efficient but fail to preserve downstream performance.

\paragraph{Model Capacity.}

\begin{figure*}[t!]
    \centering

    \begin{minipage}[t]{0.48\textwidth}
        \centering
        \includegraphics[width=\linewidth]{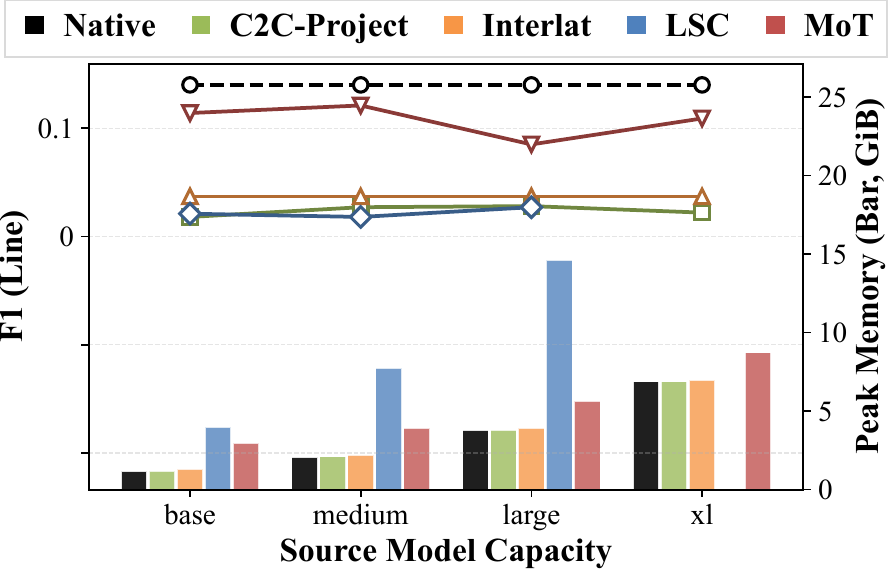}
        {\small (a) F1 and peak inference memory.\par}
    \end{minipage}
    \hfill
    \begin{minipage}[t]{0.48\textwidth}
        \centering
        \includegraphics[width=\linewidth]{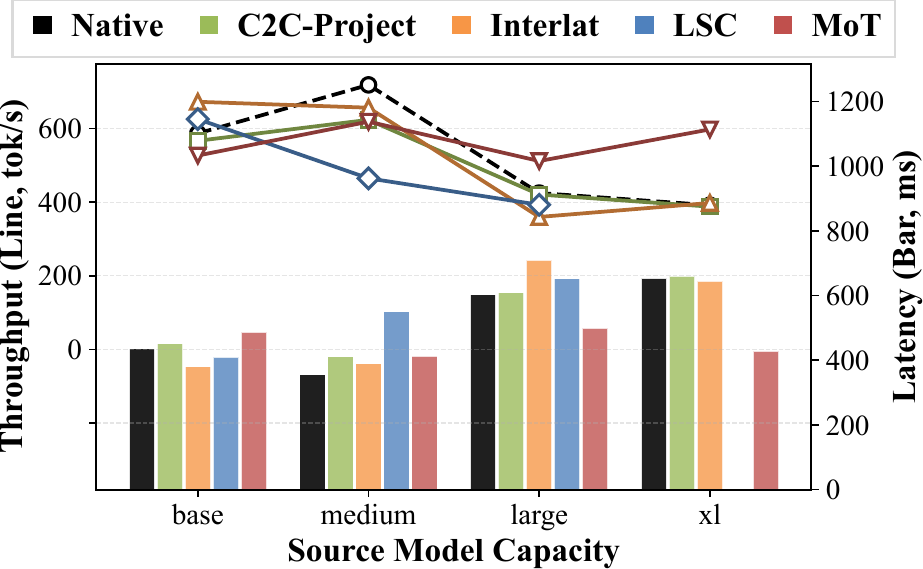}
        {\small (b) Inference throughput and latency.\par}
    \end{minipage}

    \caption{Scalability of the \gpt model family with increasing source-model capacity.}
    \label{fig:scalability_capacity_gpt2}
\end{figure*}

\begin{figure*}[t!]
    \centering

    \begin{minipage}[t]{0.48\textwidth}
        \centering
        \includegraphics[width=\linewidth]{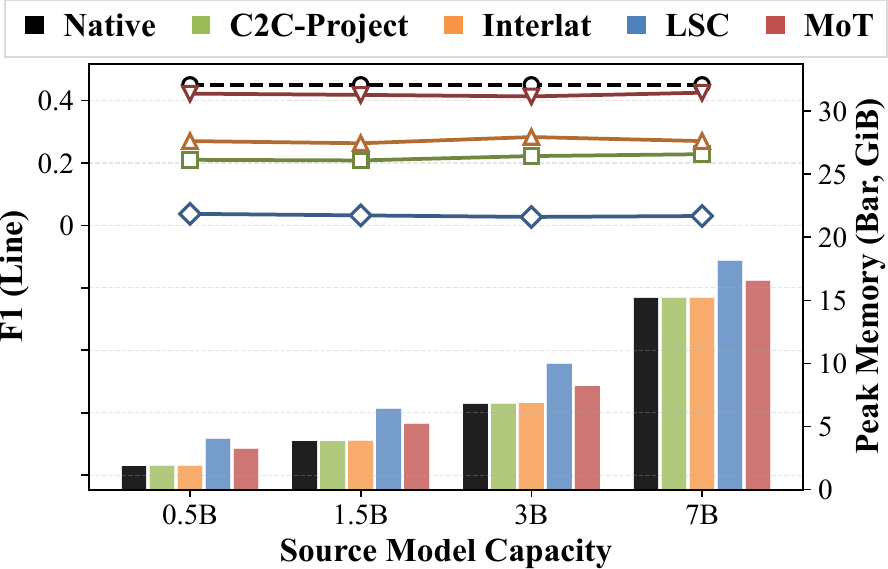}
        {\small (a) Peak memory and F1.\par}
    \end{minipage}
    \hfill
    \begin{minipage}[t]{0.48\textwidth}
        \centering
        \includegraphics[width=\linewidth]{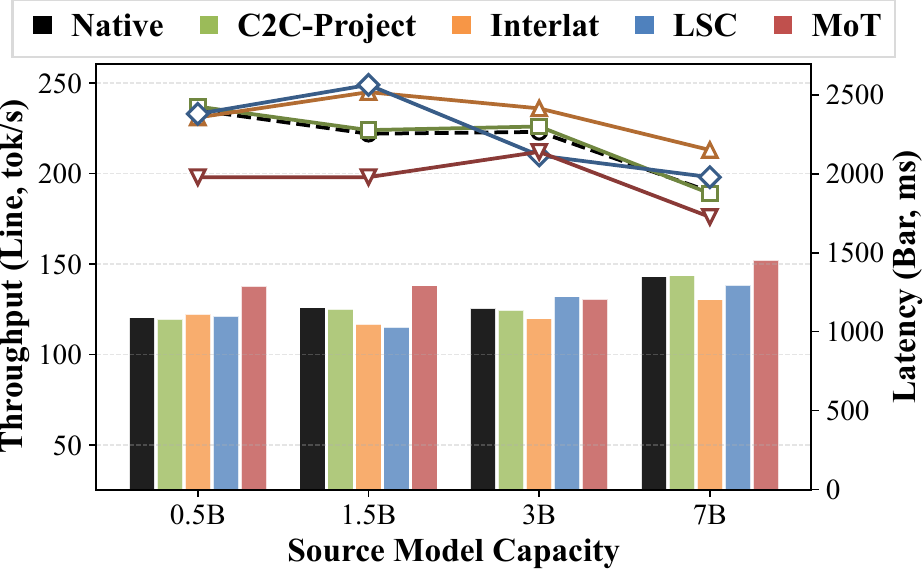}
        {\small (b) Latency and throughput.\par}
    \end{minipage}

    \caption{Scalability of the \qwen model family with increasing source-model capacity.}
    \label{fig:scalability_capacity_qwen2.5}
\end{figure*}

\begin{figure*}[t!]
    \centering

    \begin{minipage}[t]{0.48\textwidth}
        \centering
        \includegraphics[width=\linewidth]{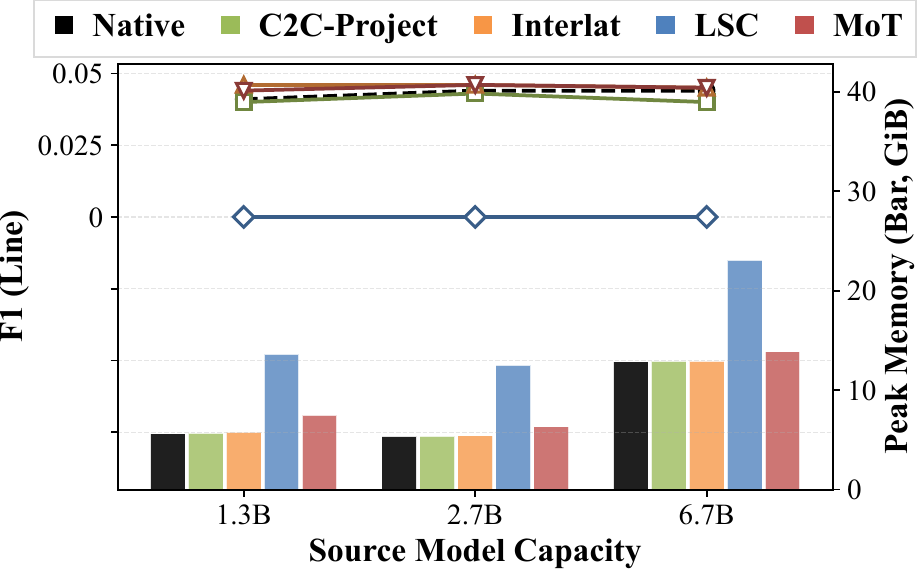}
        {\small (a) Peak memory and F1.\par}
    \end{minipage}
    \hfill
    \begin{minipage}[t]{0.48\textwidth}
        \centering
        \includegraphics[width=\linewidth]{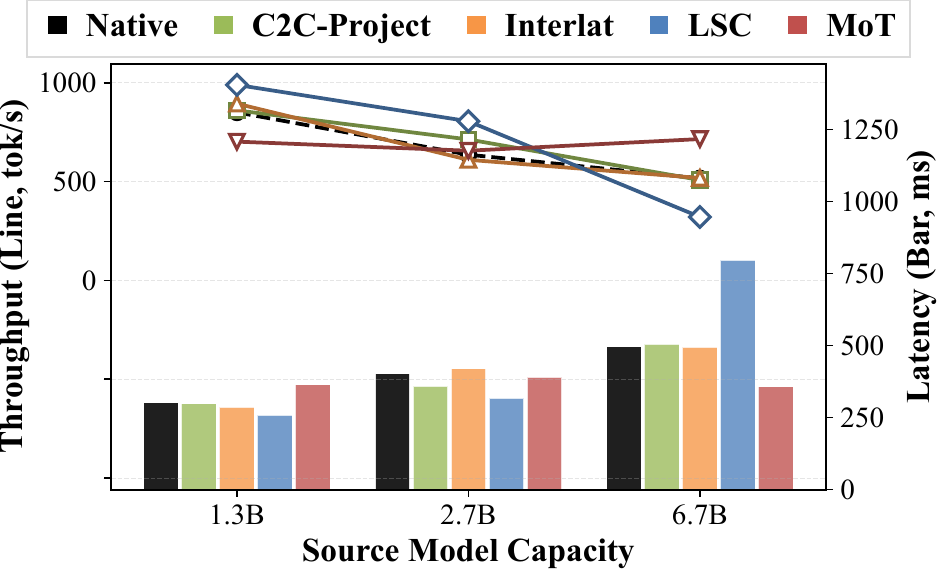}
        {\small (b) Latency and throughput.\par}
    \end{minipage}

    \caption{Scalability of the \opt model family with increasing source-model capacity.}
    \label{fig:scalability_capacity_opt}
\end{figure*}

We next evaluate capacity scalability by fixing the target model and increasing only the source-model capacity.
This is a challenging setting because the target cache space remains fixed while the source representation becomes larger and potentially more mismatched.
Empirically, we found this direction to be more difficult than the reverse setting where the source model is fixed and the target model capacity increases.
Therefore, this experiment directly tests whether each method can translate increasingly larger source caches into the same target cache space without degrading downstream quality.

For the \gpt family, we fix the target model to \texttt{gpt2} and increase the source model from \texttt{gpt2} to \texttt{gpt2-xl}.
As shown in Fig.~\ref{fig:scalability_capacity_gpt2}, \mot keeps F1 closest to \native even as the source model capacity increases.
\ctoc remains close to \native in peak memory and latency, but its F1 drops substantially, indicating that simple projection is not sufficient for preserving the information needed by the target model.
\interlat and \lsc also fail to preserve F1 consistently, and \lsc becomes unstable in the largest \texttt{gpt2-xl} setting.
In contrast, \mot requires additional peak memory due to the translator module, but its F1 degradation is limited and its latency and throughput remain stable in larger-capacity regimes.

Fig.~\ref{fig:scalability_capacity_qwen2.5} shows the same analysis for the \qwen model family.
The target model is fixed to \texttt{Qwen2.5-0.5B}, while the source model increases from \texttt{Qwen2.5-0.5B} to \texttt{Qwen2.5-7B}.
Across this range, \mot maintains F1 closest to \native.
The F1 of \mot remains stable as source capacity increases, while \ctoc, \interlat, and especially \lsc show much larger degradation.
In terms of peak memory, \mot uses additional memory compared with \native, but this overhead remains nearly constant across source capacities.
Thus, the relative impact of the translator overhead becomes smaller as the source model grows.
Latency and throughput also remain stable, suggesting that the overhead of \mot does not accumulate with source-model capacity.

Fig.~\ref{fig:scalability_capacity_opt} shows the capacity-scaling behavior for the \opt model family.
The target model is fixed to \texttt{opt-125m}, and the source model is increased to \texttt{opt-1.3b}, \texttt{opt-2.7b}, and \texttt{opt-6.7b}.
The \texttt{opt-125m} source result is excluded from the figure because it behaves as an outlier.
Homogeneous-only methods such as \kvcomm are also excluded from this heterogeneous capacity-scaling comparison.
In this setting, \mot again preserves F1 close to \native across source capacities.
While the absolute F1 values of the \opt family are lower than those of \qwen, \mot remains the most consistent method in terms of quality preservation.
\lsc collapses across all \opt capacity settings, whereas \ctoc and \interlat show less severe degradation but are less consistent than \mot.

Overall, the capacity-scaling experiments show that \mot remains stable when the source model becomes larger while the target model is fixed.
\ctoc can be memory- and latency-efficient, but often fails to preserve downstream F1.
\interlat provides stable system-level metrics but is less reliable in quality preservation.
\lsc suffers from quality collapse in several large-capacity settings, likely due to the difficulty of maintaining a global shared latent space across increasingly heterogeneous model pairs.
By contrast, \mot consistently offers the best F1 preservation, with a mostly constant translator-related memory overhead and stable latency-throughput behavior.
These results support the use of \mot in settings where a fixed target model must reuse or communicate with caches produced by larger source models.

\section{Details of Case Studies}
\label{app:details_case_studies}

\subsection{Multi-Agent Reasoning}\label{sec:multi_agent_reasoning}

\begin{figure}[H]
    \centering
    \includegraphics[width=\textwidth]{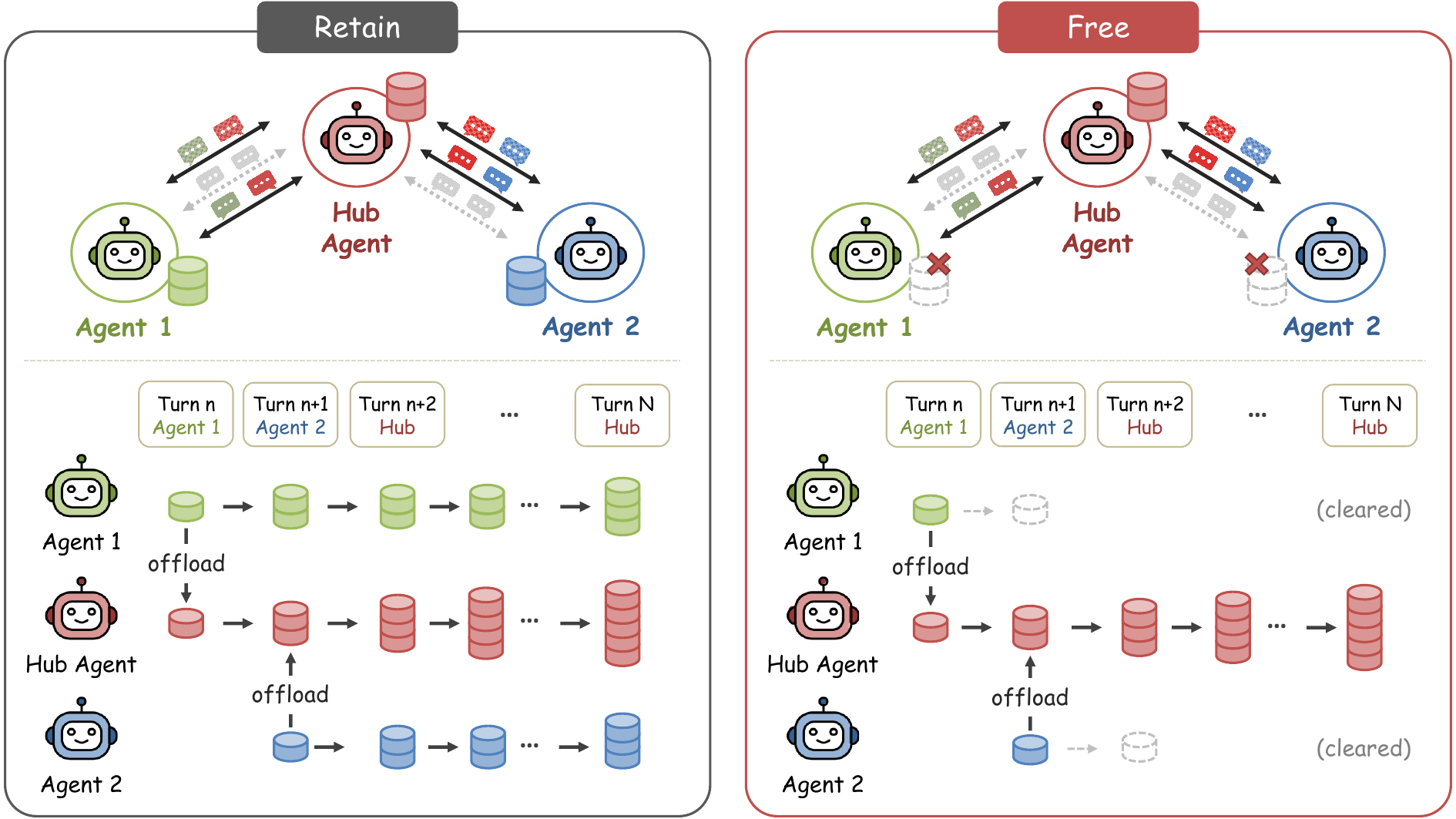}
    \caption{Comparison of \textsf{Retain} and \textsf{Free} cache management strategies in multi-agent reasoning.}
\label{fig:cache_strategies}
\end{figure}

Multi-Agent Reasoning is an evaluation setting in which multiple agents interact sequentially and progressively construct an answer. Since the input context and previous responses are repeatedly processed at each turn, the way KV caches are maintained and transferred becomes a key factor in memory efficiency as the number of agents increases. Therefore, this experiment compares the scalability of each method in a multi-agent setting in terms of peak memory and F1 score.

In this experiment, we use \texttt{vicgalle/gpt2-open-instruct-v1} as the backbone model, and compare \lsc, \interlat, and \mot trained under the \texttt{bfloat16} setting. For evaluation, we use Doc2Dial, which has document-grounded multi-turn QA characteristics. To remove the effect of retrieval on the results, we use the gold grounding excerpt corresponding to each question as the input context. We vary the number of agents over \(2,3,4\), and run the same \(13\) turns in all settings.

\paragraph{10-Agent Scale-Up Setting.}

The 10-agent experiment visualized in Fig.~\ref{fig:multi_agents_intro} follows the same multi-agent reasoning setup as Section~\ref{sec:case_memory_reuse}. The only differences are that we scale the number of agents to \(10\) and perform cache translation among \texttt{Qwen2.5-3B-Instruct} models. All agents sequentially refine the answer using the same passage, question, previous responses, and accumulated dialogue history, while the cache management strategy follows the same \textsf{Retain} and \textsf{Free} comparison protocol described above. This scale-up setting is used to stress-test whether \mot can preserve reasoning quality while keeping the active KV-cache working set nearly invariant as the number of agents increases.

\paragraph{Cache Strategies.}

In this experiment, we compare two KV cache management strategies for multi-agent reasoning: \textsf{Retain} and \textsf{Free}. Figure~\ref{fig:cache_strategies} conceptually illustrates how the two strategies retain and offload caches.

In \textsf{Retain}, each agent keeps its local KV cache even after completing its turn. Therefore, as the dialogue progresses, each agent's reasoning history accumulates in that agent's cache, allowing all participating agents to continuously use their own previous contexts. However, this strategy has the limitation that overall memory usage increases as the number of agents and turns grows, because the resident caches of multiple agents must be maintained simultaneously.

In contrast, in \textsf{Free}, after a non-hub agent completes its turn, it offloads the generated cache to the \textsf{Hub Agent} and then clears its local cache. The \textsf{Hub Agent} preserves the full dialogue history by accumulating the offloaded translated caches, thereby maintaining the reasoning context required for final response generation. Since non-hub agents do not keep their caches in resident memory after their turns, \textsf{Free} can reduce memory usage compared with \textsf{Retain}. In other words, \textsf{Free} aims to improve the memory efficiency of multi-agent reasoning by concentrating the accumulated context in the \textsf{Hub Agent} while removing the cache-retention cost of non-hub agents.

\paragraph{Prompt Design.}

Each agent in Multi-Agent Reasoning generates an answer to the same question based on the given passage. In the initial turn, the \textsf{Hub Agent} directly generates an answer using only the passage and the question. In subsequent follow-up turns, each agent improves the existing answer by referring not only to the passage, but also to the previous agent's response. The passage and the previous agent's response are placed in the translated context region, which corresponds to the contextual information that can be transferred through cache translation. Through this process, each agent can progressively refine the answer based on previous reasoning results.

\begin{bluebox}[Multi-Agent Reasoning initial prompt]{width=\linewidth, valign=top, breakable}
\label{text:prompt_gpt2_instruct_initial}

\noindent\textbf{Dataset.}
Doc2Dial QA requires the model to answer a user question using the given passage.

\vspace{0.4em}

\noindent\textbf{Setting.}
In the initial turn, the agent generates an answer directly from the passage and the question.

\vspace{0.8em}

\begin{translatedcontextbox}
{\ttfamily\small
\#\#\# Instruction: Use the passage to answer the Question accurately.

\vspace{0.6em}
\#\#\# Passage:\\
\(\langle\)Passage\(\rangle\)
}
\end{translatedcontextbox}

\begin{promptbox}
{\ttfamily\small
\#\#\# Question:\\
\(\langle\)Question\(\rangle\)\\
\#\#\# Response:
}
\end{promptbox}

\begin{completionbox}
{\ttfamily\small
\(\langle\)Answer\(\rangle\)
}
\end{completionbox}

\end{bluebox}

\begin{bluebox}[Multi-Agent Reasoning follow-up prompt with sample]{width=\linewidth, valign=top, breakable}
\label{text:prompt_gpt2_instruct_followup_sample}

\noindent\textbf{Dataset.}
Doc2Dial QA requires the model to answer a user question using the given passage.

\vspace{0.4em}

\noindent\textbf{Setting.}
In the follow-up turn, the agent improves the answer by referring to the passage, the question, and the previous agent response.

\vspace{0.8em}

\begin{translatedcontextbox}
{\ttfamily\small
\#\#\# Instruction: Using the passage and previous Agent's response, improve the answer to the Question.

\vspace{0.6em}
\#\#\# Passage:\\
\(\langle\)Passage\(\rangle\)

\vspace{0.6em}
\#\#\# Previous Agent Response:\\
\(\langle\)Previous Agent Response\(\rangle\)
}
\end{translatedcontextbox}

\begin{promptbox}
{\ttfamily\small
\#\#\# Question:\\
\(\langle\)Question\(\rangle\)\\
\#\#\# Response:
}
\end{promptbox}

\begin{completionbox}
{\ttfamily\small
\(\langle\)Improved Answer\(\rangle\)
}
\end{completionbox}

\tcbline
\noindent\textbf{Sample.}

\vspace{0.6em}

\begin{translatedcontextbox}
{\ttfamily\small
\#\#\# Instruction: Using the passage and previous Agent's response, improve the answer to the Question.

\vspace{0.6em}
\#\#\# Passage:\\
If you do not maintain liability insurance for your vehicle, the DMV may suspend your vehicle registration and driver license. To clear the suspension, you must provide proof of insurance or follow the instructions in the DMV notice.

\vspace{0.6em}
\#\#\# Previous Agent Response:\\
You need insurance or the DMV can suspend your registration.
}
\end{translatedcontextbox}

\begin{promptbox}
{\ttfamily\small
\#\#\# Question:\\
What happens if I do not get insurance?\\
\#\#\# Response:
}
\end{promptbox}

\begin{completionbox}
{\ttfamily\small
You will need to get insurance, or the DMV may suspend both your registration and license.
}
\end{completionbox}

\end{bluebox}

\subsection{Long-Context CAG}
\label{app:long-context}

\paragraph{Long-Context Specific Ablation.}

\begin{figure*}[t]
    \centering

    \begin{minipage}{0.48\textwidth}
        \centering
        \includegraphics[width=\textwidth,height=4.4cm]{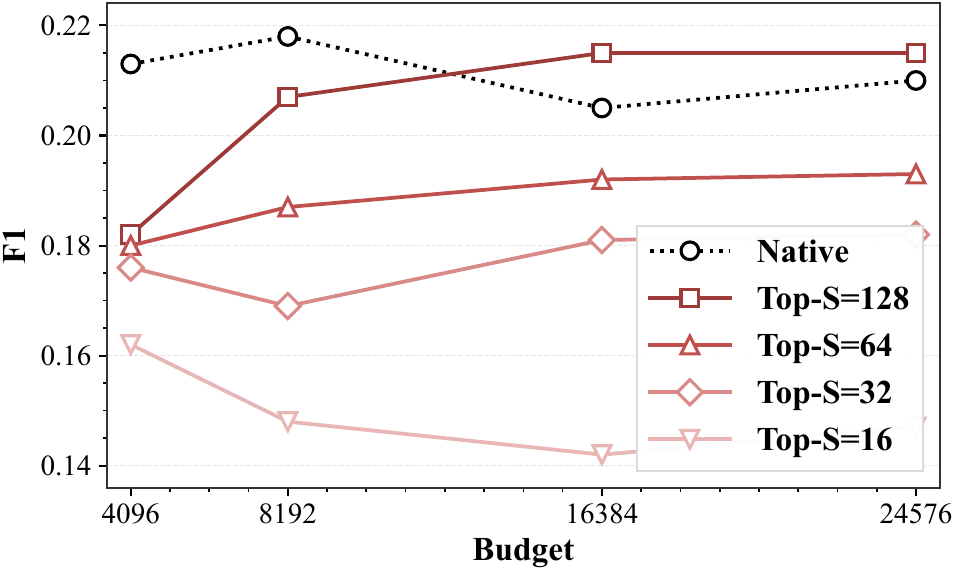}
        \captionof{figure}{Effect of Top-\(S\) for source-guided sparse attention across long-context budgets.}
        \label{fig:top_s}
    \end{minipage}
    \hfill
    \begin{minipage}{0.48\textwidth}
        \centering
        \includegraphics[width=\textwidth,height=4.4cm]{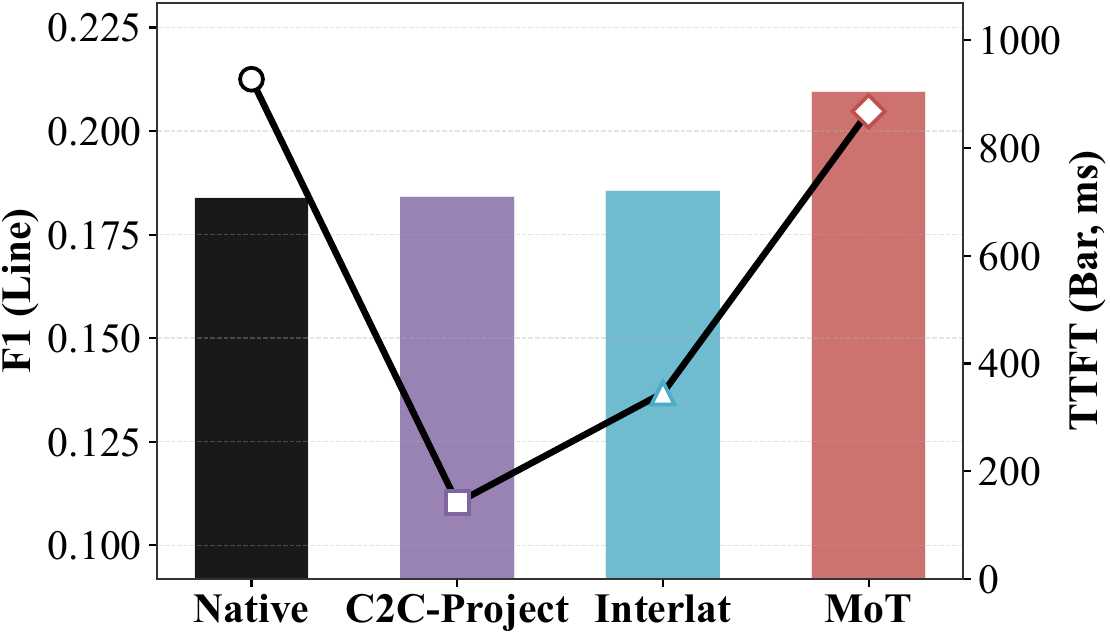}
        \caption{Average F1 and TTFT for long-context CAG.}
        \label{fig:appendix_avg_f1_ttft}
    \end{minipage}

\end{figure*}

Fig.~\ref{fig:top_s} shows the F1 trend of \mot according to the Top-\(S\) size used in source-guided sparse attention. Top-\(S\) denotes the number of source-side sparse-attention candidates to be referenced during cache reuse, and this analysis is conducted to select the representative \mot configuration used in the subsequent method-level comparison. Overall, Top-\(S=128\) achieves the highest F1. Although it is lower than \native under the 4K budget, it improves rapidly from 8K onward and reaches F1 comparable to or higher than \native at 16K and 24K. This shows that, when a sufficient number of source-side candidates is used, \mot can effectively reuse long-context information from the stored KV cache.

By contrast, performance becomes limited as Top-\(S\) decreases. Top-\(S=64\) shows a gradual improvement as the budget increases, but it remains below Top-\(S=128\), while Top-\(S=32\) and Top-\(S=16\) show lower F1 overall. In particular, Top-\(S=16\) does not improve as the budget increases and instead tends to degrade, suggesting that an overly small sparse-attention candidate set is insufficient for preserving or reconstructing useful information from long contexts. Therefore, we use Top-\(S=128\), which shows the most stable F1, as the representative setting for \mot in this paper.

\paragraph{Average of Long-Context CAG results.}

Fig.~\ref{fig:appendix_avg_f1_ttft} summarizes the average F1 and average TTFT over all context budgets. In terms of average F1, \native achieves the highest performance with 0.2125, while \mot achieves 0.2047, resulting in a gap of only 0.0078 from \native. This means that \mot preserves approximately 96.3\% of the F1 of \native, showing that the quality loss in generation is limited even in the CAG setting where the original context is not directly prefilled.

In contrast, the average F1 scores of \ctoc and \interlat are only 0.1102 and 0.1368, respectively. \mot achieves approximately 85.8\% higher average F1 than \ctoc and approximately 49.6\% higher average F1 than \interlat. This gap suggests that, in the CAG setting, what matters is not merely reusing a cache, but whether the reused cache is preserved in a form that the target model can effectively use during generation.

In terms of average TTFT, \mot shows higher latency than the other CAG methods. This is because \mot performs both cache transformation and context replay. However, given that the average F1 gap from \native is very small, this overhead can be interpreted not simply as inefficiency, but as the cost of preserving long-context QA quality.

Fig.~\ref{fig:budget_f1_ttft_analysis} in Section~\ref{sec:case_storage_reuse} shows that these average results are not dependent on a specific budget, but are also consistent with the trend as the budget increases. In particular, \mot maintains F1 close to \native under large context budgets, supporting that the average performance gap in Fig.~\ref{fig:appendix_avg_f1_ttft} reflects quality preservation across the overall long-context regime.

\paragraph{Prompt Design.}

\begin{bluebox}[HotpotQA-E sample]{width=\linewidth, valign=top, breakable}
\label{text:prompt_hotpotqa_e_sample}

\noindent\textbf{Dataset.}
HotpotQA-E is a long-context multi-hop QA benchmark with multiple Wikipedia-style documents.
The model must find the required evidence in the long multi-document context and answer a short question.

\vspace{0.8em}

\begin{translatedcontextbox}
{\ttfamily\small
Read the following context and complete the task.

\vspace{0.6em}
Context: Title: The Oberoi Group\\
The Oberoi Group is a hotel company with its head office in Delhi. The company owns and operates luxury hotels and resorts under the Oberoi and Trident brands. \(\ldots\)

\vspace{0.4em}
Title: Oberoi Family\\
The Oberoi family is an Indian family that is famous for its involvement in hotels through The Oberoi Group. Members of the family have been associated with the development of luxury hospitality in India. \(\ldots\)

\vspace{0.4em}
Title: Delhi\\
Delhi, officially the National Capital Territory of Delhi, is a city and union territory of India containing New Delhi, the capital of India. \(\ldots\)
}
\end{translatedcontextbox}

\begin{promptbox}
{\ttfamily\small
Question or task: The Oberoi family is part of a hotel company that has a head office in what city?\\
Answer:
}
\end{promptbox}

\begin{completionbox}
{\ttfamily\small
Delhi
}
\end{completionbox}

\end{bluebox}

\section{License}
\label{app:license}

We summarize the licenses and terms of use of the datasets, external method
implementations, and model checkpoints used in this work.

\paragraph{Benchmarks and training data.}
BoolQ is released under Creative Commons Share-Alike 3.0.
PubMedQA's official repository is released under the MIT license, while the
underlying PubMed abstracts may be subject to their original publisher terms.
MMLU-Redux is released under CC-BY-4.0.
SQuAD-v1.1 and HotpotQA are released under CC BY-SA 4.0.
Doc2Dial is released under CC-BY-3.0.
NewsQA is distributed under the official Microsoft Research download terms, and
the underlying CNN articles remain subject to CNN's rights.
OpenWebText uses an Apache-2.0 dataset script, but the underlying web documents
may have heterogeneous upstream terms; we use it only for research training and
do not redistribute raw examples.

\paragraph{External methods.}
For external method baselines, KVComm~\cite{kvcomm} and
Interlat~\cite{interlat} are used from their public implementations, both of
which are released under Apache-2.0. C2C~\cite{c2c}, LSC~\cite{lsc}, and the
HCache-inspired hidden-state translation variant~\cite{hcache} are implemented
by us based on the corresponding papers, and no third-party source code is
redistributed.

\paragraph{Models.}
Qwen2.5 checkpoints used in this work are released under Apache-2.0.
GPT-2 family checkpoints are released under the MIT / OpenAI Modified MIT
license.
DistilGPT2 and Pythia checkpoints are released under Apache-2.0.
DialoGPT-small and \texttt{vicgalle/gpt2-open-instruct-v1} are released under
MIT.
OPT checkpoints are governed by the OPT license agreement, which permits
non-commercial research use and restricts commercial or production use.
For \texttt{MathGPT2}, we follow the license of the specific checkpoint used;
if \texttt{FlameF0X/MathGPT2} is used, it is released under MIT.

\paragraph{Usage.}
All third-party datasets, model checkpoints, and external method implementations
are used only for research, training, and evaluation. We do not redistribute the
raw benchmark data, model weights, or third-party source code.

\section{Limitations and Broader Impacts}
\label{app:limitations_broader_impacts}

This work has several limitations. First, as shown in the shift and correction
analysis in Section~\ref{sec:root_cause_analysis}, MoT reduces the final
last-state shift \(\|\bs_L\|\) mainly by decreasing the translation shift
\(\|\bs_T\|\) and the orthogonal shift component \(\beta_{T:L}\). However, the
correction-deficit factor \(d_{T:L}\) changes only marginally. This suggests
that MoT improves the translated trajectory without directly increasing the
target model's intrinsic correction ability. Developing objectives or
architectures that explicitly reduce correction deficit remains an important
future direction.

Second, our channel mapping relies on depth-ratio alignment as a simple and
effective default strategy. As discussed in Appendix~\ref{app:channel-mapping},
this assumption can become weaker when models differ substantially in instruction
tuning, training data distribution, or representation organization. In such
cases, relative layer depth may not provide a sufficiently reliable mapping
between source and target cache spaces. Future work should therefore investigate
adaptive channel mapping methods that account for stronger model mismatch.

Third, this work does not fully address cross-tokenization cache translation.
When source and target models use different tokenizers, their cache positions may
not correspond directly, making token-level cache translation more difficult.
Supporting cross-tokenization communication, as explored in related latent
communication settings~\cite{interlat}, is an important extension for broader
heterogeneous LLM systems.

The broader impact of MoT is primarily tied to efficient reuse of internal model
states. By reducing redundant prefilling, memory duplication, and repeated cache
storage, MoT can lower computational cost and energy consumption in multi-model
and long-context LLM workflows, potentially contributing to more resource-
efficient and lower-carbon inference. At the same time, making multi-agent and
multi-model systems more scalable may accelerate the deployment of highly
automated agentic workflows. While this can improve productivity, it also raises
questions about oversight, accountability, and the role of human judgment when
increasingly capable agents are used in decision-support settings. These risks
are not specific to cache translation, but improved system efficiency can make
such deployments easier, and therefore should be considered alongside technical
progress.